\documentclass[twoside,11pt]{article}

%

\usepackage{jmlr2e}
\usepackage{subcaption}
\usepackage{url}
\usepackage{epsfig,epstopdf,algorithmic,algorithm,amsfonts,url,textcomp,multirow,moresize}
\usepackage{graphicx}
\usepackage{booktabs}
\usepackage{algorithm}
\usepackage{algorithmic}
\usepackage{amsfonts}
\usepackage{subcaption}
\usepackage{amsmath}\allowdisplaybreaks
\usepackage[framemethod=tikz]{mdframed}
\usepackage[flushleft]{threeparttable}
\usepackage{enumitem}
\urlstyle{same}
\DeclareMathOperator*{\argmin}{argmin}

\usepackage{etoolbox}

\newenvironment{myitemize}
{ \begin{center} \begin{em} }
{ \end{em} \end{center} }





\begin{document}

\title{Privacy-Preserving Asynchronous Federated Learning Algorithms for Multi-Party Vertically Collaborative Learning}

\author{\name Bin Gu \email jsgubin@gmail.com      \\
\addr JD Finance America Corporation \\
\name An Xu \email {an.xu@pitt.edu} \\
   \name Zhouyuan	Huo \email {zhouyuan.huo@pitt.edu} \\
 \addr Department  of  Electrical  \&  Computer Engineering, University of Pittsburgh, USA \\
\name Cheng Deng \email {chdeng@mail.xidian.edu.cn} \\
 \addr School   of   Electronic   Engineer-ing,  Xidian  University,  Xi'an,  China \\
   \name Heng Huang \email {heng@uta.edu} \\
\addr JD Finance America Corporation \\ University of Pittsburgh, USA\\
}

\editor{}
\maketitle

\begin{abstract}
The privacy-preserving federated learning for vertically partitioned data has shown promising results as the solution of the emerging multi-party joint modeling application, in which the data holders (such as government branches, private finance and e-business companies) collaborate throughout the learning process rather than relying on a trusted third party to hold data.
However, existing federated  learning  algorithms  for vertically partitioned data are limited to synchronous computation.  To improve the efficiency when the unbalanced computation/communication resources are common among the parties in the federated learning system, it is essential to develop asynchronous training algorithms for vertically partitioned data while keeping the data privacy.  In this paper, we  propose an asynchronous  federated SGD (AFSGD-VP) algorithm and its SVRG and SAGA variants  on the vertically partitioned data. Moreover, we provide the convergence analyses of AFSGD-VP and its SVRG and SAGA variants under the condition of strong convexity. We also discuss their model privacy, data privacy, computational complexities and communication
costs. To the best of our knowledge,  AFSGD-VP and
its SVRG and SAGA variants are the first asynchronous  federated learning algorithms for vertically partitioned data. Extensive experimental results on  a variety of vertically partitioned datasets not only verify the theoretical results of AFSGD-VP and its SVRG and SAGA variants, but also show that our algorithms have much higher efficiency than the corresponding synchronous algorithms.
\end{abstract}
\begin{keywords}Vertical federated learning, stochastic gradient descent, privacy-preserving, asynchronous distributed  computation \end{keywords}

\section{Introduction}
Federated learning facilitates the collaborative model learning without the sharing of raw data, and increasingly attracts attentions from both tech giants and industries where privacy protection is required. Especially, in the emerging multi-party joint modeling application, the data locate at multiple (two or more) data holders and  each maintains its  own records of different feature sets with common entities, which are called as vertically partitioned  data \citep{yang2019federated}.
While an integrated dataset improves the performance of a trained learning model, organizations cannot share data due to legal restrictions or competition between participants.
For example, a digital finance company, an E-commerce company,  and a bank collect different information of the same person. The digital finance company  has access
to  online consumption, loan and repayment   information. The E-commerce company has access
to the online shopping information. The bank has customer information like average monthly deposit, account balance. If the person submits a loan application  to the digital finance company, it might want to evaluate the  credit risk  of approving this financial loan   by   comprehensively utilizing  the information stored in all the three parties. Such scenarios have been popularly appearing in recent industrial applications and raise the need of efficient federated learning algorithms on the vertically partitioned data.

For the vertically partitioned  data, the direct access to the data in other providers or  sharing of the data are often prohibited due to the legal and commercial issues. For the legal reason, most countries  worldwide have made laws in protection of data security
and privacy. For example, the
European Union made the General Data Protection Regulation (GDPR) \cite{Regulation2018} to protect users' personal privacy and data
security.  The recent data breach by Facebook has caused a wide range of protests \cite{badshah2018facebook}.
For the commercial reason, customer data is usually a valuable
business asset for corporations. For example, the real online shopping information of customers can be used to train a  recommended model which could provide valuable product recommendations to customers.
Thus, both of the causes require  federated learning on the vertically  partitioned data without the disclosure of data.

In the literature, there are many privacy-preserving federated learning algorithms for vertically partitioned data in various applications, for example, cooperative statistical analysis \citep{du2001privacy}, linear regression \citep{gascon2016secure,karr2009privacy,sanil2004privacy,gascon2017privacy}, association rule-mining \cite{vaidya2002privacy}, k-means clustering \citep{vaidya2003privacy}, logistic regression \citep{hardy2017private,nock2018entity}, XGBoost \citep{cheng2019secureboost}, random forest \citep{liu2019federated}, support vector machine \citep{yu2006privacy}. From the optimization standpoint,  \citep{wan2007privacy} proposed privacy-preservation gradient descent algorithm for vertically partitioned data.  \citep{zhang2018feature} proposed a feature-distributed SVRG algorithm (FD-SVRG) for high-dimensional linear classification.
 However, to the best of our knowledge,  existing  federated  learning  algorithms  on the vertically partitioned data are limited to synchronous computation.

 Stochastic gradient descent (SGD) algorithm \citep{bottou2010large} and its   variants \citep{gu2018faster,defazio2014saga,schmidt2017minimizing,fang2018spider,gu2019scalable,huo2018accelerated}   have been dominant to train  large-scale  machine learning problems. Specifically, at each iteration SGD independently samples a sample, and uses the stochastic gradient with respect to the sampled sample to update the solution. The stochasticity makes each iteration
of SGD cheap while it also causes a large variance of stochastic gradients due to  random sampling. To reduce the variance of stochastic gradients, the SGD variants with different  variance reduction techniques (including SVRG \cite{gu2018faster},  SAGA \citep{defazio2014saga}, SAG \citep{schmidt2017minimizing}, SARAH \citep{pham2019proxsarah}, SPIDER \citep{fang2018spider}) were proposed to speed up SGD.  SVRG and SAGA are the most popular ones among them.
 In addition, SGD and its  adaptive variants (\emph{e.g.}, Adagrad, RMSProp and Adam \citep{goodfellow2016deep}) have  shown their successes for  the training of deep neural networks.

However,  it is still vacant for SGD and its  various  variance reduction variants to train vertically partitioned data in parallel and asynchronously while keeping  data and model privacy. To the best of our knowledge, FD-SVRG \cite{zhang2018feature} is the only work of privacy-preservation SGD-like methods for vertically partitioned data. However,   the updating rules in FD-SVRG \cite{zhang2018feature} are executed synchronously. As we know, the asynchronous computation
is much more efficient than the synchronous computation,
because it keeps all computational resources busy all the time (please see Figure \ref{Asysyn}).
Although there have been a lot of  asynchronous SGD-like algorithms    proposed to solve   large-scale learning problems on horizontally partitioned data \citep{zhao2016fast,mania2015perturbed,huo2017asynchronousAAAI,leblond2017asaga,meng2016asynchronous,gu2016asynchronous,kungurtsev2019asynchronous,gu2019asynchronous,gu2018asynchronous,gu2018asynchronous2,huo2018decoupled,huo2018training},  it is still a  challenge for SGD-like methods  to train the vertically partitioned data asynchronously  while keeping  data and model privacy.
  \begin{figure}[ht]
 \center
 \includegraphics[scale=0.6]{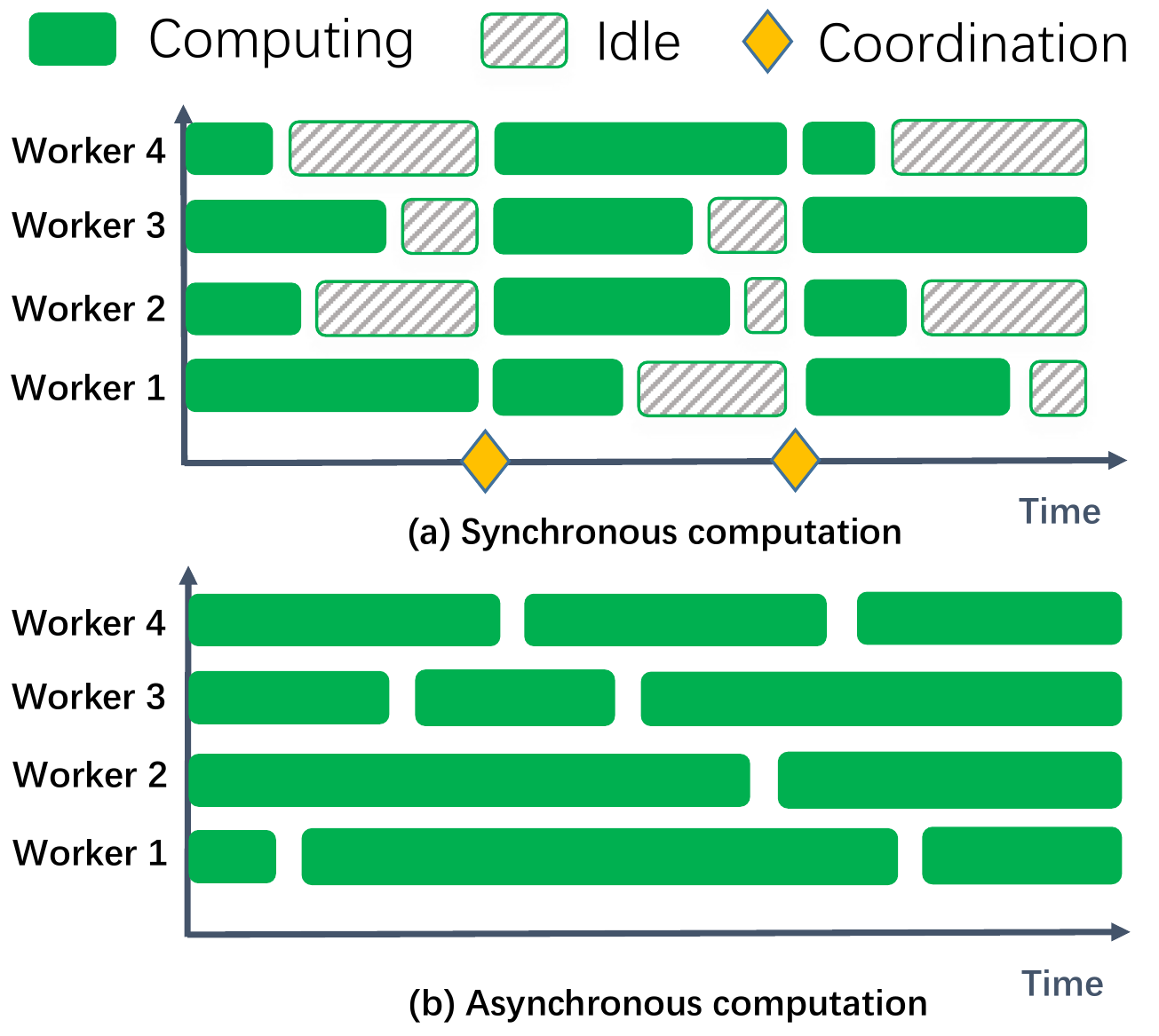}
  \caption{Asynchronous  computation vs. synchronous  computation.}
  \label{Asysyn}
 \end{figure}

To address this challenging problem, in this paper, we  propose an asynchronous  federated  SGD (AFSGD-VP) algorithm and its SVRG and SAGA variants  for vertically partitioned data. More importantly, we provide the convergence rates of AFSGD-VP and its SVRG and SAGA variants under the condition of  strong convexity for the objective function. We also discuss their model privacy, data privacy, computational complexities and communication
costs. To the best of our knowledge,  the proposed algorithms are the first asynchronous  federated learning algorithms for vertically partitioned data. Extensive experimental results on  a variety of vertically partitioned datasets not only verify the theoretical results of AFSGD-VP and its SVRG and SAGA variants, but also show that our algorithms have much higher efficiency than the corresponding synchronous algorithms.
We summarize the  main contributions of this paper as
follows.
\begin{enumerate}[leftmargin=0.2in]
\item We propose asynchronous federated stochastic gradient algorithm (\emph{i.e.}, AFSGD-VP) and its SVRG and SAGA variants for vertically partitioned data. We provide  their convergence rates  under  the
condition of strong convexity.

\item Based on the   \textit{semi-honest} assumption (\textit{i.e.}, Assumption \ref{ass_semi_honest}), we prove  that our AFSG-VP and its SVRG and SAGA variants can prevent the exact and approximate  inference attacks.

\end{enumerate}

%
%
%


\noindent \textbf{Notations.} In order to make  notations easier to follow, we give a summary of  notations in the
following list.
\begin{description}[leftmargin=4.4em,style=nextline]
	\item[$\widehat{w}$] $w$ that inconsistently  read from different workers.
\item[$\widetilde{w}$]  The snapshot of $w$ after a certain number  of iterations.
\item[$q$] The size of workers.
\item[$b^{\ell}$] A random number generated on the $\ell$-th  worker.
\item[$\xi(t,\ell)$]  The local time counter for the global time counter $t$ on the $\ell$-th worker.
\item[$\xi^{-1}(u,\ell)$]  The corresponding global time counter to a local  time counter $u$ on the $\ell$-th worker.
\item[$\psi(t)$]  The corresponding worker to obtain $\widehat{w}_t^T x_i$.
\item[$\psi^{-1}(\ell,K)$] All the elements in $K$ such that $\psi(\psi^{-1}(\ell,K))=\ell$.
\item[$Leaf(\cdot)$]  All leaves  of a tree.
\end{description}

\section{Asynchronous Federated Learning  for Vertically Partitioned Data}\label{sec2}
In this section, we first introduce the problem addressed in this paper, and then give a brief review of SGD, SVRG and SAGA. Next, we give the system structure of our asynchronous federated learning algorithms. Finally, we propose  our AFSGD-VP, AFSVRG-VP and AFSAGA-VP  algorithms.
\subsection{Problem Statement}

In this paper, we consider the model in a linear form of $w^T x$.  Given a training set $\mathcal{S}=\{(x_i,y_i)\}_{i=1}^{l}$, where $x_i \in \mathbb{R}^{d}$ and  $y_i\in \{+1,-1\}$  for  binary classification or $y_i\in \mathbb{R}$   for regression. The loss function w.r.t. the sample $(x_i,y_i) $ and the model weights $w$ can be formulated as $L(w^T x_i,y_i)$. Thus, we consider to optimize the following regularized empirical
risk minimization problem.
   \begin{eqnarray}\label{formulation1}
\min_{w\in \mathbb{R}^d} f(w)  = \frac{1}{l}\sum_{i=1}^{l} \underbrace{L(w^T x_i,y_i)+ g(w)}_{f_i(w)}\,,
\end{eqnarray}
where $g(w)$ is a regularization term, and each $f_i:\mathbb{R}^d \to \mathbb{R}$ is considered as  a smooth, possibly non-convex function  in this paper. Obviously, the empirical  risk minimization problem   is a special case of the problem (\ref{formulation1}). In addition to the empirical risk minimization problem,   problem (\ref{formulation1})  summarizes  an extensive number of important  regularized learning problems, such as, $\ell_2$-regularized logistic regression \cite{conroy2012fast}, ridge regression \cite{shen2013novel} and least squares SVM \cite{suykens1999least}.

As mentioned previously, in a lot of real-world machine learning applications,  the input of training sample $(x,y)$ is partitioned vertically into
$q$ parts, \emph{i.e.}, we have a partition $\{ \mathcal{G}_{1},\cdots,{\mathcal{G}_{q}} \}$ of $d$ features. Thus, we have $x = [x_{\mathcal{G}_1},x_{\mathcal{G}_2},\ldots,x_{\mathcal{G}_q}]$, where  $x_{\mathcal{G}_\ell} \in \mathbb{R}^{d_\ell}$ is
stored on the $\ell$-th worker,  and $\sum_{\ell=1}^q d_\ell=d$. According to whether the label is included in a worker, we divide the workers into two types: one is active worker and the other is passive worker, where the active worker is the data provider who holds the label of a sample, and the passive worker only has the input of a sample. The active worker would be a dominating server in federated learning, while passive workers play the role of clients \cite{cheng2019secureboost}.  We let  $D^\ell$  denote the data stored on the $\ell$-th worker. Note that the  labels $y_i$ are distributed
on active workers. Our goal in this paper can be presented as follows.
\begin{mdframed}
 \begin{myitemize}
\noindent \textbf{Goal:} Make active workers to cooperate   with passive workers to  solve the regularized empirical
risk minimization  problem (\ref{formulation1}) on the vertically partitioned data $\{ D^\ell\}_{\ell=1}^q$   in parallel and asynchronously with  the SGD and  its SVRG and SAGA variants, while keeping the vertically partitioned data private.
 \end{myitemize}
\end{mdframed}

\subsection{Brief Review of SGD, SVRG and SAGA}\label{sec2.0}
 As mentioned before, SGD-like algorithms    have been the popular  algorithms for solving large-scale   machine learning problems. We first give a brief review of the update framework of  SGD-like algorithms which include multiple  variants of variance reduction methods. Specifically, given an unbiased stochastic gradient ${v}$ (\emph{i.e.}, $\mathbb{E} {v} = \nabla f(w)$), the updating rule  of  SGD-like algorithms can be formulated as follows.
  \begin{eqnarray}
  w \leftarrow w - \gamma v
\end{eqnarray}
 where $\gamma$ is the learning rate. In the following, we present the specific forms to the unbiased stochastic gradient ${v}$ w.r.t. SGD, SVRG and SAGA.

\noindent \textbf{SGD:} At each iteration SGD \cite{bottou2010large} independently samples a sample $(x_i,y_i)$, and uses the stochastic gradient $\nabla f_i(w)$
  with respect to the sampled sample $(x_i,y_i)$ to update the solution  as follows.
\begin{eqnarray}
v =\nabla f_i(w)
\end{eqnarray}

\noindent \textbf{SVRG:} For SVRG \cite{xiao2014proximal,gu2018faster}, instead of directly using the stochastic gradient $\nabla f_i(w)$, they use an unbiased stochastic gradient ${v}$  as follows to update the solution.
\begin{eqnarray}
v=\nabla f_i(w)-\nabla f_i(\widetilde{w}) + \nabla f(\widetilde{w})
\end{eqnarray}
where $\widetilde{w}$ denotes snapshot of $w$ after
a certain number  of iterations.

\noindent \textbf{SAGA:} For SAGA \cite{defazio2014saga},  the  unbiased stochastic gradient ${v}$ is formulated as follows.
\begin{eqnarray}
v=\nabla f_i(w)- \alpha_i + \frac{1}{l} \sum_{i=1}^l \alpha_i
\end{eqnarray}
where $\alpha_i$ is  the  latest historical gradient of $\nabla f_i(w)$.
 which can be updated  in an online fashion.

\subsection{System Structure of Our Algorithms}
As mentioned before,  AFSG-VP, AFSVRG-VP and
AFSAGA-VP  are privacy-preserving asynchronous federated learning algorithms on the vertically partitioned data.  Figure \ref{structureFDSKL} presents their  system structure.  Specifically, we  give  detailed descriptions of tree-structured
communication, and  data and model privacy, respectively, as follows.
 \begin{figure}[h]
\center
\hspace*{-0.6cm}
\includegraphics[scale=0.27]{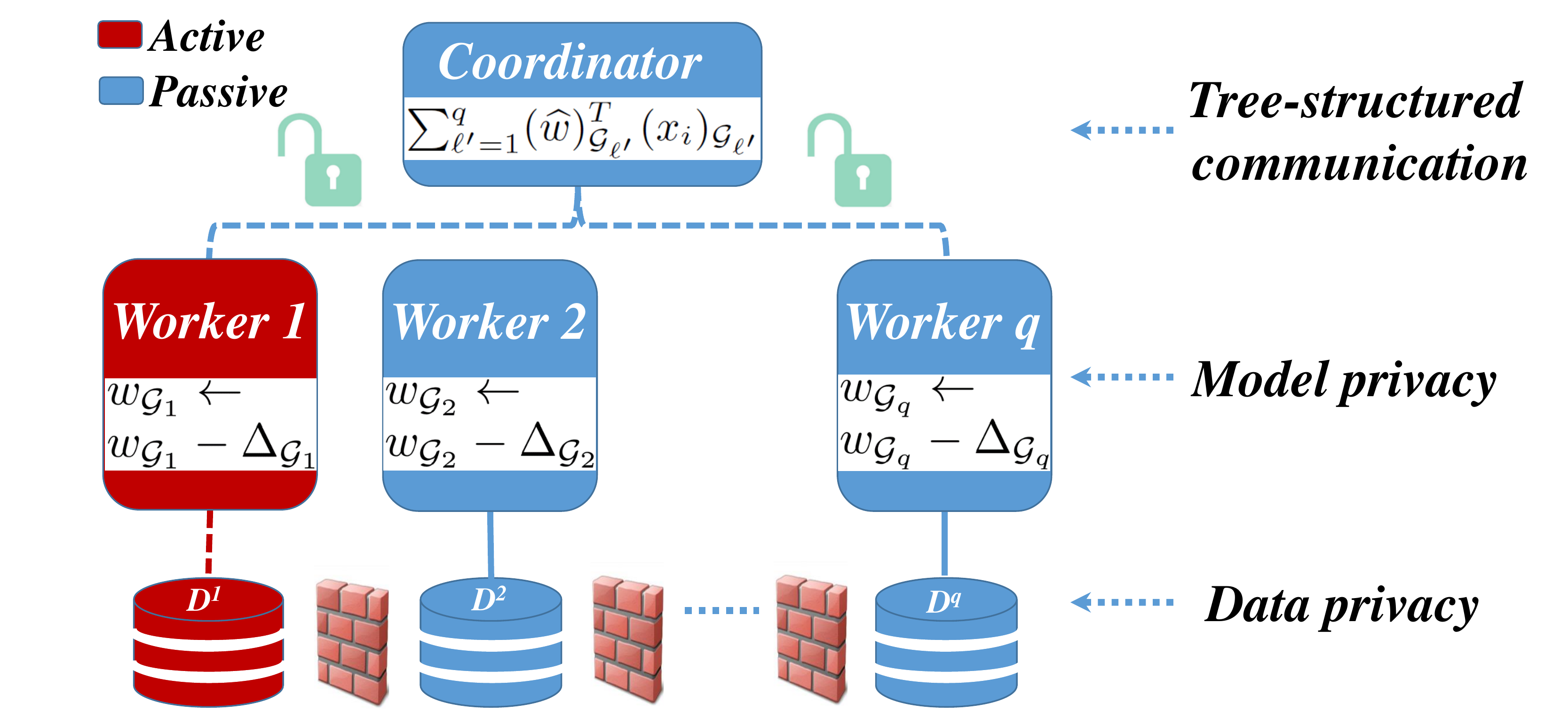}
 \caption{System structure of our  privacy-preserving asynchronous federated learning algorithms.}
 \label{structureFDSKL}
\end{figure}

\subsubsection{Tree-Structured Communication}  To   obtain $w^T x_i$,  we need to accumulate the local results from different workers. Zhang et al. \cite{zhang2018feature} proposed an efficient  tree-structured communication  scheme to
get the global sum which is faster than the simple strategy of  sending the
results from all workers directly to the coordinator for sum. Take 4 workers as an example, we pair the workers
so that while worker 1 adds the result from worker 2,
worker 3 can add the result from worker 4 simultaneously. Finally, the results from  the two pairs of workers are sent to the coordinator and we  obtain the global sum (please see Figure \ref{FigureTSC1}).
 In this paper, we use the tree-structured communication  scheme to obtain $w^T x_i$. Note that, our tree-structured communication  scheme works with the  asynchronous pattern  to obtain $w^T x_i$, that means that we do not align the iteration numbers of  $w_{\mathcal{G}_\ell}$ from different workers to compute $w^T x_i$. It is significantly different from the  synchronous pattern  used in \cite{zhang2018feature} where all $w_{\mathcal{G}_\ell}$ have   one and the same iteration number.

 Based on the tree-structured communication  scheme, we summarize the basic algorithm of computing $\sum_{\ell'=1}^q w_{\mathcal{G}_{\ell'}}^T (x_i)_{\mathcal{G}_{\ell'}}$  on the   $\ell$-th active worker in Algorithm \ref{protocol1}.
 \begin{algorithm}[!ht]
\renewcommand{\algorithmicrequire}{\textbf{Input:}}
\renewcommand{\algorithmicensure}{\textbf{Output:}}
\caption{Basic algorithm of computing $\sum_{\ell'=1}^q w_{\mathcal{G}_{\ell'}}^T (x_i)_{\mathcal{G}_{\ell'}}$  on the   $\ell$-th active worker} \label{protocol1}
\begin{algorithmic}[1]
\REQUIRE $w$, $x_i$
\\
\COMMENT{This loop asks multiple workers running in parallel.}
\FOR{$\ell'=1,\ldots,q$} \label{step1}
\STATE \label{step3} Calculate $ w_{\mathcal{G}_{\ell'}}^T (x_i)_{\mathcal{G}_{\ell'}} $.
\ENDFOR \label{step4}
\STATE \label{step5} Use  tree-structured communication scheme  to compute $\xi =\sum_{\ell'=1}^q  w_{\mathcal{G}_{\ell'}}^T (x_i)_{\mathcal{G}_{\ell'}} $.

\ENSURE $\xi$.
\end{algorithmic}
\end{algorithm}

\begin{figure}[htbp!]
\centering
 \hspace*{-0.9cm}
 	\begin{subfigure}[b]{0.4\textwidth}
		\includegraphics[height=1.3in]{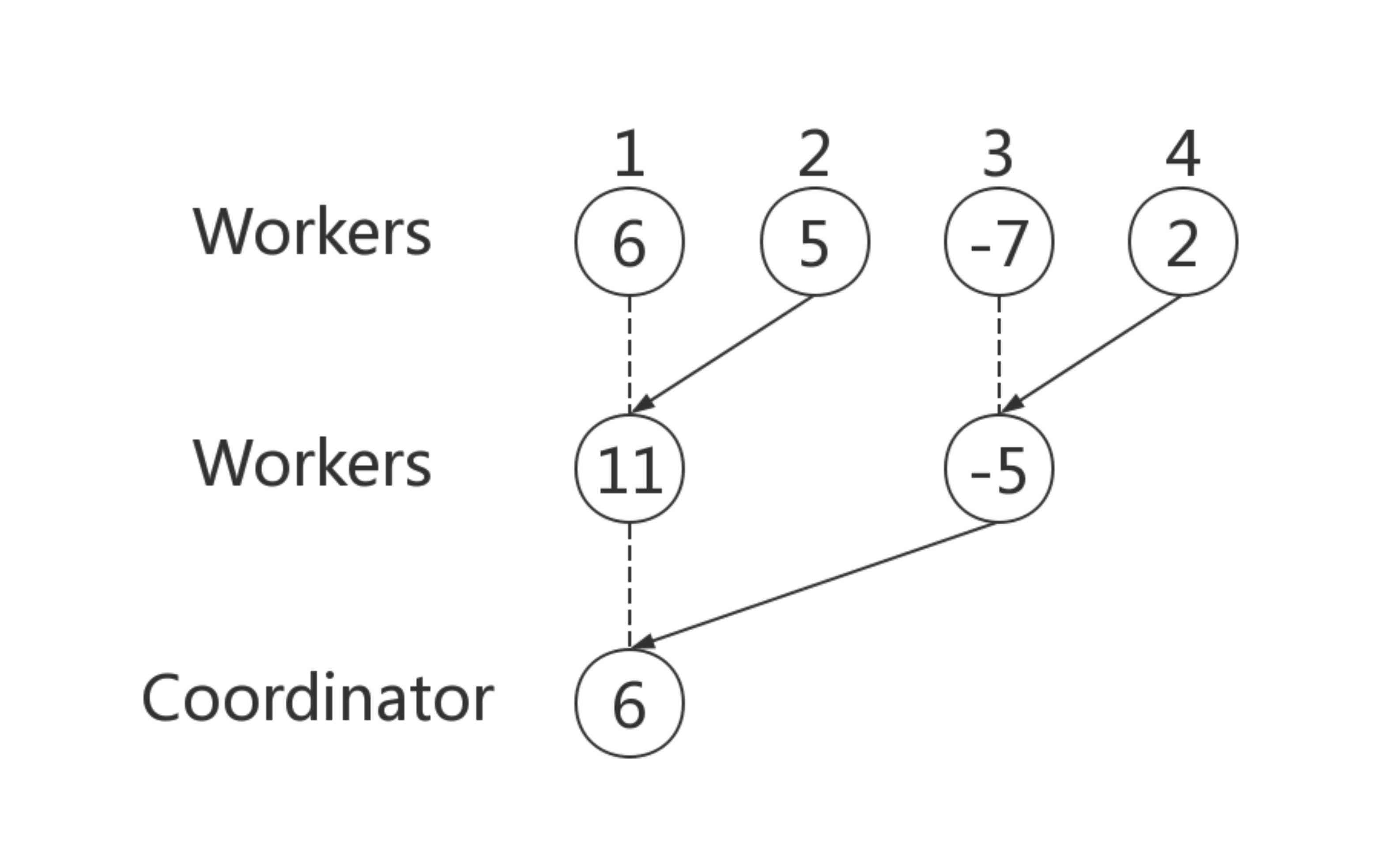}
\caption{Tree structures $T_1$}
\label{FigureTSC1}
	\end{subfigure}
 	\begin{subfigure}[b]{0.4\textwidth}
		\includegraphics[height=1.3in]{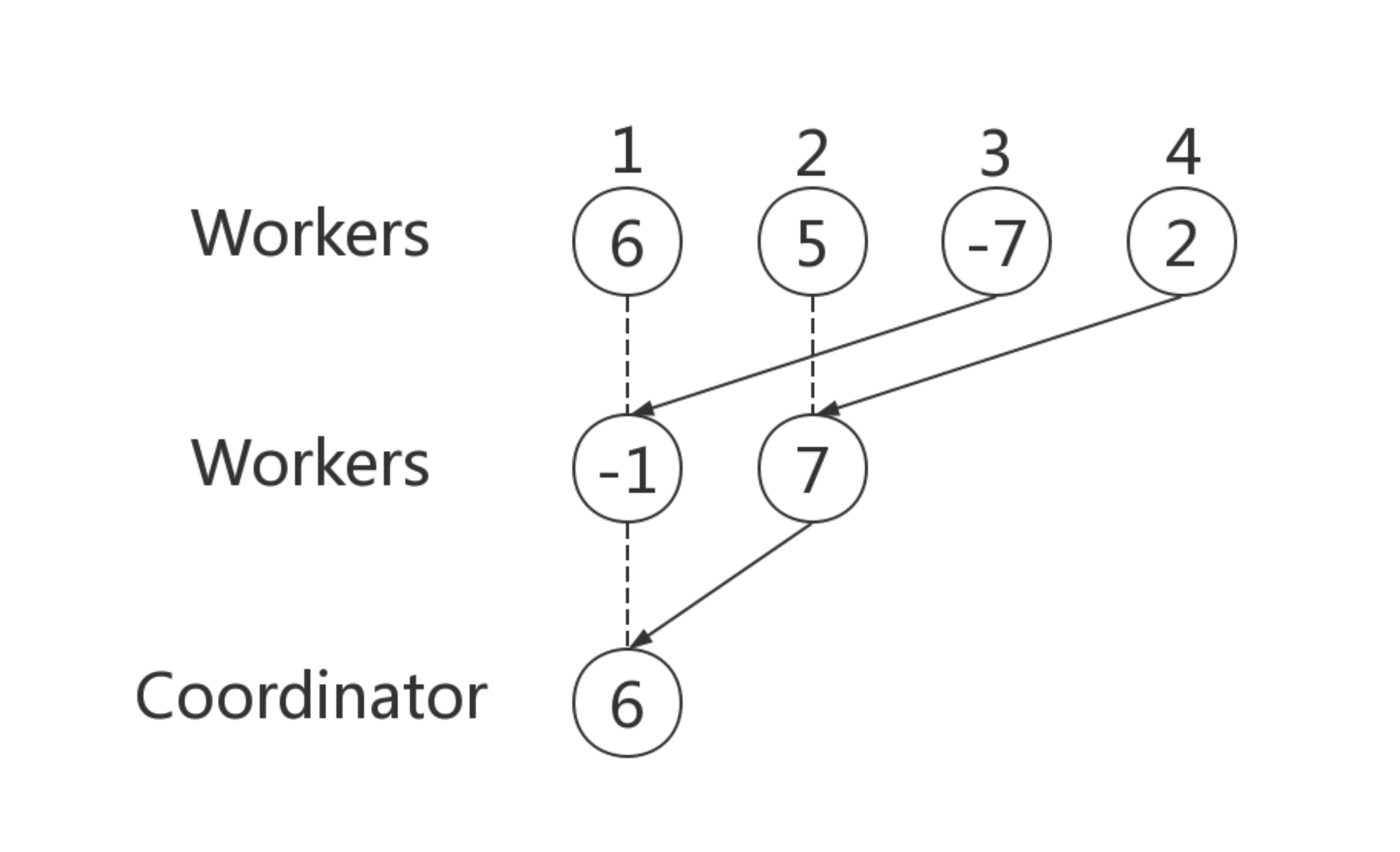}
\caption{Tree structures $T_2$}
\label{FigureTSC2}
	\end{subfigure}
\caption{Illustration of  tree-structured communication with two significantly different tree structures.}
\label{FigureTSC}
\end{figure}


\subsubsection{Data and Model  Privacy}   To keep the vertically partitioned  data and model privacy,  we  save the  data $(x_i)_{\mathcal{G}_\ell}$ and  model weights  $w_{\mathcal{G}_\ell}$  in the $\ell$-th worker separately and  privately. We do not directly  transfer the local data $(x_i)_{\mathcal{G}_\ell}$ and local model weights $w_{\mathcal{G}_\ell}$ to other workers. To obtain $w^T x_i$, we   locally compute $w_{\mathcal{G}_\ell}^T (x_i)_{\mathcal{G}_\ell} $  and  only transfer $w_{\mathcal{G}_\ell}^T (x_i)_{\mathcal{G}_\ell}$  to other workers for computing $w^T x$ as shown in Algorithm \ref{protocol1}. It is  not trivial to infer the the local model coefficients $w_{\mathcal{G}_\ell} $ and $(x_i)_{\mathcal{G}_\ell}$ based on the value of  $w_{\mathcal{G}_\ell}^T (x_i)_{\mathcal{G}_\ell}$ which is discussed in detail in Section \ref{section_sec}. Thus, we achieve the data and model  privacy.


 Although it is not trivial to exactly infer the the local model coefficients $w_{\mathcal{G}_\ell} $ and $(x_i)_{\mathcal{G}_\ell}$ based on the value of  $w_{\mathcal{G}_\ell}^T (x_i)_{\mathcal{G}_\ell}$, it  has the risk of approximate inference attack (please refer to Definition \ref{defi_inf_attack2}). To address this issue, we propose a safer algorithm to compute $\sum_{\ell'=1}^q w_{\mathcal{G}_{\ell'}}^T (x_i)_{\mathcal{G}_{\ell'}}$ in Algorithm \ref{protocol2}. Specifically, we add a random number $b^{\ell'}$ into $w_{\mathcal{G}_{\ell'}}^T (x_i)_{\mathcal{G}_{\ell'}}$, and then use the tree-structured communication scheme on a tree structure $T_1$  to compute $\sum_{\ell'=1}^q  ( w_{\mathcal{G}_{\ell'}}^T (x_i)_{\mathcal{G}_{\ell'}} + b^{\ell'}  )$ which can improve the  data and model security for the operation of transferring the value of $w_{\mathcal{G}_{\ell'}}^T (x_i)_{\mathcal{G}_{\ell'}} + b^{\ell'}$. Finally, we need to recover the value of $\sum_{\ell'=1}^q w_{\mathcal{G}_{\ell'}}^T (x_i)_{\mathcal{G}_{\ell'}}$ from $\sum_{\ell'=1}^q \left ( w_{\mathcal{G}_{\ell'}}^T (x_i)_{\mathcal{G}_{\ell'}} + b^{\ell'} \right )$.   In order to prevent leaking any sum of $b^{\ell'}$ of a subtree of $T_1$, we use a \textit{significantly different tree structure $T_2$} on all workers    (please see Definition \ref{def_total_differ_tree} and Figure \ref{FigureTSC}) to compute $\overline{b} =\sum_{\ell'=1}^q  b^{{\ell'}}$.
\begin{definition}[Two significantly different tree structures]\label{def_total_differ_tree} For two tree structures $T_1$ and $T_2$ on all workers  $\{1, \ldots,q\}$, they are  significantly different if there does not exist a subtree $\widehat{T}_1$  of $T_1$ and a  subtree $\widehat{T}_2$ of $T_2$ whose sizes are larger than 1 and smaller than $T_1$ and $T_1$ respectively, such that $Leaf(\widehat{T}_1)=Leaf(\widehat{T}_2)$. 
\end{definition}

 \begin{algorithm}[!ht]
\renewcommand{\algorithmicrequire}{\textbf{Input:}}
\renewcommand{\algorithmicensure}{\textbf{Output:}}
\caption{Safer algorithm of computing $\sum_{\ell'=1}^q w_{\mathcal{G}_{\ell'}}^T (x_i)_{\mathcal{G}_{\ell'}}$  on the   $\ell$-th active worker} \label{protocol2}
\begin{algorithmic}[1]
\REQUIRE $w$, $x_i$
\\
\COMMENT{This loop asks multiple workers running in parallel.}
\FOR{$\ell'=1,\ldots,q$} \label{step1}
\STATE \label{step2} Generate a random number $b^{\ell'}$.
\STATE \label{step3} Calculate $ w_{\mathcal{G}_{\ell'}}^T (x_i)_{\mathcal{G}_{\ell'}} + b^{\ell'}$.
\ENDFOR \label{step4}
\STATE \label{step5} Use  tree-structured communication scheme based on the tree structure $T_1$ on all workers $\{1, \ldots,q\}$ to compute $\xi =\sum_{\ell'=1}^q \left ( w_{\mathcal{G}_{\ell'}}^T (x_i)_{\mathcal{G}_{\ell'}} + b^{\ell'} \right )$.

\STATE \label{step7} Use  tree-structured communication scheme based on the totally  different tree structure $T_2$ on all workers $\{1, \ldots,q\}$  to compute $\overline{b} =\sum_{\ell'=1}^q  b^{{\ell'}}$.
\ENSURE $\xi-\overline{b}$.
\end{algorithmic}
\end{algorithm}
\subsection{Algorithms}\label{sec2.1}
In this subsection, we propose  our three asynchronous federated stochastic gradient algorithms   (\emph{i.e.},  AFSG-VP, AFSVRG-VP and AFSAGA-VP) on the vertically partitioned data.
\subsubsection{AFSGD-VP}
 AFSGD-VP repeats the following four steps  concurrently for each  worker without any lock.
\begin{enumerate}[leftmargin=0.2in]
\item \textbf{Pick up an index:} AFSGD-VP  picks up an index $i$ randomly from $\{1,\ldots,l\}$ and obtain the local instance $(x_i)_{\mathcal{G}_\ell}$  from the local data $D^\ell$.
\item \textbf{Compute $\widehat{w}^T x_i$:}  AFSGD-VP   uses the tree-structured communication  scheme with  asynchronous pattern (\emph{i.e.}, Algorithm \ref{protocol1} or \ref{protocol2}) to obtain $\widehat{w}^T x_i=\sum_{\ell'=1}^q (\widehat{w})_{\mathcal{G}_{\ell'}}^T (x_i)_{\mathcal{G}_{\ell'}}$, where $\widehat{w}$ denotes $w$ inconsistently  read from different workers and two $(\widehat{w})_{\mathcal{G}_{\ell'}}$ may be in different local iteration stages.  Note that we always have that $(w)_{\mathcal{G}_\ell}=(\widehat{w})_{\mathcal{G}_\ell}$.
\item  \textbf{Compute  stochastic local gradient:}  Based on $\widehat{w}^T x_i$, we can compute the  unbiased stochastic local
gradient as $\widehat{v}^\ell = \nabla_{\mathcal{G}_\ell} f_i (\widehat{w})$.
\item \textbf{Update:} AFSGD-VP updates the  local model
weights $w_{\mathcal{G}_\ell}$ by $w_{\mathcal{G}_\ell} \leftarrow w_{\mathcal{G}_\ell}  - \gamma \cdot  \widehat{v}^\ell$, where ${\gamma}$ is the learning rate.
\end{enumerate}
 We summarize our AFSGD-VP  algorithm in Algorithm \ref{algorithm1}.

\begin{algorithm}[htbp]
\renewcommand{\algorithmicrequire}{\textbf{Input:}}
\renewcommand{\algorithmicensure}{\textbf{Output:}}
\renewcommand{\algorithmicloop}{\textbf{Keep doing in parallel}}
\renewcommand{\algorithmicendloop}{\textbf{End parallel loop}}
\caption{Asynchronous  federated SGD algorithm  (AFSGD-VP) for vertically partitioned data on the  $\ell$-th active worker}
	\begin{algorithmic}[1] 
\REQUIRE Local data $D^\ell$, learning rate $ \gamma$.

\STATE Initialize $w_{\mathcal{G}_\ell} \in \mathbb{R}^{d_\ell}$.
 \LOOP
		\STATE Pick up an index $i$ randomly from $\{1,\ldots,l\}$ and obtain the local instance $(x_i)_{\mathcal{G}_\ell}$  from the local data $D^\ell$.

    \STATE Compute $(w)_{\mathcal{G}_\ell}^T (x_i)_{\mathcal{G}_\ell} $.
   \STATE  Compute $\widehat{w}^T x_i= \sum_{\ell'=1}^q (\widehat{w})_{\mathcal{G}_{\ell'}}^T (x_i)_{\mathcal{G}_{\ell'}}$ based on Algorithm \ref{protocol1} or \ref{protocol2}.
\STATE Compute $\widehat{v}^\ell = \nabla_{\mathcal{G}_\ell} f_i (\widehat{w})$.
\STATE Update $w_{\mathcal{G}_\ell} \leftarrow w_{\mathcal{G}_\ell}  - \gamma \cdot  \widehat{v}^\ell$.

\ENDLOOP
\ENSURE $w_{\mathcal{G}_\ell}$
\end{algorithmic}
\label{algorithm1}
\end{algorithm}

\subsubsection{AFSVRG-VP}
Stochastic gradients in AFSGD-VP have a  large variance due to the  random sampling similar to  SGD algorithm
\cite{bottou2010large}. To handle the large variance, AFSVRG-VP uses the SVRG  technique \cite{gu2018faster} to  reduce the variance of the stochastic gradient, and  propose a faster AFSGD-VP algorithm (\emph{i.e.}, AFSVRG-VP). We summarize our AFSVRG-VP algorithm in  Algorithm \ref{algorithm2}.  Compared to  AFSGD-VP, AFSVRG-VP has the following three differences.
\begin{enumerate}[leftmargin=0.2in]
\item  The first one is that AFSVRG-VP  is to  compute the  full local gradient $\nabla_{\mathcal{G}_\ell} f(w^s)= \frac{1}{l} \sum_{i=1}^l \nabla_{\mathcal{G}_\ell} f_i(w^s)$ in the outer loop which will be used as the snapshot of full gradient, where the superscript $s$ denotes the $s$-th out loop.
\item The second one is that we  compute not only $\widehat{w}^T x_i$  but also  $(w^s)^T x_i$ for each iteration.
\item The third one is that AFSVRG-VP  computes the  unbiased stochastic local gradient as
   $\widehat{v}^\ell =  \nabla_{\mathcal{G}_\ell} f_i (\widehat{w}) - \nabla_{\mathcal{G}_\ell} f_i (w^s)
  +  \nabla_{\mathcal{G}_\ell} f(w^s)  $.
\end{enumerate}
  \begin{algorithm}[htbp]
\renewcommand{\algorithmicrequire}{\textbf{Input:}}
\renewcommand{\algorithmicensure}{\textbf{Output:}}
\renewcommand{\algorithmicloop}{\textbf{Keep doing in parallel}}
\renewcommand{\algorithmicendloop}{\textbf{End parallel loop}}
\caption{Asynchronous   federated SVRG algorithm  (AFSVRG-VP) for vertically partitioned data on the  $\ell$-th active worker}
	\begin{algorithmic}[1] 
\REQUIRE Local data $D^\ell$, learning rate $ \gamma$.

\STATE Initialize $w_{\mathcal{G}_\ell}^0 \in \mathbb{R}^{d_\ell}$.
\FOR{$s=0,1,2,\cdots,S-1$}
\STATE Compute the full local gradient $\nabla_{\mathcal{G}_\ell} f(w^s)= \frac{1}{l} \sum_{i=1}^l \nabla_{\mathcal{G}_\ell} f_i(w^s)$ by using  tree-structured communication scheme.
\STATE ${w}_{\mathcal{G}_\ell} = w^{s}_{\mathcal{G}_\ell} $.
 \LOOP
 \STATE Pick up a local instance $(x_i)_{\mathcal{G}_\ell}$ randomly from the local data $D^\ell$.

    \STATE Compute $(w)_{\mathcal{G}_\ell}^T (x_i)_{\mathcal{G}_\ell} $ and $(w^s)_{\mathcal{G}_\ell}^T (x_i)_{\mathcal{G}_\ell} $.
   \STATE  Compute $\widehat{w}^T x_i= \sum_{\ell'=1}^q (\widehat{w})_{\mathcal{G}_{\ell'}}^T (x_i)_{\mathcal{G}_{\ell'}}$ and $(w^s)^T x_i= \sum_{\ell'=1}^q (w^s)_{\mathcal{G}_{\ell'}}^T (x_i)_{\mathcal{G}_{\ell'}}$ based on Algorithm \ref{protocol1} or \ref{protocol2}.
		
\STATE Compute $\widehat{v}^\ell =  \nabla_{\mathcal{G}_\ell} f_i (\widehat{w}) - \nabla_{\mathcal{G}_\ell} f_i (w^s)
  +  \nabla_{\mathcal{G}_\ell} f(w^s)  $.

\STATE Update $w_{\mathcal{G}_\ell} \leftarrow w_{\mathcal{G}_\ell}  - \gamma \cdot \widehat{v}^\ell $.

\ENDLOOP
\STATE $w^{s+1}_{\mathcal{G}_\ell} =   w_{\mathcal{G}_\ell}$.
\ENDFOR
\ENSURE $w_{\mathcal{G}_\ell}$
\end{algorithmic}
\label{algorithm2}
\end{algorithm}

\subsubsection{AFSAGA-VP}
As mentioned above, the stochastic gradients in SGD have a  large variance due to the  random sampling. To handle the large variance,  AFSAGA-VP uses the SAGA  technique \cite{defazio2014saga} to  reduce the variance of the stochastic gradients. We summarize our AFSAGA-VP algorithm in  Algorithm \ref{algorithm1}. Specifically, we  maintain a table of   latest historical local gradients $\alpha_i^\ell$ which is achieved by the updating rule of $\widehat{\alpha}_i^\ell \leftarrow \nabla_{\mathcal{G}_\ell} f_i (w)$ for each iteration. Based on the table of   latest historical local gradients $\widehat{\alpha}_i^\ell$,  the  unbiased stochastic local gradient in  AFSAGA-VP is computed as $\widehat{v}^\ell = \nabla_{\mathcal{G}_\ell} f_i (\widehat{w}) - \widehat{\alpha}_i^\ell +  \frac{1}{l} \sum_{i=1}^l \widehat{\alpha}_i^\ell$.
\begin{algorithm}[ht]
\renewcommand{\algorithmicrequire}{\textbf{Input:}}
\renewcommand{\algorithmicensure}{\textbf{Output:}}
\renewcommand{\algorithmicloop}{\textbf{Keep doing in parallel}}
\renewcommand{\algorithmicendloop}{\textbf{End parallel loop}}
\caption{Asynchronous  federated SAGA algorithm  (AFSAGA-VP) for vertically partitioned data on the  $\ell$-th active worker}
	\begin{algorithmic}[1] 
\REQUIRE Local data $D^\ell$, learning rate $ \gamma$.

\STATE Initialize $w_{\mathcal{G}_\ell} \in \mathbb{R}^{d_\ell}$.

\STATE Compute the local  gradients ${\alpha}_i^\ell = \nabla_{\mathcal{G}_\ell} f_i({w})$, $\forall i\in \{1,\ldots,n \}$ by using  tree-structured communication scheme, and locally save them.

 \LOOP
\STATE Pick up a local instance $(x_i)_{\mathcal{G}_\ell}$ randomly from the local data $D^\ell$.

    \STATE Compute $(w)_{\mathcal{G}_\ell}^T (x_i)_{\mathcal{G}_\ell} $.
   \STATE  Compute $\widehat{w}^T x_i = \sum_{\ell=1}^q (\widehat{w})_{\mathcal{G}_\ell}^T (x_i)_{\mathcal{G}_\ell}$ based on Algorithm \ref{protocol1} or \ref{protocol2}.
		
\STATE Compute $\widehat{v}^\ell = \nabla_{\mathcal{G}_\ell} f_i (\widehat{w}) - \widehat{\alpha}_i^\ell +  \frac{1}{l} \sum_{i=1}^l \widehat{\alpha}_i^\ell$.

\STATE Update $w_{\mathcal{G}_\ell} \leftarrow w_{\mathcal{G}_\ell}  - \gamma \cdot \widehat{v}^\ell $.
\STATE Update $\widehat{\alpha}_i^\ell \leftarrow \nabla_{\mathcal{G}_\ell} f_i (\widehat{w})$.

\ENDLOOP

\ENSURE $ w_{\mathcal{G}_\ell}$.
\end{algorithmic}
\label{algorithm3}
\end{algorithm}
\section{Theoretical Analyses}\label{sec3}
In this section, we  provide the convergence, security and complexity analyses to AFSG-VP, AFSVRG-VP and
AFSAGA-VP.   All the  proofs can be found in the Appendix.
%
\subsection{Convergence Analyses}
We first make several basic  assumptions,  then  provide  the results of convergence  of   AFSG-VP, AFSVRG-VP and
AFSAGA-VP.

\subsubsection{Preliminaries}
In this part, we  give  the assumptions of  strong convexity (Assumption \ref{assumption4}),  different Lipschitz smoothness (Assumption \ref{assumption3}) and  block-coordinate bounded gradients (Assumption \ref{assumption1}), which are standard for convex analysis \cite{defazio2014saga,xiao2014proximal,zhao2016fast,siamjo-BeckT13,LiZALH17,LiZALH16}.

\begin{assumption}[Strong convexity]\label{assumption4}
The differentiable  function $f_i$ ($\forall i \in \{1,\cdots,l\}$ in the problem  (\ref{formulation1}) is strongly convex with parameter $\mu > 0$, which means that     $\forall w$ and $\forall w'$,
we have
\begin{eqnarray} \label{Strong_convex}
 f_i(w)
\geq   f_i(w') + \langle \nabla f_i(w') , w- w' \rangle + \frac{\mu}{2}  \left \| w - w' \right \|^2
\end{eqnarray}
\end{assumption}

%
%
%


\begin{assumption}[Lipschitz smoothness]\label{assumption3}
The   function $f_i$ ($\forall i \in \{1,\cdots,l\}$ in the problem  (\ref{formulation1}) is Lipschitz smooth with   constant $L$, which means that,  $\forall w$ and $\forall w'$, we have:
   \begin{eqnarray}\label{equdef1}
\| \nabla f_i(w) - \nabla f_i(w') \|  \leq L \|w - w' \|\,.
\end{eqnarray}
The function $f_i$ ($\forall i \in \{1,\cdots,l\}$ in the problem  (\ref{formulation1}) is block-coordinate Lipschitz smooth w.r.t. the $\ell$-th block $\mathcal{G}_{\ell}$ with  constant  $L_{\ell} $, such that, $\forall w$, and $\forall \ell \in \{ 1,\cdots,q\}$,  we have:
   \begin{eqnarray}\label{equdef2}
\| \nabla_{\mathcal{G}_\ell} f_i(w+ \textbf{U}_\ell \Delta_\ell) - \nabla_{\mathcal{G}_\ell} f_i(w) \|  \leq L_{\ell} \left \| \Delta_\ell \right \|\,.
\end{eqnarray}
where $\Delta_\ell \in \mathbb{R}^{d_\ell}$, $\textbf{U}_{\ell} \in \mathbb{R}^{d\times d_\ell}$ and $[\textbf{U}_1,\textbf{U}_2,\ldots,\textbf{U}_q]=\textbf{I}_d$.
\end{assumption}
According to the definition of block-coordinate Lipschitz smooth constant $L_{\ell} $ in Assumption \ref{assumption3},  we define $L_{\max}=\max_{\ell=1,\ldots,q} L_\ell$. Furthermore, we have $L \leq q L_{\max}$ which is proved in Lemma 2 of \cite{nesterov2012efficiency}.

\begin{assumption}[Block-coordinate bounded gradients] 
\label{assumption1}
For smooth function $f_i(x)$ ($\forall i \in \{1,\ldots,l\}$) in (\ref{formulation1}),   the  block-coordinate gradient $\nabla_{\mathcal{G}_\ell} f_i ({w})$ is called  bounded if there exists  a parameter $G$ such that $
\| \nabla_{\mathcal{G}_\ell} f_i ({w})\|^2 \leq G $, $\forall i \in \{1,\ldots,l\}$ and $\forall \ell \in \{1,\ldots,q\}$.
\end{assumption}
\subsubsection{Difficulties}
In this part, we discuss the difficulties of globally labeling the iterates, global updating rules and the relationship between $w_t$ and $\widehat{w}_t$.


\noindent \textbf{Globally labeling the iterates:} \ As shown in Algorithms \ref{algorithm1} and \ref{algorithm2}, we do not globally label the iterates from different workers. Although it is fine for the implementation, how we choose to define the iteration counter $t$ to label an iterate
$w_t$ matters in the analysis.
More specifically, the global time
counter  plays a fundamental role in the  convergence rate analyses of   AFSG-VP, AFSVRG-VP and
AFSAGA-VP.
To address this issue, we propose the strategy of ``after communication'' labeling \cite{leblond2017asaga}, in which we update our iterate counter  as one worker finishes computing $\widehat{w}^T x_i$.
This means that $\widehat{w}_{t}$ (or $\widehat{w}_{t}^s$) is the $(t+1)$-th fully completed the computation of $\widehat{w}^T x_i$. The  strategy of ``after communication'' labeling guarantees both that the $i_t$ are uniformly distributed and that $i_t$ and $\widehat{w}_t$ are
independent. 

We define  a minimum set of successive iterations of fully visiting all coordinates from the time counter $t$ as $K(t)$ in Definition \ref{minimum_set}.
\begin{definition}[Set  $K(t)$]\label{minimum_set}
Let $\overline{K}(t)= \{ \{t,t+1,\ldots,t+ \sigma \}:  \psi(\{t,t+1,\ldots,t+ \sigma \})=\{1,\ldots,q\} \}$. The minimum set of successive iterations of fully visiting all coordinates from the time counter $t$ is defined as $K(t)=\argmin_{K'(t) \in \overline{K}(t)} |K'(t)|$.
\end{definition}
Let $\psi^{-1}(\ell,K)$ denote  all the elements in $K$ such that $\psi(\psi^{-1}(\ell,K))=\ell$. We assume that there exists an upper bound $\eta_1$ to the size of  $\psi^{-1}(\ell,K(t))$ (Assumption \ref{assMax}).
\begin{assumption}[Bounded size of $\psi^{-1}(\ell,K(t))$]\label{assMax}
 $\forall t$, and  $\forall \ell \in \{1,\ldots,q\}$, the sizes of all $\psi^{-1}(\ell,K(t))$ are upper bounded by $\eta_1$, \emph{i.e.}, $|\psi^{-1}(\ell,K(t))| \leq \eta_1$.
\end{assumption}

Based on the definition of $K(t)$, we define the  epoch number of fully visiting all coordinates for the global $t$-th iteration as  $\upsilon(t)$, and the start start time  counter  in one epoch as $\varphi(t)$  in Definition \ref{epochNumber}. Our convergence rate analyses are build on the epoch number $\upsilon(t)$.
\begin{definition}[Epoch number $\upsilon(t)$ and start time  counter $\varphi(t)$]\label{epochNumber}
Let $P(t)$ is a partition of  $\{0,1,\ldots,t-\sigma' \}$,  where $\sigma' \geq 0$. For  any $\kappa \in P(t)$ we have that, there exists $t'\leq t$ such that $K(t')=\kappa$, and there exists $\kappa_1 \in P(t)$ such that $K(0)=\kappa_1$. The epoch number $\upsilon(t)$ is defined as the maximum  cardinality  of $P(t)$. Given a  global  time counter $u\leq t$, if there exists $\kappa \in P(t)$ such that $u\in \kappa$, we define the start time counter $\varphi(t)$ as the minimum element of $\kappa$, otherwise $\varphi(t)=t-\sigma'+1$.
\end{definition}

\noindent \textbf{Global updating rule:} \  The updating rules (such as $w_{\mathcal{G}_\ell} \leftarrow w_{\mathcal{G}_\ell}  - \gamma \cdot \widehat{v}^\ell $) in Algorithms \ref{algorithm1}, \ref{algorithm2} and \ref{algorithm3} are  updating rules locally working on a certain worker. To provide the convergence rate analyses of AFSG-VP, AFSVRG-VP and
AFSAGA-VP, we need provide the global updating rules of AFSG-VP, AFSVRG-VP and
AFSAGA-VP. Due to the commutativity of the add operations used in $w_{\mathcal{G}_\ell} \leftarrow w_{\mathcal{G}_\ell}  - \gamma \cdot \widehat{v}^\ell $,  the order in which these
updates are finished in the corresponding worker is irrelevant. Hence,  we provide the  global updating rules of AFSG-VP, AFSVRG-VP and
AFSAGA-VP as follows.
\begin{eqnarray}\label{EqGlobalUpdate}
{w}_{t+1} ={w}_{t} - \gamma  \textbf{U}_{\psi(t)} \widehat{v}^{\psi(t)}_t
\end{eqnarray}
Note that the global updating rule (\ref{EqGlobalUpdate}) which defines the relation of two  adjacent iterates,  does not conflict with the rule of globally labeling the iterates  due to the  commutativity of the add operations.

\noindent \textbf{Relationship between $w_t$ and $\widehat{w}_t$:} \
As mentioned before,  AFSG-VP, AFSVRG-VP and
AFSAGA-VP use the tree-structured communication  scheme with  asynchronous pattern to obtain $\widehat{w}^T x_i=\sum_{\ell'=1}^q (\widehat{w})_{\mathcal{G}_{\ell'}}^T (x_i)_{\mathcal{G}_{\ell'}}$, where $\widehat{w}$ denotes $w$ inconsistently  read from different workers. Thus, the vector $(\widehat{w}_t)_{\mathcal{G}_{\ell'}}$ for $\ell' \neq \ell$  may be inconsistent to the vector $({w}_t)_{\mathcal{G}_{\ell'}}$, which means that some blocks of $\widehat{w}_t$ are same with the ones in ${w}_t$ (e.g., $(w)_{\mathcal{G}_\ell}=(\widehat{w})_{\mathcal{G}_\ell}$), but others are different to the ones in  ${w}_t$. To address the challenge,   we assume an upper bound to the delay of updating. Specifically, we  define a set $D(t)$ of   iterations, such that:
\begin{eqnarray} \label{pi_test1}
\widehat{w}_t-{w}_t = \gamma \sum_{u \in D(t)}  \textbf{U}_{\psi(u)} \widehat{v}^{\psi(u)}_u\,,
\end{eqnarray}
where $\forall  u\in D(t)  $, we have $u<t$. It is reasonable to assume that there exists an upper bound $\tau$ such that $\tau \geq t - \min\{t' | t' \in D(t)\}$ (\emph{i.e.}, Assumption \ref{ass4}).
\begin{assumption}[Bounded  overlap]\label{ass4}
There exists an upper   bound $\tau$ such that $\tau \geq t - \min\{u | u \in D(t)\}$  for all  iterations $t$ in AFSG-VP, AFSVRG-VP and
AFSAGA-VP.
 \end{assumption}
In addition, we assume that there exist an upper bound $\eta_2$ to the size of  $\psi^{-1}(\ell,D(t))$ (Assumption \ref{ass5}).
\begin{assumption}[Bounded  size of $\psi^{-1}(\ell,D(t))$]\label{ass5}
 $\forall t$, and  $\forall \ell \in \{1,\ldots,q\}$, the sizes of all $\psi(\psi^{-1}(\ell,D(t)))$ are upper bounded by $\eta_2$, \emph{i.e.}, $|\psi(\psi^{-1}(\ell,K))| \leq \eta_2$.
 \end{assumption}

\subsubsection{AFSGD-VP}
We provide the convergence result of AFSGD-VP in Theorem \ref{theorem1}.
\begin{theorem} \label{theorem1}
Under Assumptions \ref{assumption4}-\ref{ass5}, to achieve the accuracy $\epsilon$ of (\ref{formulation1}) for AFSGD-VP, \emph{i.e.}, $\mathbb{E} f (w_{t}) -f(w^*) \leq \epsilon$, we set  \begin{eqnarray}\gamma =  \frac{-   L_{\max}   + \sqrt{  L_{\max}^2  + \frac{{2 \mu \epsilon  (L^2 q \eta_1^2  + \eta_2  L^2 \tau )}}{G\eta_1   q}} }{2 L^2 (q \eta_1^2  + \eta_2   \tau )}\end{eqnarray}
 and the epoch  number $\upsilon(t)$ should satisfy the following condition.
\begin{eqnarray}\label{equ_theorem2_o.1}
 \upsilon(t) \geq \frac{2}{ \mu } \frac{2 L^2 (q \eta_1^2  + \eta_2   \tau )}{-   L_{\max}   + \sqrt{  L_{\max}^2  + \frac{{2 \mu \epsilon  (L^2 q \eta_1^2  + \eta_2  L^2 \tau )}}{G\eta_1   q}} } \cdot
\log\left ( \frac{2 \left (  f(w_0)-f(w^*) \right )}{\epsilon} \right )
\end{eqnarray}
\end{theorem}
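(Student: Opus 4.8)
The plan is to run the standard Lyapunov argument for strongly convex asynchronous SGD, but anchored to the epoch structure $K(t)$ and $\upsilon(t)$ of Definitions \ref{minimum_set} and \ref{epochNumber}, since each global step (\ref{EqGlobalUpdate}) touches only a single block $\mathcal{G}_{\psi(t)}$ and so function-value descent only emerges after all coordinates have been visited. First I would expand the one-step distance using the global update rule (\ref{EqGlobalUpdate}),
\[ \| w_{t+1} - w^* \|^2 = \| w_t - w^* \|^2 - 2\gamma \langle \widehat{v}_t^{\psi(t)}, (w_t - w^*)_{\mathcal{G}_{\psi(t)}} \rangle + \gamma^2 \| \widehat{v}_t^{\psi(t)} \|^2 , \]
and take the conditional expectation over the random index $i_t$. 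The ``after communication'' labeling guarantees $i_t$ and $\widehat{w}_t$ are independent, so $\mathbb{E}_{i_t} \widehat{v}_t^{\psi(t)} = \nabla_{\mathcal{G}_{\psi(t)}} f(\widehat{w}_t)$, and Assumption \ref{assumption1} bounds the last term by $\gamma^2 G$.

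The crux is converting the block inner product at the inconsistent read $\widehat{w}_t$ into a descent quantity at $w_t$. I would split $\nabla_{\mathcal{G}_{\psi(t)}} f(\widehat{w}_t) = \nabla_{\mathcal{G}_{\psi(t)}} f(w_t) + \left( \nabla_{\mathcal{G}_{\psi(t)}} f(\widehat{w}_t) - \nabla_{\mathcal{G}_{\psi(t)}} f(w_t) \right)$ and control the perturbation via block Lipschitz smoothness (Assumption \ref{assumption3}) together with the delay representation (\ref{pi_test1}), which expresses $\widehat{w}_t - w_t$ as a $\gamma$-weighted sum of past block updates whose magnitudes are bounded through $G$. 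Summing over a minimal epoch $K(t)$ reassembles the block gradients into a full gradient $\nabla f$ at a reference iterate, and the counting Assumptions \ref{assMax} and \ref{ass5} (bounds $\eta_1$, $\eta_2$) together with the overlap bound $\tau$ (Assumption \ref{ass4}) govern how many cross terms enter the accumulated error; this is where the composite factor $q\eta_1^2 + \eta_2\tau$ originates.

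Applying strong convexity (Assumption \ref{assumption4}) in the form $\langle \nabla f(w), w - w^* \rangle \geq f(w) - f(w^*) + \frac{\mu}{2}\| w - w^* \|^2$ converts the aggregated inner product into a per-epoch contraction of the expected suboptimality,
\[ \mathbb{E}\left[ f(w_{\mathrm{end}}) - f(w^*) \right] \leq \left( 1 - \tfrac{\mu\gamma}{2} \right) \mathbb{E}\left[ f(w_{\mathrm{start}}) - f(w^*) \right] + C\gamma^2 , \]
where $C$ collects the variance constant $G$ and the asynchrony constants. Telescoping over the $\upsilon(t)$ epochs of Definition \ref{epochNumber} yields $\mathbb{E}[f(w_t) - f(w^*)] \leq (1 - \frac{\mu\gamma}{2})^{\upsilon(t)}(f(w_0) - f(w^*)) + \frac{2C\gamma}{\mu}$. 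Finally I would choose $\gamma$ as the positive root of the quadratic that sets the steady-state term $\frac{2C\gamma}{\mu}$ equal to $\epsilon/2$ — which reproduces the stated $\gamma$ — and then force $(1 - \frac{\mu\gamma}{2})^{\upsilon(t)}(f(w_0) - f(w^*)) \leq \epsilon/2$, using $\log(1 - \frac{\mu\gamma}{2}) \leq -\frac{\mu\gamma}{2}$ to obtain the stated lower bound on $\upsilon(t)$, in which $\frac{1}{\gamma}$ appears as the fraction multiplying the logarithm.

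I expect the main obstacle to be the second step: bounding the accumulated asynchrony error over an epoch without the constants exploding. The difficulty is that the delayed difference $\widehat{w}_t - w_t$ couples block-gradient errors across different coordinates and iterations, so I must track, via $\eta_1$, $\eta_2$ and $\tau$, exactly which past updates can enter each inner-product error term and bound their cumulative contribution by a clean multiple of $\gamma^2 G$. Matching this to the precise coefficient $q\eta_1^2 + \eta_2\tau$ appearing in $\gamma$ is the delicate bookkeeping that ultimately drives the whole result.
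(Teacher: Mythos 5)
Your overall scaffolding (per-epoch analysis over $K(t)$, telescoping over $\upsilon(t)$ epochs, then solving a quadratic in $\gamma$ for the steady-state term and a logarithmic inequality for $\upsilon(t)$) matches the paper's, but the analytic engine you chose does not deliver the recursion you claim, and it cannot reproduce the stated constants. You start from the distance expansion of $\|w_{t+1}-w^*\|^2$ and invoke strong convexity in the inner-product form $\langle \nabla f(w), w-w^*\rangle \geq f(w)-f(w^*)+\frac{\mu}{2}\|w-w^*\|^2$. What this yields after summing over an epoch is a contraction of the squared distance $\mathbb{E}\|w-w^*\|^2$ (plus some negative function-value terms), \emph{not} the per-epoch contraction $\mathbb{E}[f(w_{\mathrm{end}})-f(w^*)] \leq (1-\frac{\mu\gamma}{2})\mathbb{E}[f(w_{\mathrm{start}})-f(w^*)]+C\gamma^2$ that you assert. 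Converting a distance contraction into a function-value contraction forces you through smoothness ($f(w)-f(w^*) \leq \frac{L}{2}\|w-w^*\|^2$ on one side, $\frac{\mu}{2}\|w-w^*\|^2 \leq f(w)-f(w^*)$ on the other), which injects a condition-number factor $L/\mu$ in front of the geometric term; this changes both the $\upsilon(t)$ bound (the argument of the logarithm becomes $\frac{2L(f(w_0)-f(w^*))}{\mu\epsilon}$) and the quadratic defining $\gamma$. A concrete symptom: the theorem's $\gamma$ contains $L_{\max}$, which arises because the paper applies the \emph{block} descent lemma $f(w_{u+1}) \leq f(w_u)+\langle \nabla f(w_u), w_{u+1}-w_u\rangle + \frac{L_{\psi(u)}}{2}\|w_{u+1}-w_u\|^2$, so $L_{\max}$ multiplies $\gamma^2\|\widehat{v}^{\psi(u)}_u\|^2$. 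In your distance expansion the coefficient of $\gamma^2\|\widehat{v}^{\psi(t)}_t\|^2$ is $1$, so no $L_{\max}$ can enter your quadratic and the stated $\gamma$ cannot come out of your computation.

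There is a second, subtler problem with your route: in the distance recursion the asynchrony error enters through the cross term $\langle \nabla_{\mathcal{G}_{\psi(u)}} f(\widehat{w}_u)-\nabla_{\mathcal{G}_{\psi(u)}} f(w_u),\,(w_u-w^*)_{\mathcal{G}_{\psi(u)}}\rangle$, which is linear in $\|w_u-w^*\|$ --- a quantity that no assumption bounds. You would have to absorb it into the $\mu$-contraction by Young's inequality, again degrading the constants; your bookkeeping via $\eta_1$, $\eta_2$, $\tau$ alone cannot reduce it to "a clean multiple of $\gamma^2 G$." The paper avoids both issues by working entirely in function values: the block descent lemma per step (where the delayed-gradient error appears \emph{quadratically} and is therefore bounded directly by Assumption \ref{assumption1}); then Lemma \ref{AsySPSAGA_lemma3} to lower-bound $\sum_{u\in K(t)}\|\nabla_{\mathcal{G}_{\psi(u)}} f(w_u)\|^2$ by $\frac{1}{2}\sum_{u\in K(t)}\|\nabla_{\mathcal{G}_{\psi(u)}} f(w_t)\|^2$ minus $G$-boundable error; and finally strong convexity used only in its gradient-dominance (PL) form $\|\nabla f(w_t)\|^2 \geq 2\mu(f(w_t)-f(w^*))$, producing $\mathbb{E} f(w_{t+|K(t)|})-f(w^*) \leq (1-\frac{\gamma\mu}{2})(f(w_t)-f(w^*))+C$ with $C=\frac{\eta_1\gamma^2 q(\gamma L^2 q\eta_1^2+\eta_2\gamma L^2\tau+L_{\max})G}{2}$. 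If you rebuild your argument on the descent lemma and the PL inequality instead of the distance recursion and the inner-product form, your epoch and delay bookkeeping goes through and yields exactly the stated $\gamma$ and $\upsilon(t)$.
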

\begin{remark}
Theorem \ref{theorem2} shows that, the convergence rate of AFSGD-VP is $\mathcal{O}(  \frac{1}{\sqrt{\epsilon}}\log\left ( \frac{1}{\epsilon}  \right ) )$ to reach the  accuracy $\epsilon$. The theorem shows that if we try to obtain a more accurate solution
with a smaller stepsize,  the convergence rate
slows down.
\end{remark}
\subsubsection{AFSVRG-VP} We provide the convergence result of AFSVRG-VP in Theorem \ref{theorem2}.
\begin{theorem} \label{theorem2} Under Assumptions \ref{assumption4}-\ref{ass5}, to achieve the accuracy $\epsilon$ of (\ref{formulation1}) for AFSVRG-VP, \emph{i.e.}, $\mathbb{E} f (w_{t}) -f(w^*) \leq \epsilon$, let $C={\left (\eta_1 \gamma L^2 q\eta_1 + L_{\max}  \right )\frac{\gamma^2}{2}}$ and $\rho = \frac{\gamma \mu}{2} -  \frac{16 L^2 \eta_1 q C}{\mu} $, we  choose $\gamma$ such that
\begin{align}
\label{eq8} \rho <& 0
\\ \frac{8 L^2 \eta_1 q C}{\rho \mu} \leq& 0.5
\\  \label{eq10}  \gamma^3 \left (  \left ( \frac{ 1  }{2} +     \frac{2C}{\gamma} \right )  \eta_2  \tau    + 4 \frac{C}{\gamma}     \eta_1^2 q  \right )   \frac{\eta_1 q  L^2 G}{\rho} \leq& \frac{\epsilon}{8}
\end{align}
the inner  epoch number  should satisfy $\upsilon(t) \geq \frac{\log 0.25}{\log (1 - \rho)}$, and the outer loop number  should satisfy $S \geq \frac{\log \frac{2 ( f (w_{0}) -f(w^*) )}{\epsilon }}{\log \frac{4}{3}} $.
\end{theorem}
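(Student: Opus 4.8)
The plan is to prove a per-iteration function-value descent, contract it across one minimal full-visiting epoch, telescope across the inner epochs of one outer loop, and finally telescope across the $S$ outer loops. First I would apply the block-coordinate Lipschitz smoothness of Assumption~\ref{assumption3} to the global update rule (\ref{EqGlobalUpdate}). Since iteration $t$ modifies only the block $\mathcal{G}_{\psi(t)}$, this yields
\begin{equation*}
f(w_{t+1}) \leq f(w_t) - \gamma \langle \nabla_{\mathcal{G}_{\psi(t)}} f(w_t), \widehat{v}^{\psi(t)}_t \rangle + \frac{L_{\max}\gamma^2}{2}\,\|\widehat{v}^{\psi(t)}_t\|^2 .
\end{equation*}
Taking the conditional expectation over the sample index $i_t$ and using that the ``after communication'' labeling makes $i_t$ independent of $\widehat{w}_t$, the SVRG gradient is unbiased with $\mathbb{E}_{i_t}[\widehat{v}^{\psi(t)}_t] = \nabla_{\mathcal{G}_{\psi(t)}} f(\widehat{w}_t)$, so the inner-product term becomes $\langle \nabla_{\mathcal{G}_{\psi(t)}} f(w_t), \nabla_{\mathcal{G}_{\psi(t)}} f(\widehat{w}_t)\rangle$.

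Next I would treat the two sources of error separately. For the asynchrony, I split the inner product into $\|\nabla_{\mathcal{G}_{\psi(t)}} f(w_t)\|^2$ plus a cross term bounded through $\|\nabla_{\mathcal{G}_{\psi(t)}} f(\widehat{w}_t) - \nabla_{\mathcal{G}_{\psi(t)}} f(w_t)\| \leq L\,\|\widehat{w}_t - w_t\|$. Using (\ref{pi_test1}) I bound $\|\widehat{w}_t - w_t\| \leq \gamma \sum_{u \in D(t)} \|\widehat{v}^{\psi(u)}_u\|$, where each $\|\widehat{v}^{\psi(u)}_u\|$ is controlled by $\sqrt{G}$ from Assumption~\ref{assumption1}, while Assumptions~\ref{ass4} and~\ref{ass5} cap the delay horizon and the per-block multiplicity of $D(t)$ by $\tau$ and $\eta_2$. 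For the variance, I bound $\mathbb{E}_{i_t}\|\widehat{v}^{\psi(t)}_t\|^2$ by co-coercivity (Assumptions~\ref{assumption4} and~\ref{assumption3}) in terms of the gaps $f(\widehat{w}_t) - f(w^*)$ and the snapshot gap $f(w^s) - f(w^*)$, then transfer $f(\widehat{w}_t)$ back to $f(w_t)$ using the same delay estimate. Collecting terms produces a single-step inequality whose negative part carries $\|\nabla_{\mathcal{G}_{\psi(t)}} f(w_t)\|^2$, whose snapshot part is proportional to $f(w^s) - f(w^*)$, and whose residual is the pure delay bias carrying the factor $\gamma^3 \eta_1 q L^2 G$; the constant $C$ defined in the statement is exactly the aggregate of the smoothness and delay coefficients that appears here.

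I would then sum the single-step inequalities over one minimal full-visiting set $K(t)$, so that every block is touched at least once and at most $\eta_1$ times; Assumption~\ref{assumption4} in its Polyak--{\L}ojasiewicz form converts the accumulated $\sum \|\nabla_{\mathcal{G}_{\psi(t)}} f(w_t)\|^2$ into a full function-value gap, which (with the $\eta_1$ and $q$ multiplicities folded into $C$ and $\rho$) yields a per-epoch contraction $(1-\rho)$, a variance coefficient equal to $\tfrac{8 L^2 \eta_1 q C}{\rho\mu}$ on $f(w^s) - f(w^*)$, and an additive bias. Iterating over the $\upsilon(t)$ epochs of the inner loop and using $\upsilon(t) \geq \frac{\log 0.25}{\log(1-\rho)}$ forces $(1-\rho)^{\upsilon(t)} \leq \tfrac14$ (which requires $0 < \rho < 1$, the regime carved out by the first two conditions on $\gamma$); combining this with the variance coefficient bounded by $\tfrac12$ gives the per-outer-loop recursion $\mathbb{E}[f(w^{s+1}) - f(w^*)] \leq \left(\tfrac14 + \tfrac12\right)\mathbb{E}[f(w^s) - f(w^*)] + b = \tfrac34\,\mathbb{E}[f(w^s) - f(w^*)] + b$, where condition (\ref{eq10}) guarantees $b \leq \epsilon/8$. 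Telescoping this geometric recursion over $s = 0,\ldots,S-1$ gives $\mathbb{E}[f(w^S) - f(w^*)] \leq (\tfrac34)^S (f(w_0) - f(w^*)) + 4b \leq (\tfrac34)^S (f(w_0) - f(w^*)) + \tfrac{\epsilon}{2}$, and the stated lower bound on $S$ makes the first term at most $\epsilon/2$, delivering the $\epsilon$-accuracy.

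The hardest part will be the middle step, where the SVRG variance and the asynchronous delay are entangled: the variance naturally lives at the inconsistent read $\widehat{w}_t$, whereas the delay error is an accumulation of past stochastic updates indexed by $D(t)$, and both must be re-expressed through the epoch bookkeeping $K(t)$, $\upsilon(t)$ and the spread bounds $\eta_1,\eta_2,\tau$. The delicate point is to perform the block-to-full-gradient conversion while simultaneously keeping the coefficient of $f(w^s) - f(w^*)$ below $\tfrac12$ and isolating every $G$-dependent term as a purely additive bias; it is precisely this balancing act that forces the specific algebraic forms of $C$ and $\rho$ and the three conditions on the learning rate $\gamma$.
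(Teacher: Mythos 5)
Your proposal follows essentially the same route as the paper's own proof: a block-smoothness per-iteration descent, an SVRG variance lemma that splits $\widehat{v}$ into a consistent-read part plus a delay correction controlled by $D(t)$, $\tau$, $\eta_2$, summation over the minimal full-visiting set $K(t)$ with strong convexity in PL form to obtain the $(1-\rho)$ per-epoch contraction, and then exactly the same numerology ($\tfrac{8L^2\eta_1 qC}{\rho\mu}\leq \tfrac12$, $(1-\rho)^{\upsilon(t)}\leq\tfrac14$, bias $\leq\epsilon/8$, giving the $\tfrac34$-contraction and the stated bound on $S$). Your reading of condition (\ref{eq8}) as requiring $\rho>0$ (the statement's ``$\rho<0$'' is a typo) also matches what the paper's proof actually uses, so the proposal is correct and not materially different from the paper's argument.
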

\begin{remark}
Theorem \ref{theorem2} shows that, the convergence rate of AFSVRG-VP is $\mathcal{O}(  \log\left ( \frac{1}{\epsilon}  \right ) )$ to reach the  accuracy $\epsilon$.
\end{remark}
\subsubsection{AFSAGA-VP}  We provide the convergence result of AFSAGA-VP in Theorem \ref{theorem3}.
\begin{theorem} \label{theorem3}
Under Assumptions \ref{assumption4}-\ref{ass5}, to achieve the accuracy $\epsilon$ of (\ref{formulation1}) for AFSAGA-VP, \emph{i.e.}, $\mathbb{E} f (w_{t}) -f(w^*) \leq \epsilon$, let $c_0= (  ( \frac{\eta_2  }{2}   + 3    ( \gamma  q\eta_1^2 + L_{\max}  ) (\eta_1+2\eta_2)   ) \tau +  ( \gamma L^2 q\eta_1^2 + 8 L_{\max}  )  \eta_1 q   \eta_1   ) \gamma^4 L^2 \eta_1 q  G$, $c_1={\left ( \gamma L^2 q\eta_1^2 + L_{\max} \right ) \gamma^2 \eta_1 q 2 L^2 }$, $c_2={4\left ( \gamma L^2 q\eta_1^2 + L_{\max} \right )    \frac{L^2 \eta_1^2 q }{l } \gamma^2}$, and let $\rho \in (1 -\frac{1}{l},1)$, we  choose $\gamma$ such that
\begin{align}
\label{eq11}\frac{4 c_0}{ \gamma \mu (1-\rho) \left ( \frac{\gamma \mu^2}{4} -2 c_1-c_2 \right )} \leq& \frac{\epsilon}{2}
\\
0<1 -  \frac{\gamma \mu}{4} & <&1
\\ -  \frac{\gamma \mu^2}{4} +2 c_1+c_2 \left ( 1+   \frac{1}{1- \frac{ 1 -\frac{1}{l}}{\rho}}  \right ) \leq& 0
\\ \label{eq14}  -  \frac{\gamma \mu^2}{4} +c_2+  c_1 \left ( 2+ \frac{1}{1- \frac{ 1 -\frac{1}{l}}{\rho}} \right )   \leq& 0
\end{align}
the epoch  number should satisfy $ \upsilon(t)  \geq \frac{\log \frac{2 \left ({{2\rho- 1 +  \frac{\gamma \mu}{4} }} \right ) ( f (w_{0}) -f(w^*) ) }{\epsilon {(\rho- 1 +  \frac{\gamma \mu}{4} )\left ( \frac{\gamma \mu^2}{4} -2 c_1-c_2 \right )} }}{\log \frac{1}{\rho}}$.
\end{theorem}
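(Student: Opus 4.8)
The plan is to build a Lyapunov potential that couples the suboptimality gap $f(w_t)-f(w^*)$ with a discounted measure of the staleness of the stored gradient table, namely $H_t := \frac{1}{l}\sum_{i=1}^l \|\widehat{\alpha}_i^{(t)} - \nabla f_i(w^*)\|^2$ (with $\widehat{\alpha}_i$ the assembled per-worker memory), and to show that this potential contracts by a common factor per step up to an asynchrony-induced floor. First I would start from the global update rule (\ref{EqGlobalUpdate}) and apply the block-coordinate smoothness (\ref{equdef2}) to obtain a one-step descent inequality for $f(w_{t+1})$ in terms of $f(w_t)$. Taking the conditional expectation over the uniformly sampled index $i_t$ --- which, by the ``after communication'' labeling, is independent of $\widehat{w}_t$ --- the linear term yields an unbiased $-\gamma\langle \nabla f(w_t), \nabla f(\widehat{w}_t)\rangle$-type contribution, while the quadratic term produces the second moment of the SAGA estimator $\widehat{v}_t^{\psi(t)}$.

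The next step is to control this second moment. The SAGA variance splits into a part governed by the gradient at the inconsistent read $\widehat{w}_t$ and a part governed by the memory quantity $H_t$; bounding them with Assumption \ref{assumption3} and Assumption \ref{assumption1} is exactly where the constants $c_1$ and $c_2$ originate, $c_1$ multiplying the iterate contribution and $c_2$ the memory contribution. I would then replace $\nabla f_i(\widehat{w}_t)$ by $\nabla f_i(w_t)$ using smoothness together with the delay identity (\ref{pi_test1}), so that $\|\widehat{w}_t - w_t\|$ becomes a sum of recent updates over the delay set $D(t)$. Assumption \ref{ass4} (overlap bound $\tau$), Assumption \ref{assMax} ($\eta_1$) and Assumption \ref{ass5} ($\eta_2$) turn these telescoped sums into a bounded number of squared block-gradients; the quartic-in-$\gamma$ accumulation of these asynchrony errors is the source of the floor constant $c_0$.

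Running in parallel, I would track the memory recursion: at each step the index $i_t$ is refreshed with probability $\tfrac1l$, so $\mathbb{E}[H_{t+1}]$ equals $(1-\tfrac1l)H_t$ plus an increment proportional to the squared gradient at $\widehat{w}_t$. Choosing the discount $\rho \in (1-\tfrac1l,1)$ in the potential and imposing the coupling conditions (\ref{eq11})--(\ref{eq14}) forces the cross terms between the iterate part and the memory part to cancel in the right direction, leaving a single recursion $\Phi_{t+1} \le \rho\,\Phi_t + \text{(floor)}$. Invoking strong convexity (Assumption \ref{assumption4}) to lower-bound the gradient norm by the suboptimality gap supplies the $\mu$- and $\mu^2$-factors appearing in those conditions, condition (\ref{eq11}) forces the floor below $\tfrac{\epsilon}{2}$, and unrolling the recursion over $\upsilon(t)$ steps delivers the stated epoch bound.

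The main obstacle is the nested asynchrony in the variance bound: not only is $\widehat{w}_t$ read inconsistently, but the stored gradients $\widehat{\alpha}_i^\ell$ were themselves computed from earlier inconsistent reads, so $H_t$ couples the current delayed iterate, the past delayed iterates, and the delay set $D(t)$. Disentangling these dependencies while keeping every cross term dominated --- so that the iterate recursion and the memory recursion can be made to contract with one common factor $\rho$ --- is the delicate technical heart of the argument, and it is precisely this balancing that dictates the simultaneous, tightly-coupled form of the constraints (\ref{eq11})--(\ref{eq14}) and the quartic scaling of $c_0$.
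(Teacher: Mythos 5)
Your plan follows the classical SAGA/ASAGA template: a potential $\Phi_t = \bigl(f(w_t)-f(w^*)\bigr) + c\,H_t$ coupling the gap with the discounted memory staleness $H_t$, a one-step memory recursion $\mathbb{E}[H_{t+1}]\le(1-\tfrac1l)H_t + \cdots$, and a per-step contraction $\Phi_{t+1}\le\rho\,\Phi_t+\text{floor}$. The paper's route is structurally different: it never forms a gap-plus-memory potential. Instead it unrolls the memory error completely into the past via the refresh-probability argument (Lemma \ref{AsySGHT_lemma2}), showing that $\mathbb{E}\|\alpha_{i_u}^{u,\ell}-\nabla_{\mathcal{G}_\ell}f_{i_u}(w^*)\|^2$ is a $(1-\tfrac1l)$-discounted sum of past iterate distances $\sigma(w_{\xi^{-1}(u',\ell)})=\mathbb{E}\|w_{\xi^{-1}(u',\ell)}-w^*\|^2$, and then takes as Lyapunov function a discounted sum of function gaps at \emph{epoch start counters}, $\mathcal{L}_t=\sum_{k=0}^{\upsilon(t)}\rho^{\upsilon(t)-k}\,\mathbb{E}\bigl(f(w_{\overline{u}_k})-f(w^*)\bigr)$. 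The sign conditions in the theorem, with their factors $\bigl(1-\tfrac{1-1/l}{\rho}\bigr)^{-1}$ (e.g.\ (\ref{eq14})), arise precisely from requiring that in this history-weighted sum the coefficient of every $\sigma(w_{\overline{u}_k})$ be nonpositive after summing the geometric ratio $(1-\tfrac1l)/\rho$; they are not cross-term cancellations of a two-block potential, and your route would generate constraints of a different algebraic form, involving the (unspecified) potential weight $c$.

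Beyond this, there is a step in your plan that fails as stated: the per-step contraction $\Phi_{t+1}\le\rho\,\Phi_t+\text{floor}$. In this algorithm each global iteration updates only one coordinate block $\mathcal{G}_{\psi(t)}$, so the one-step descent inequality yields only $-\tfrac{\gamma}{2}\mathbb{E}\|\nabla_{\mathcal{G}_{\psi(t)}}f(w_t)\|^2$. Strong convexity lower-bounds the \emph{full} gradient norm by $2\mu\bigl(f(w_t)-f(w^*)\bigr)$, not a single block gradient norm, which can be zero while the gap is large; hence the gap part of your potential cannot contract per step, for any choice of $\gamma$, $c$, or $\rho$. The contraction is only available at the granularity of an epoch $K(t)$ (Definition \ref{minimum_set}): one must sum the per-step inequalities over $u\in K(t)$ and invoke Lemma \ref{AsySPSAGA_lemma3} to convert the collection of block gradients taken at different times within the epoch into $\|\nabla f(w_t)\|^2$, and only then apply strong convexity. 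This is exactly why the theorem's conclusion is stated in terms of the epoch number $\upsilon(t)$ and why the paper's Lyapunov function is indexed by the epoch start counters $\overline{u}_k$. Your phrase ``unrolling the recursion over $\upsilon(t)$ steps'' conflates iterations with epochs; to repair the plan you must first aggregate over $K(t)$ (keeping track of the asynchrony terms through Assumptions \ref{assMax}--\ref{ass5}), build the potential on the resulting epoch skeleton, and only then unroll.
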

\begin{remark}
Theorem \ref{theorem2} shows that, the convergence rate of AFSAGA-VP is $\mathcal{O}(  \log\left ( \frac{1}{\epsilon}  \right ) )$ to reach the  accuracy $\epsilon$.
\end{remark}

\subsection{Security Analysis}\label{section_sec}
We  discuss the data and model security (in other words,  prevent local data and model on one worker leaked to or inferred by  other workers) of  AFSG-VP, AFSVRG-VP and AFSAGA-VP under the \textit{semi-honest} assumption. Note that the   \textit{semi-honest} assumption (\textit{i.e.}, Assumption \ref{ass_semi_honest}) is commonly used in previous works \cite{wan2007privacy,hardy2017private,cheng2019secureboost}.
\begin{assumption}[Semi-honest security] \label{ass_semi_honest}
All workers  will follow the algorithm to perform the
correct computations. However, they may retain records of
the intermediate computation results which they may use
later to infer the other work's data and model.
\end{assumption}

Before discussing the data and model privacy in detail, we first introduce the concepts of exact and approximate inference attacks in Definitions \ref{defi_inf_attack} and \ref{defi_inf_attack2}.

\begin{definition}[Exact inference attack]\label{defi_inf_attack}
An exact  inference attack on the $\ell$-th  worker is  to exactly  infer  some feature group $\mathcal{G}$ of one sample $x$ or model $w$ which belongs  from other workers
 without directly accessing it.
\end{definition}
\begin{definition}[$\epsilon$-approximate inference attack]\label{defi_inf_attack2}
An $\epsilon$-approximate inference attack on the $\ell$-th  worker is  to  infer some feature group $\mathcal{G}$ of one sample $x$ (model $w$) as $\widehat{x}_{\mathcal{G}}$ ($\widehat{w}_{\mathcal{G}}$) with the accuracy of $\epsilon$ (i.e., $\| \widehat{x}_{\mathcal{G}} -{x}_{\mathcal{G}}\|_\infty \leq \epsilon$ or $\| \widehat{w}_{\mathcal{G}} -{w}_{\mathcal{G}}\|_\infty \leq \epsilon$) which belongs  from other workers
 without directly accessing it.
\end{definition}
\noindent \textbf{Security Analysis based on Algorithm \ref{protocol1}:} \ Firstly, we  show that  AFSG-VP, AFSVRG-VP and AFSAGA-VP based on Algorithm \ref{protocol1} can prevent the exact  inference attack, however has the risk of approximate inference attack.

Specifically, in order to infer the information of $(w_t)_{\mathcal{G}_{\ell}}$ on the ${\ell'}$-th worker where $\ell' \neq \ell$, we only  have a sequence of  linear system of $o_t= (w_t)_{\mathcal{G}_{\ell}}^T (x_i)_{\mathcal{G}_{\ell}}$ with a sequence of trials of  $(x_i)_{\mathcal{G}_{\ell}}$ and  $o_t$ while only $o_t$ are known. Thus, it is impossible to infer the exact information of  $(w_t)_{\mathcal{G}_{\ell}}$ from the linear system of $o_t= (w_t)_{\mathcal{G}_{\ell}}^T (x_i)_{\mathcal{G}_{\ell}}$  even the size of feature group ${\mathcal{G}_{\ell}}$ is one. Similarly, we cannot  infer the exact information of  $(x_i)_{\mathcal{G}_{\ell}}$.

However, it has the potential to approximately infer  $(w_t)_{\mathcal{G}_{\ell}}$ from the linear system of $o_j=w_{\mathcal{G}_{\ell}}^T (x_i)_{\mathcal{G}_{\ell}}$ if the size of feature group ${\mathcal{G}_{\ell}}$ is one. Specifically, if we know the region of $(x_i)_{\mathcal{G}_{\ell}}$ as $\mathcal{I}$, we can have that $o_j/w_{\mathcal{G}_{\ell}} \in \mathcal{I}$ which can infer $w_{\mathcal{G}_{\ell}}$ approximately. Further, we can infer $(x_i)_{\mathcal{G}_{\ell}}$ approximately.
We say that Algorithm \ref{protocol1} has the risk of  approximate inference attack.

\noindent \textbf{Security Analysis based on Algorithm \ref{protocol2}:} \ Next, we show that  AFSG-VP, AFSVRG-VP and AFSAGA-VP based on Algorithm \ref{protocol2} can prevent the approximate  inference attack.

As discussed above, the key to preventing the approximate inference attack is to  mask the value of $o_j$.
As described in lines \ref{step2}-\ref{step3} of Algorithm \ref{protocol2}, we add an extra random variable  $b^{{\ell}'}$  into $w_{\mathcal{G}_{\ell'}}^T (x_i)_{\mathcal{G}_{\ell'}}$, and transfer the value of $w_{\mathcal{G}_\ell'}^T (x_i)_{\mathcal{G}_\ell'}+b^{{\ell}'} $ to another worker. This operation  makes  the received part cannot directly get the value of $o_j$. Finally,  the $\ell$-th active worker gets  the global sum $\sum_{{\ell}'=1}^q  \left (w_{\mathcal{G}_{{\ell}'}}^T (x_i)_{\mathcal{G}_{{\ell}'}}+b^{{\ell}'} \right )$ by using a tree-structured communication  scheme based on the tree structure $T_1$. Thus, the lines \ref{step2}-\ref{step5} of Algorithm \ref{protocol2} keeps data privacy.

Line \ref{step7} of Algorithm \ref{protocol2} is trying to get $w^T x$ by  removing $\overline{b} =\sum_{\ell'=1}^q  b^{{\ell'}}$ from the sum $\sum_{{\ell}'=1}^q  \left (w_{\mathcal{G}_{{\ell}'}}^T (x_i)_{\mathcal{G}_{{\ell}'}}+b^{{\ell}'} \right )$. To prove that Algorithm \ref{protocol2} can reduce the risk of  approximate inference attack, we only need to prove that the calculation of $\overline{b} =\sum_{\ell'=1}^q  b^{{\ell'}}$ in line \ref{step7} of Algorithm \ref{protocol2} does not disclose the value of $b^{{{{\ell}'}}}$ or the sum of $b^{{{{\ell}'}}}$ on a node of tree $T_1$ (please see Lemma \ref{lemma2}, the proof is provided in the Appendix).

\begin{lemma}\label{lemma2}
Using a  tree structure $T_2$ on all workers    which is significantly different to  the tree $T_1$ to compute $\overline{b} =\sum_{\ell'=1}^q  b^{{\ell'}}$,  there is no risk to  disclose the value of $b^{{{{\ell}'}}}$, or the sum of $b^{{{{\ell}'}}}$ on all nodes of a subtree of $T_1$  whose sizes are larger than 1 and smaller than $T_1$.
\end{lemma}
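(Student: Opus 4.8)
The plan is to track exactly which quantities a worker can observe while $\overline{b}=\sum_{\ell'=1}^q b^{\ell'}$ is accumulated along $T_2$, and then to argue, via the contrapositive of Definition \ref{def_total_differ_tree}, that none of these quantities is the mask needed to un-mask a partial sum produced during the $T_1$ computation. First I would formalize the tree-structured accumulation: when a sum is computed over a tree $T$, each node $v$ (a worker) receives from every child $c$ the partial sum over $Leaf(T_c)$ and forms the partial sum over $Leaf(T_v)$. Hence the complete list of quantities produced during the $T_2$-based computation of $\overline{b}$ is $\{\sum_{\ell'\in Leaf(\widehat{T}_2)} b^{\ell'} : \widehat{T}_2 \text{ a subtree of } T_2\}$, i.e. exactly one $b$-sum per subtree leaf-set of $T_2$.

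Next I would pin down what an attacker actually needs. During the $T_1$ computation (lines \ref{step2}--\ref{step5} of Algorithm \ref{protocol2}) a worker sitting at an internal node of $T_1$ holds the masked partial sum $\sum_{\ell'\in Leaf(\widehat{T}_1)}(w_{\mathcal{G}_{\ell'}}^T(x_i)_{\mathcal{G}_{\ell'}}+b^{\ell'})$ for the subtree $\widehat{T}_1$ rooted at it. To recover the unmasked quantity $\sum_{\ell'\in Leaf(\widehat{T}_1)} w_{\mathcal{G}_{\ell'}}^T(x_i)_{\mathcal{G}_{\ell'}}$, which is what enables the $\epsilon$-approximate inference attack of Definition \ref{defi_inf_attack2}, the attacker must obtain the matching mask $\sum_{\ell'\in Leaf(\widehat{T}_1)} b^{\ell'}$ over the \emph{same} index set $Leaf(\widehat{T}_1)$. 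Thus the lemma reduces to the statement that for every subtree $\widehat{T}_1$ of $T_1$ with $1<|\widehat{T}_1|<|T_1|$, the mask $\sum_{\ell'\in Leaf(\widehat{T}_1)} b^{\ell'}$ is never among the values produced during the $T_2$ accumulation.

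The core step is then immediate from Definition \ref{def_total_differ_tree}: a $b$-sum over $Leaf(\widehat{T}_1)$ is produced at a node of $T_2$ only if some subtree $\widehat{T}_2$ of $T_2$ satisfies $Leaf(\widehat{T}_2)=Leaf(\widehat{T}_1)$; but because $T_1$ and $T_2$ are significantly different, no such pair of nontrivial subtrees (both of size strictly between $1$ and $q$) exists, so no nontrivial $T_1$-mask is ever revealed. The full-tree total $\overline{b}$ corresponds to the trivial case $Leaf=\{1,\dots,q\}$ and is intentionally disclosed (it is what the active worker needs), while an individual value $b^{\ell'}$ corresponds to the singleton $Leaf=\{\ell'\}$, which worker $\ell'$ generates itself, so its ``disclosure'' is meaningful only relative to the $T_1$-parent of leaf $\ell'$.

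The place where I expect to spend the most care is twofold. First, ruling out combination attacks: the revealed $T_2$-partial sums form a laminar (nested) family, and I must show that the index sets obtainable by the natural additions and differences of these sums are again assembled along the $T_2$ hierarchy, so that reproducing a $T_1$-subtree leaf-set $Leaf(\widehat{T}_1)$ still forces a matching $T_2$-subtree and is therefore excluded by Definition \ref{def_total_differ_tree}. Second, the singleton case is not covered by the ``size larger than $1$'' clause of the definition, so I would handle it directly: the smallest nontrivial $T_1$-subtree containing $\ell'$ has at least two leaves and never matches a $T_2$-subtree, whence no worker other than $\ell'$ can assemble, from the $T_2$ run, the single mask $b^{\ell'}$ in a form it could pair with the $T_1$ value $w_{\mathcal{G}_{\ell'}}^T(x_i)_{\mathcal{G}_{\ell'}}+b^{\ell'}$ to isolate an individual product. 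Making this last argument airtight, rather than the subtree-matching reduction itself, is the main obstacle.
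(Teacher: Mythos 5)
There is an important wrinkle in the comparison you were asked for: despite the paper's claim that ``the proof is provided in the Appendix,'' the Appendix contains only the proofs of Theorems \ref{theorem1}--\ref{theorem3}; no proof of Lemma \ref{lemma2} appears anywhere in the source, so your proposal has to stand on its own. Its skeleton is the natural one: catalogue the quantities revealed by the $T_2$ run as leaf-set sums $\sum_{\ell'\in Leaf(\widehat{T}_2)}b^{\ell'}$ over subtrees of $T_2$, observe that un-masking a $T_1$ partial sum requires the mask $\sum_{\ell'\in Leaf(\widehat{T}_1)}b^{\ell'}$ over the identical index set, and invoke Definition \ref{def_total_differ_tree} to exclude a match between nontrivial subtrees. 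That reduction is correct as far as it goes and is almost certainly what the authors intend.

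However, the two points you flag as ``needing care'' are not technicalities --- they are exactly where the argument breaks, and one of them cannot be repaired from the stated hypotheses. First, your laminarity claim for combination attacks is false as stated: a worker acting as adder at several $T_2$ nodes observes sums attached to incomparable subtrees, and combinations of these leave the $T_2$ hierarchy. For example, with $T_2=((1,3),(2,4))$, worker 1 observes $b^3$ and $b^2+b^4$, whose sum is the sum over $\{2,3,4\}$, which is not a $T_2$ leaf set; what is actually needed is a linear-algebra statement that for every worker no integer combination of its observed partial sums (together with its own $b^{\ell}$) has characteristic vector equal to $\chi_{Leaf(\widehat{T}_1)}$ for a nontrivial subtree $\widehat{T}_1$ of $T_1$, and Definition \ref{def_total_differ_tree}, which only forbids equality of single leaf sets, does not imply this. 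Second, the singleton issue is fatal to the lemma as stated. In the $T_2$ run every leaf transmits its raw $b^{\ell'}$ to whichever worker performs the addition at its $T_2$ parent, so some individual masks are necessarily disclosed; the only question is to whom. Take $q=4$, $T_1=((1,2),(3,4))$ and $T_2=(((1,3),2),4)$: the nontrivial leaf sets are $\{1,2\},\{3,4\}$ for $T_1$ and $\{1,3\},\{1,2,3\}$ for $T_2$, so the trees are significantly different, yet if worker 1 performs the additions along its path in $T_2$ it receives $b^2$ in the clear at the node joining $\{1,3\}$ with leaf 2, and since in the $T_1$ run worker 1 (as leaf 2's adder) also received $w_{\mathcal{G}_2}^T(x_i)_{\mathcal{G}_2}+b^2$, it recovers worker 2's local product exactly, enabling precisely the attack of Definition \ref{defi_inf_attack2}. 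So the lemma needs an additional hypothesis constraining which worker performs each addition (or a strengthening of Definition \ref{def_total_differ_tree} to a span condition), and no proof --- yours or the authors' missing one --- can close this gap without it.
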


\subsection{Complexity Analysis}
We give the computational complexities and communication costs of AFSG-VP, AFSVRG-VP and
AFSAGA-VP as follows.

The computational complexity for one iteration of AFSGD-VP  is $\mathcal{O}(d +q)$. Thus, the total computational complexity of AFSGD-VP is $\mathcal{O}((d +q)t)$, where $t$ denotes the iteration number. Further, the    communication cost for one iteration of AFSGD-VP is $\mathcal{O}(q)$, and the total communication cost   is $\mathcal{O}(qt)$.

For AFSVRG-VP, the computational complexity and communication cost of line 3 are $\mathcal{O}((d +q)l)$ and $\mathcal{O}(ql)$ respectively. Assume that the inner loop number of AFSVRG-VP is $t$. Thus,   the total computational complexity of AFSVRG-VP is $\mathcal{O}((d +q)(l+t)S)$, and the communication cost is $\mathcal{O}(q(l+t)S )$.

For AFSAGA-VP, the computational complexity and communication cost of line 2 are $\mathcal{O}((d +q)l)$ and $\mathcal{O}(ql)$ respectively. Assume that the loop number of AFSAGA-VP is $t$. Thus,   the total computational complexity of AFSAGA-VP is $\mathcal{O}((d +q)(l+t))$, and the communication cost is $\mathcal{O}(q(l+t) )$.

\section{Experimental Results}\label{sectionexperiments}

In this section, we first present the experimental setup, and then provide the experimental results and discussions.

\subsection{Experimental Setup}
\subsubsection{Design of Experiments}  In the experiments, we not only verify the theoretical results of AFSG-VP, AFSVRG-VP and
AFSAGA-VP, but also show that our algorithms have much better efficiency than the corresponding synchronous algorithms (\emph{i.e.}, FSG-VP, FSVRG-VP
and FSAGA-VP). We compare our asynchronous vertical SGD, SVRG and SAGA algorithms (\emph{i.e.}, AFSG-VP, AFSVRG-VP and AFSAGA-VP) with synchronous version of vertical SGD, SVRG   and SAGA (denoted as FSG-VP, FSVRG-VP and FSAGA-VP respectively) on  classification and regression tasks, where  FSVRG-VP is almost same to FD-SVRG \cite{zhang2018feature}. For the  classification tasks, we consider the $\ell_2$-norm regularized logistic regression model as follows:
\begin{equation}
    \min_{\textbf{w}} f(\textbf{w})=\frac{1}{l}\sum^{l}_{i=1}\log(1+e^{-y_i\textbf{w}^Tx_i}) + \frac{\lambda}{2}\|\textbf{w}\|^2
\end{equation}
For the regression tasks, we use the ridge linear regression method with $\ell_2$-norm regularization as follows:
\begin{equation}
    \min_{\textbf{w},b} f(\textbf{w},b)=\frac{1}{l}\sum^{l}_{i=1}(\textbf{w}^Tx_i+b-y_i)^2 + \frac{\lambda}{2} \left (\|\textbf{w}\|^2+b^2 \right )
\end{equation}

\subsubsection{Experiment Settings} We run all the experiments on a cluster with 32 nodes of 20-core Intel Xeon E5-2660 2.60 GHz (Haswell). The nodes are connected with 56 Gb FDR. We use OpenMPI \cite{graham2005open} v3.1.1 with multi-thread support for communication between worker processes and Armadillo \cite{sanderson2016armadillo} v9.700.3 for efficient matrix computation. Each worker is placed on a different machine node. For the $\ell_2$ regularization term, we set the coefficient $\lambda=1e^{-4}$ for all experiments. We also choose the best learning rate $\in (5e^{-1},1e^{-1},5e^{-2},1e^{-2},...)$ for each algorithm on different learning tasks. There is a synthetic straggler node which may be 40\% to 300\% slower than the fastest worker node to simulate the real application scenario. In practice, it is normal that different parties in a federated learning system will possess different computation and communication power and resources.

\subsubsection{Implementation Details}
Our asynchronous algorithms are implemented under the decentralized framework, where a worker own its own part of data and model parameters. There is no master node for aggregating data/features/gradients which may lead to undesired user information disclosure. Instead, we utilize a coordinator as in Figure \ref{structureFDSKL} to collect the product computed from local data and parameters from other workers. Each worker node can independently call the coordinator to enable the asynchronous model update. The aggregation of local product is performed in a demand-based manner, which means that only when a worker node needs to update its local parameter will it request the coordinator to pull the local product from other worker nodes. Different from horizontal federated learning \cite{yang2019federated,abs-1907-10218,abs-1902-00641}, it will be much harder for an attacker to restore the information of the user data in a worker node using the local product than the gradient.

Specifically, in our asynchronous algorithms, each worker node performs computation rather independently. The main thread of a worker process performs the major workload of gradient computation and model update operation. Another listener thread keeps listening for the request and sends back the local product to the requesting source. The computation diagram can be summarized as follows for a worker:
\begin{enumerate}
    \item Randomly select an index of the data.
    \item Call the coordinator to broadcast the index to the listeners of other workers.
    \item Reduce the sum of the local product back from the listeners.
    \item Perform gradient computation and model parameters update.
\end{enumerate}
Note that the local product is computed based on a worker's current parameters. Overall speaking, however, some workers may have updated their parameters more times than other workers. Different from common asynchronous horizontal algorithms \cite{meng2016asynchronous,gu2016asynchronous}, although the worker processes run asynchronously, all the parameters a worker uses to compute gradient is most up-to-date. The broadcast and reduce operation are also realized in a tree-structured scheme to reduce communication costs.

\subsubsection{Datasets}
\begin{table*}[htbp]
\caption{The  datasets used in the experiments.}
\scriptsize
 \setlength{\tabcolsep}{0.5mm}
    \centering
    \begin{tabular}{ccccccccc}
    \toprule
    \multicolumn{1}{c}{} & \multicolumn{6}{c}{Classification Tasks} & \multicolumn{2}{c}{\multirow{2}{*}{Regression Tasks}} \\ \cline{2-7}
    \multicolumn{1}{c}{} & \multicolumn{2}{c}{Financial} & \multicolumn{4}{c}{Large-Scale} & \multicolumn{2}{c}{}\\
    \cmidrule(lr){2-3} \cmidrule(lr){4-7} \cmidrule(lr){8-9}
     & UCICreditCard & GiveMeSomeCredit & news20 & rcv1 & url & webspam & E2006-tfidf & YearPredictionMSD\\
     \midrule
     \#Train & 24,000 & 96,257 & 15,997 & 677,399 & 1,916,904 & 280,000 & 16,087 & 463,715\\
     \#Test & 6,000 & 24,012 & 3,999 & 20,242 & 479,226 & 70,000 & 3,308 & 51,630\\
     \#Feature & 90 & 92 & 1,355,191 & 47,236 & 3,231,961 & 16,609,143 & 150,360 & 90\\
    \bottomrule
    \end{tabular}
    \label{dataset}
\end{table*}
To fully demonstrate the scalability of our asynchronous vertical federated learning algorithms, we conduct experiments on eight datasets as summarized in Table \ref{dataset} for binary classification and regression tasks. Two real and relatively small financial datasets, UCICreditCard and GiveMeSomeCredit are from the Kaggle\footnote{https://www.kaggle.com/datasets} website. The other six datasets are from the LIBSVM\footnote{https://www.csie.ntu.edu.tw/~cjlin/libsvmtools/datasets/} website \cite{CC01a}. We split news20, url and webspam datasets into training data and testing data randomly with a ratio of 4:1. We also use rcv1's testing data for training and training data for testing as there are more instances in the testing data.
\subsection{Results and Discussions}

\begin{figure*}[htbp]
\centering
    \begin{subfigure}[h]{0.24\textwidth}
    \includegraphics[width=\textwidth]{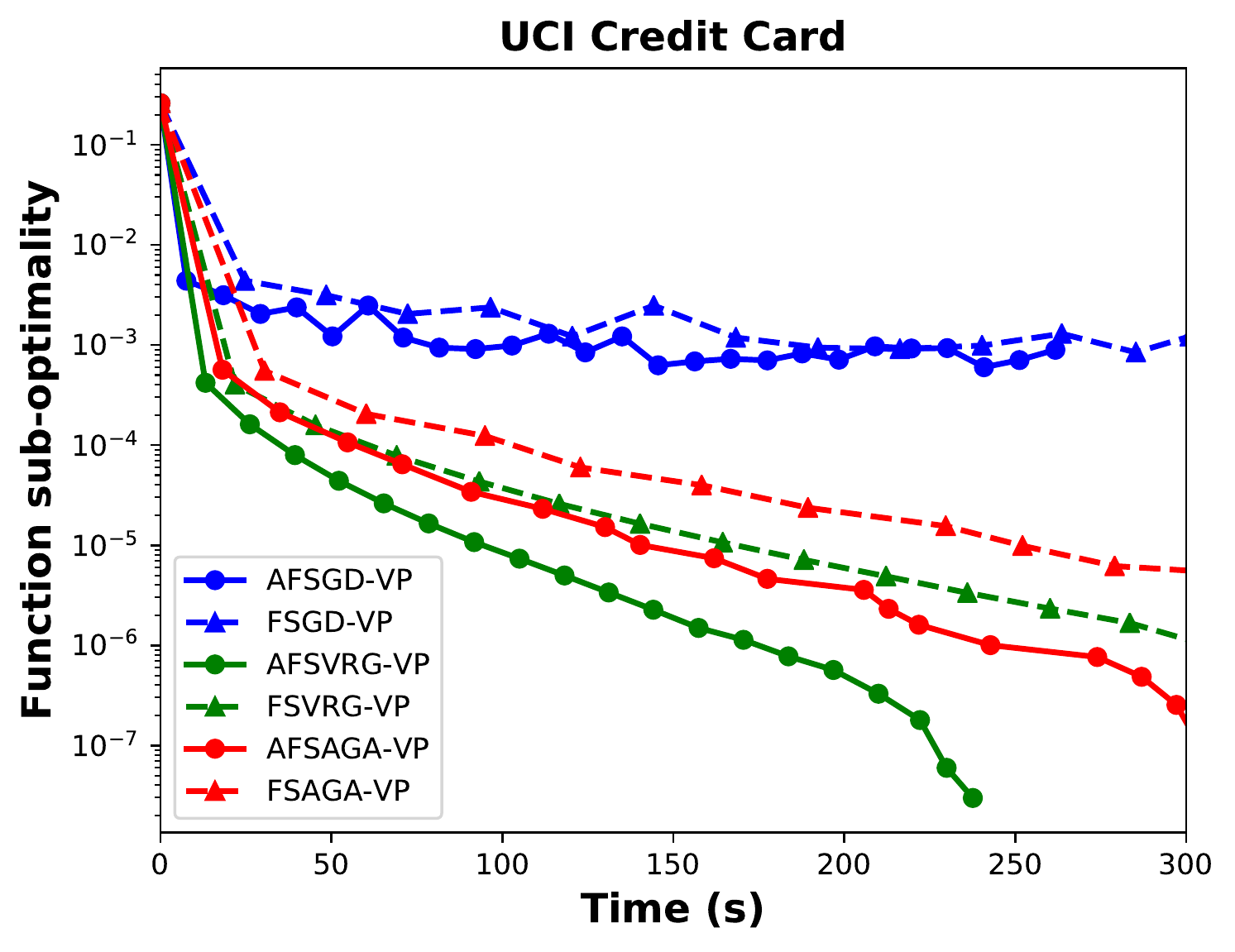}
    \end{subfigure}
    \begin{subfigure}[h]{0.24\textwidth}
    \includegraphics[width=\textwidth]{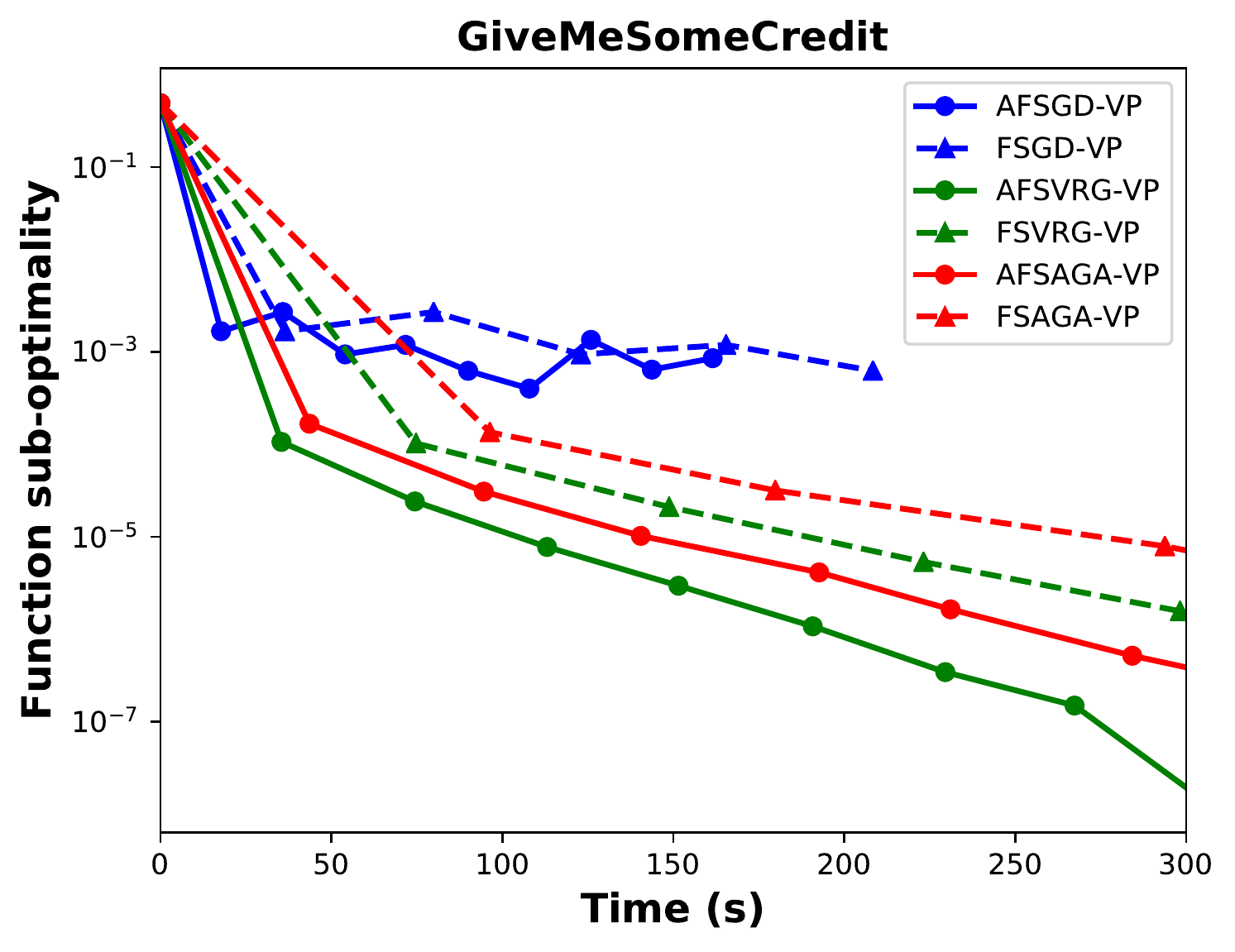}
    \end{subfigure}
    \begin{subfigure}[h]{0.24\textwidth}
    \includegraphics[width=\textwidth]{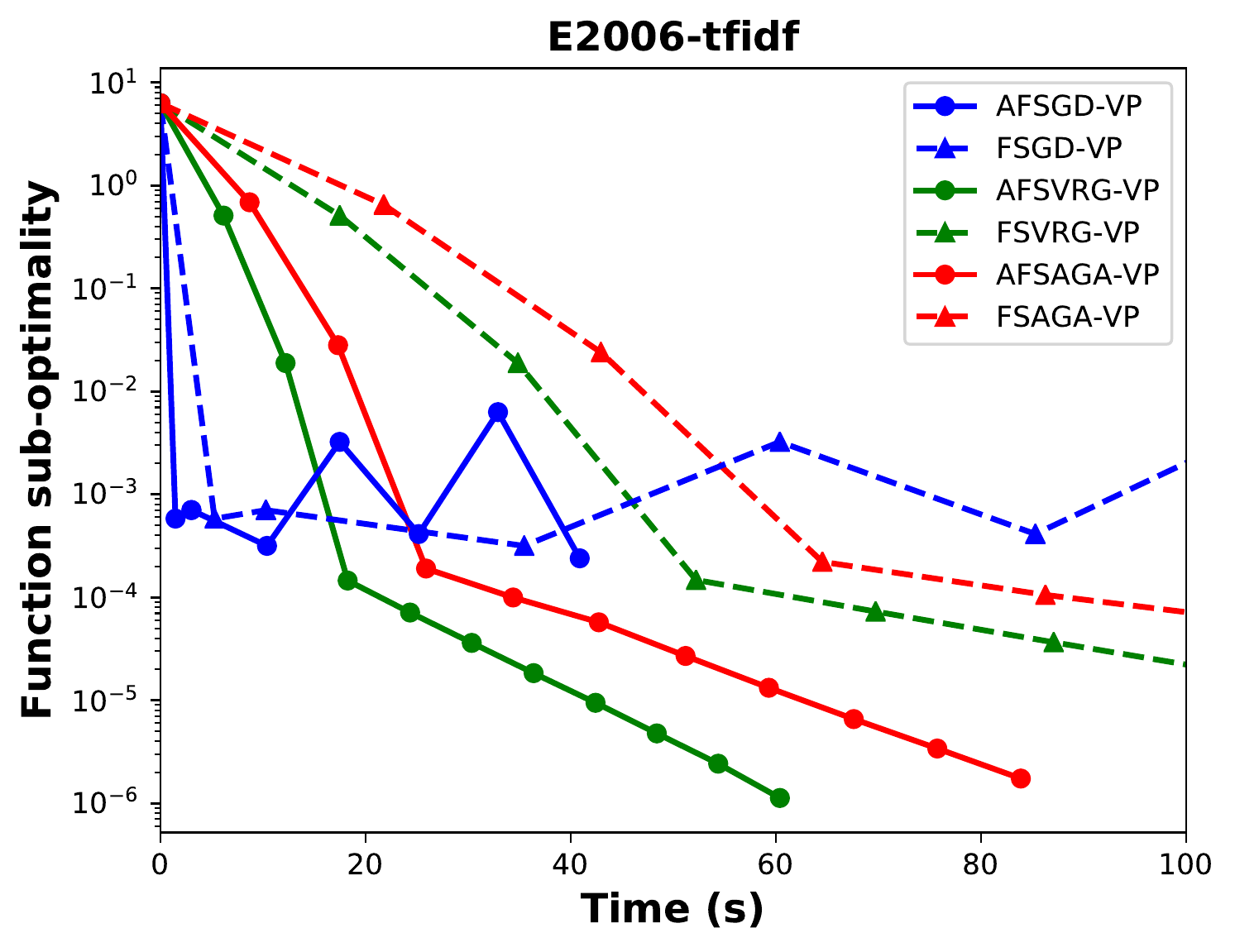}
    \end{subfigure}
    \begin{subfigure}[h]{0.24\textwidth}
    \includegraphics[width=\textwidth]{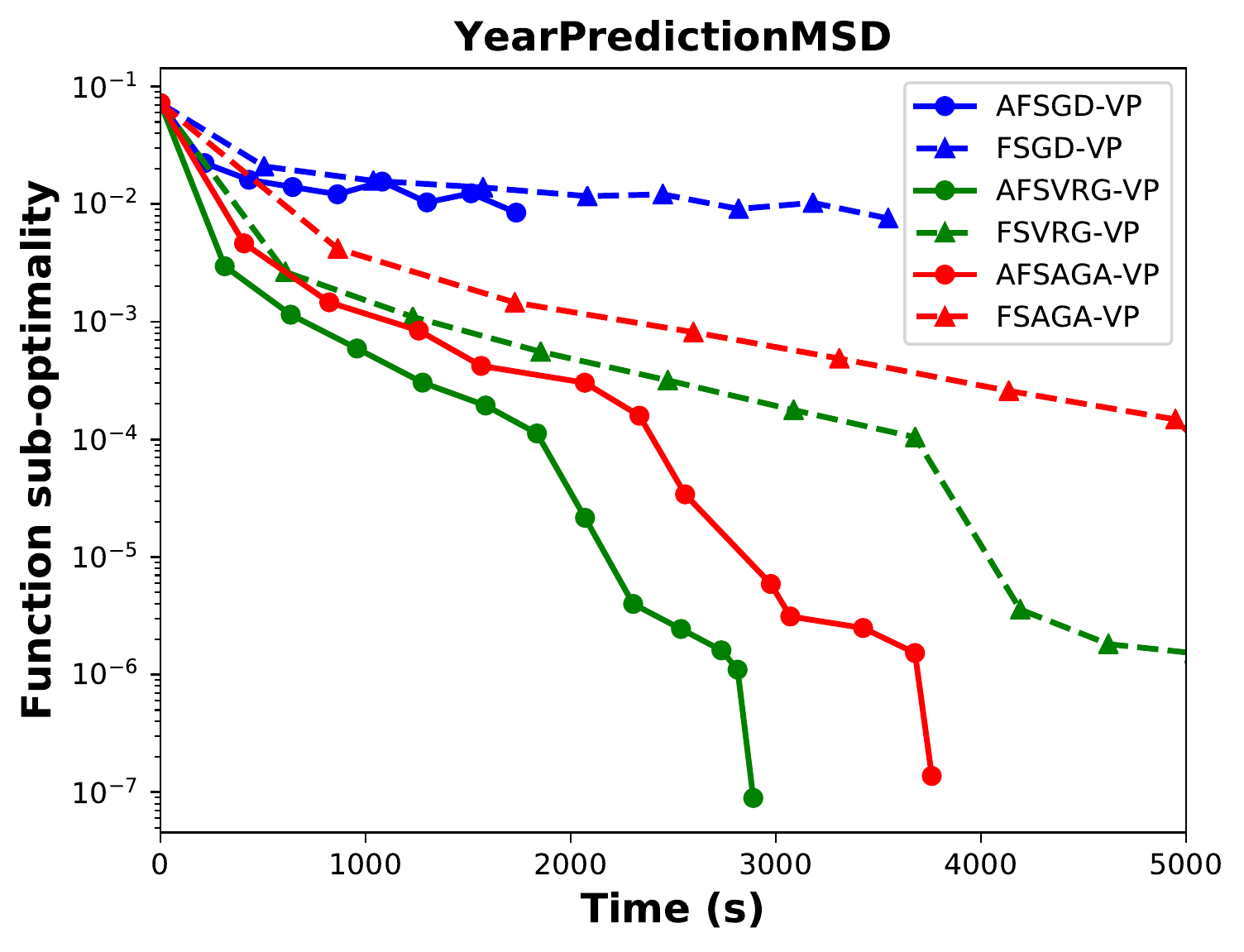}
    \end{subfigure}
\caption{Convergence of different algorithms for  classification  task (left two) and regression  task (right two).}
\label{financial, regression}
\end{figure*}

\begin{figure*}[htbp]
\centering
    \begin{subfigure}[h]{0.24\textwidth}
    \includegraphics[width=\textwidth]{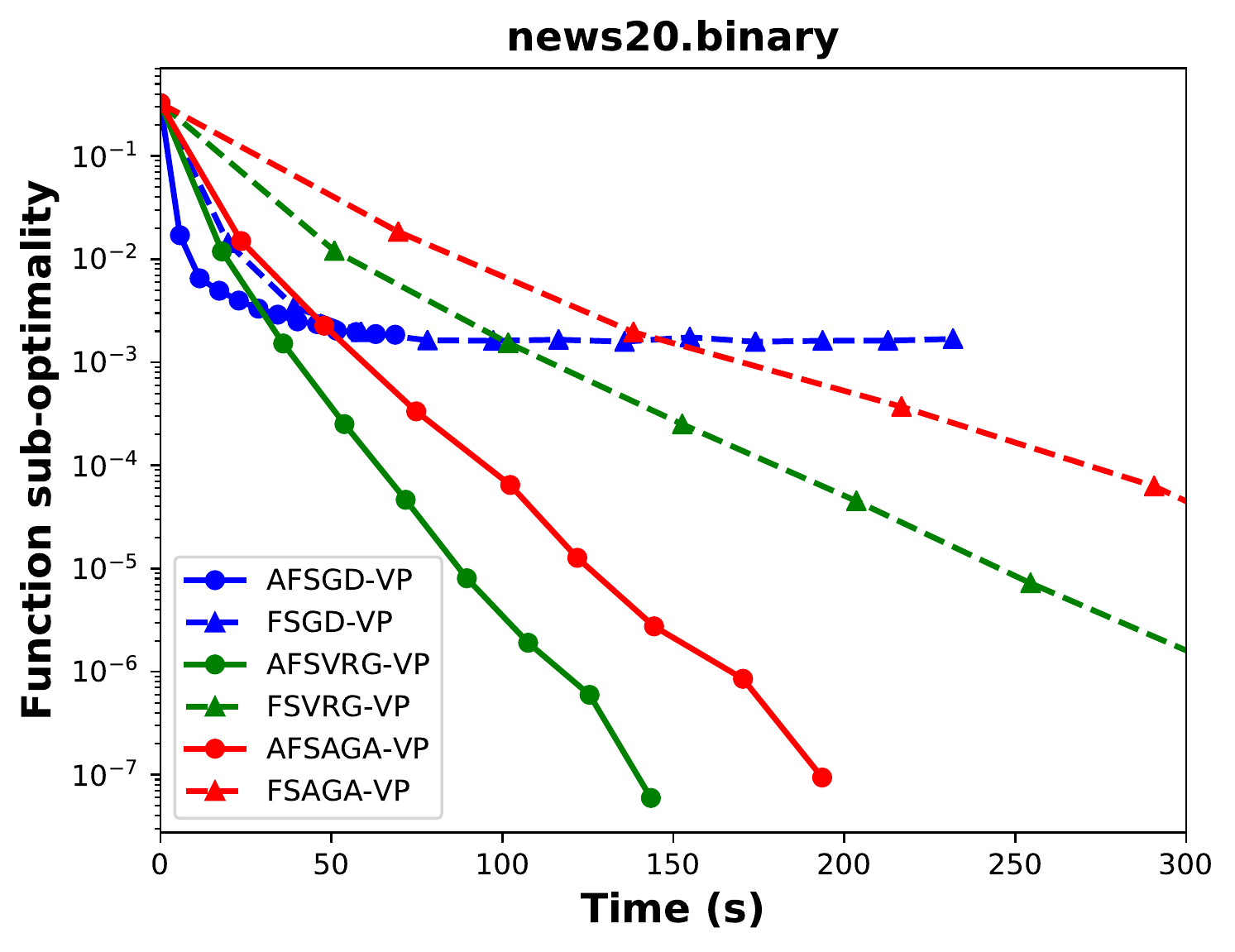}
    \end{subfigure}
    \begin{subfigure}[h]{0.24\textwidth}
    \includegraphics[width=\textwidth]{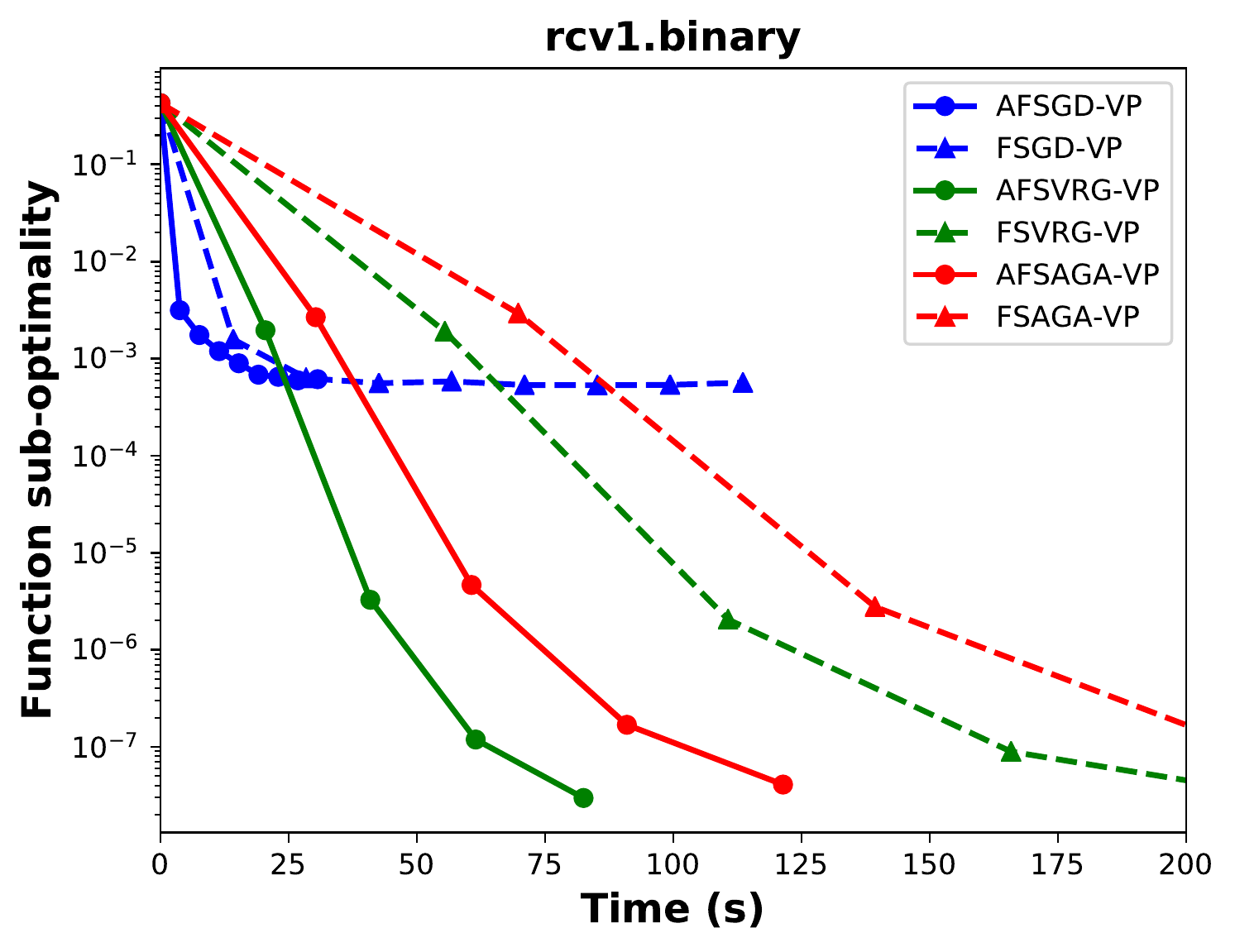}
    \end{subfigure}
    \begin{subfigure}[h]{0.24\textwidth}
    \includegraphics[width=\textwidth]{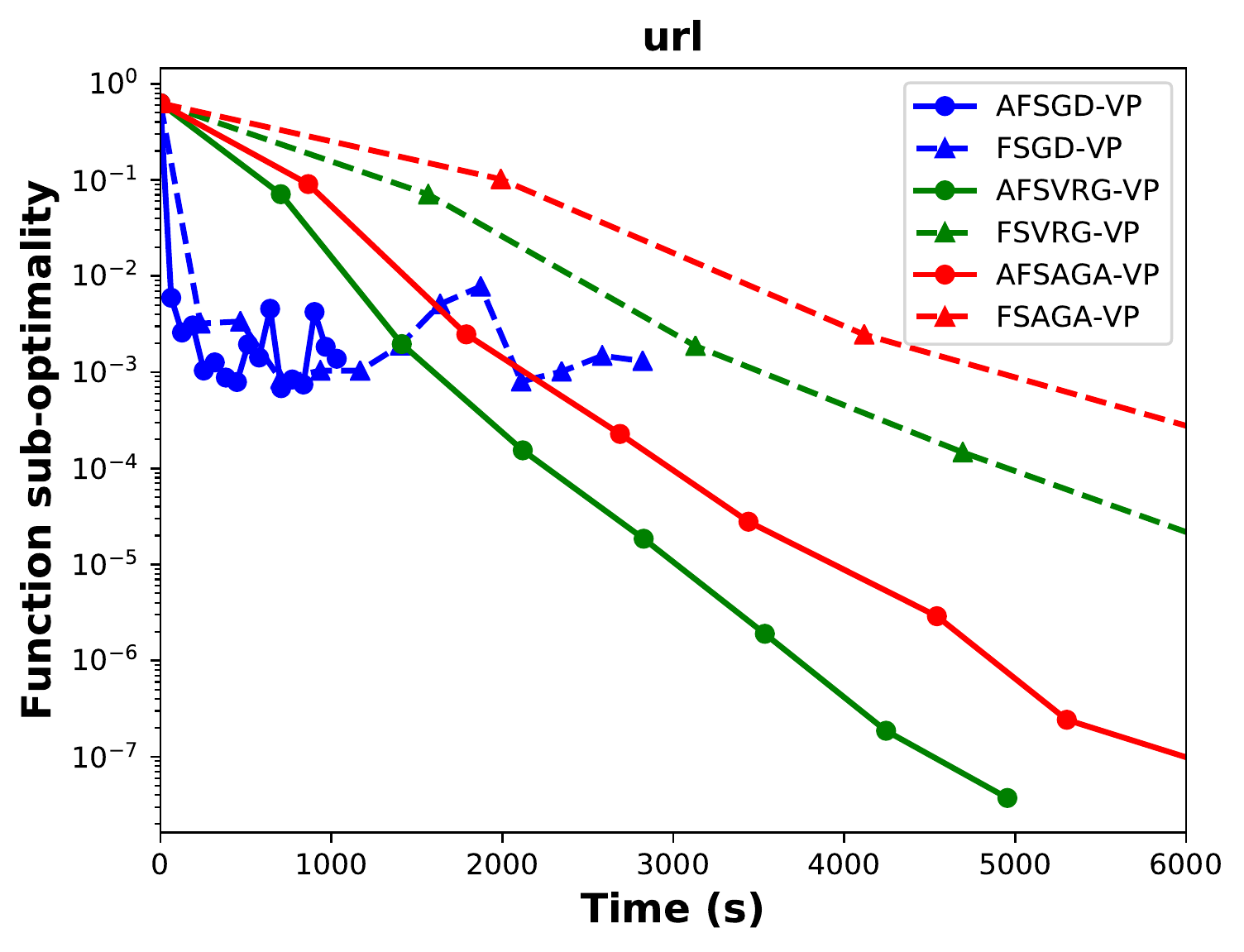}
    \end{subfigure}
    \begin{subfigure}[h]{0.24\textwidth}
    \includegraphics[width=\textwidth]{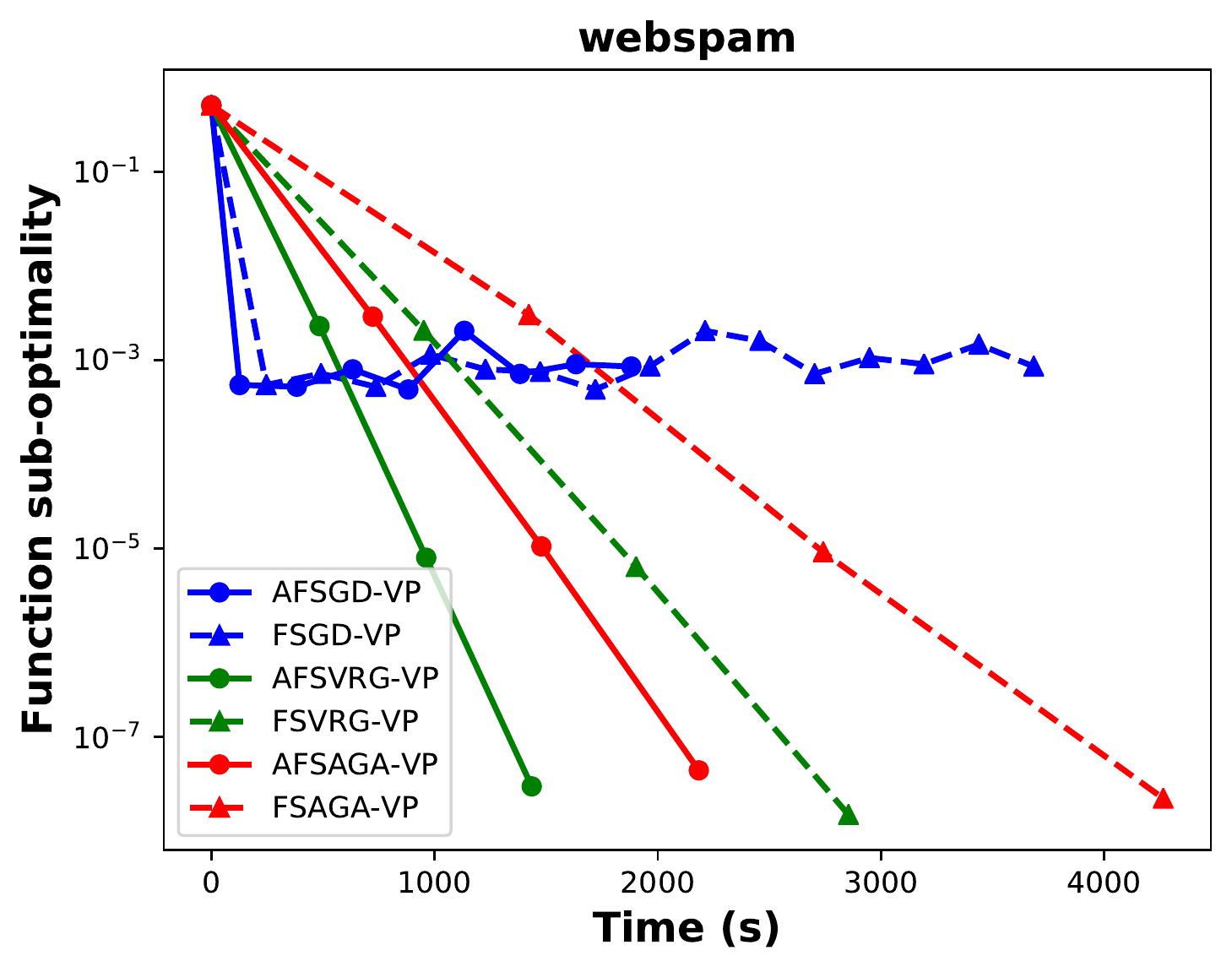}
    \end{subfigure}
\caption{Convergence of different algorithms for binary classification task on more large-scale datasets.}
\label{large scale}
\end{figure*}

\begin{figure*}[htbp]
\centering
    \begin{subfigure}[h]{0.3\textwidth}
    \includegraphics[width=\textwidth]{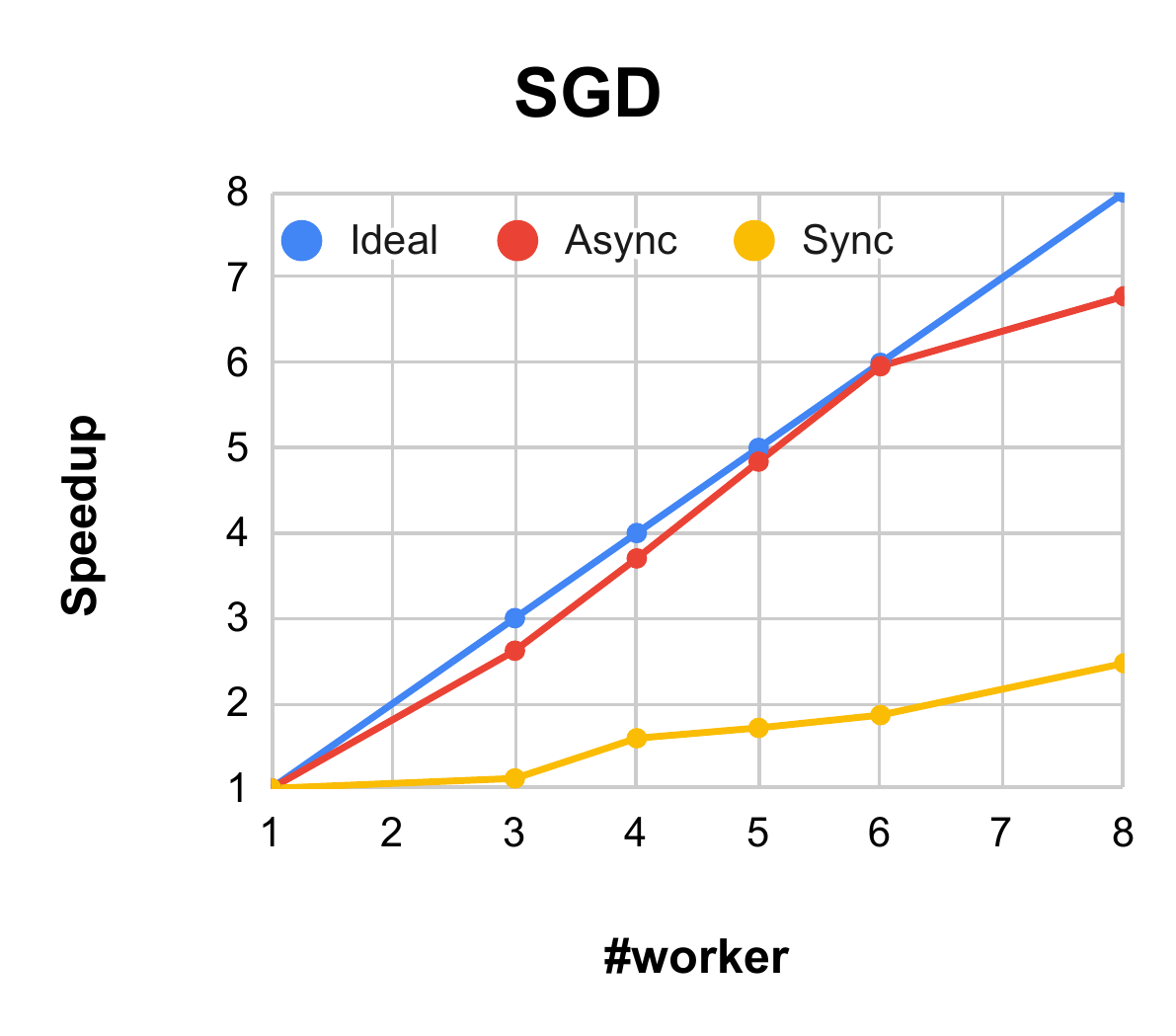}
    \end{subfigure}
    \begin{subfigure}[h]{0.3\textwidth}
    \includegraphics[width=\textwidth]{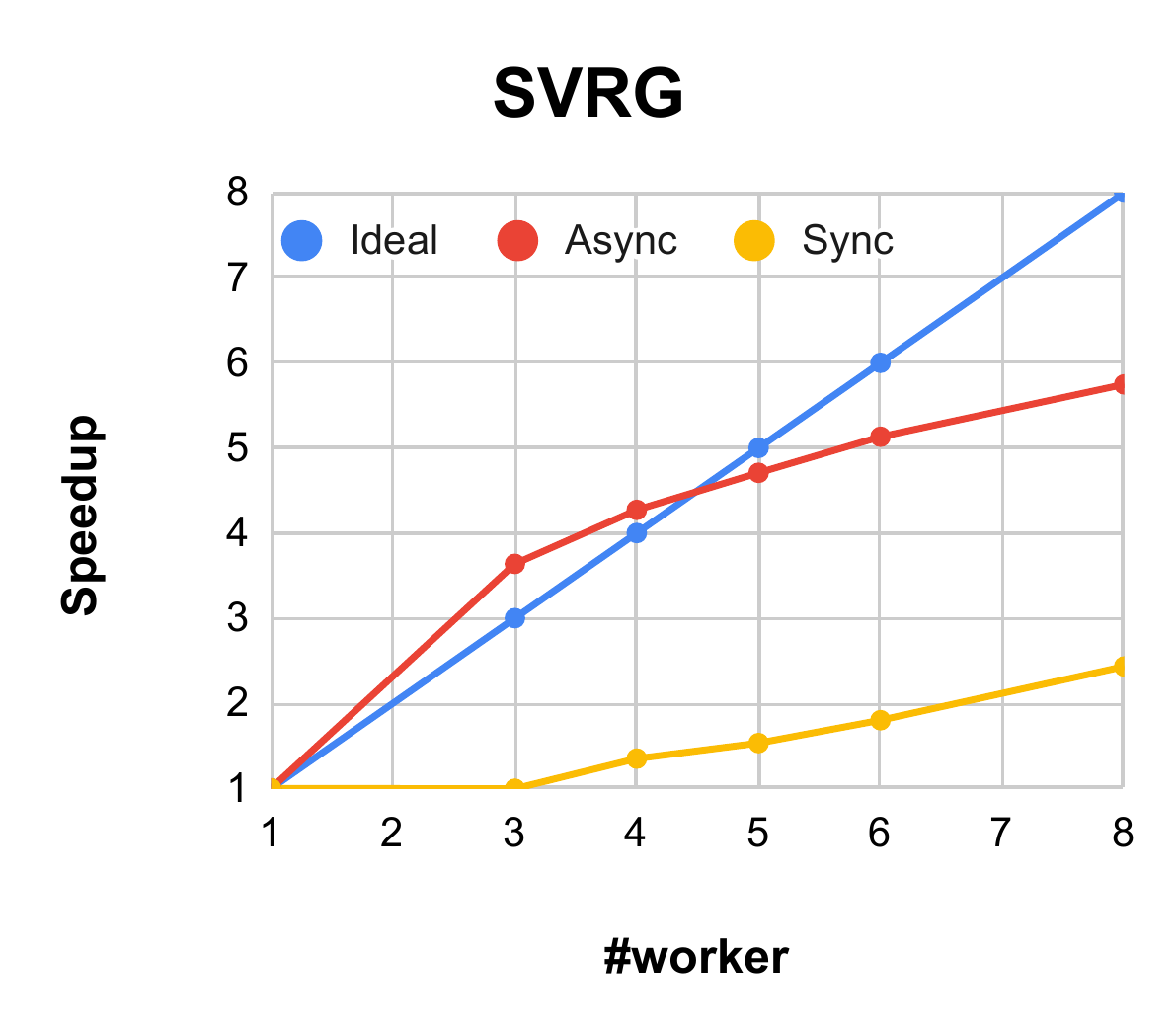}
    \end{subfigure}
    \begin{subfigure}[h]{0.3\textwidth}
    \includegraphics[width=\textwidth]{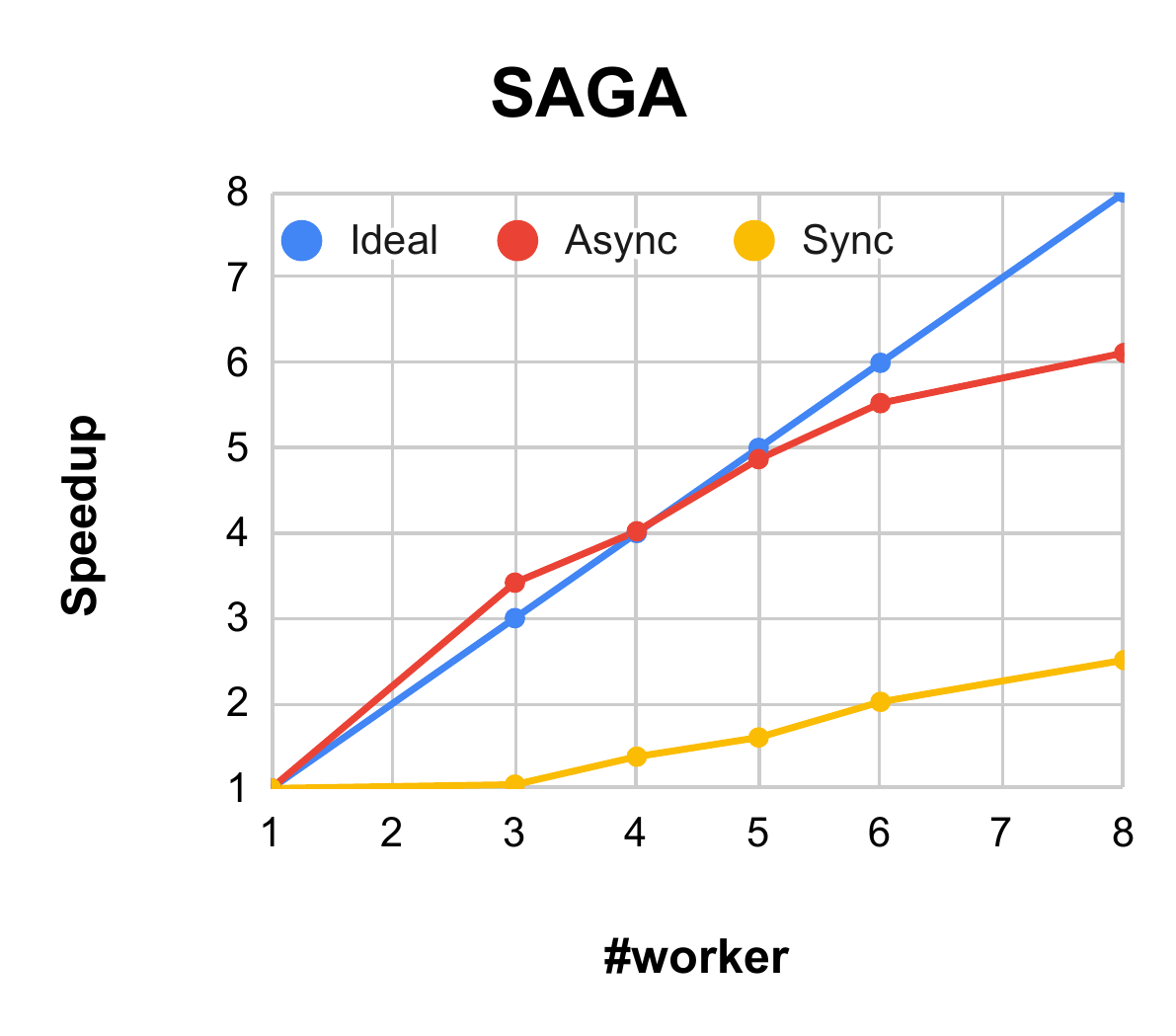}
    \end{subfigure}
\caption{Scalability (url dataset for classification task).}
\label{scalability}
\end{figure*}
\subsubsection{Classification Tasks}
We first compared our asynchronous federated learning algorithm  with synchronous version on financial datasets to demonstrate the ability to address real application. In asynchronous algorithms, each worker saves its local parameters every fixed interval for testing. In the synchronous setting, each worker saves the parameters every fixed number of iterations as all the workers run at the same pace. We follow this scheme for the other experiments.

The original total numbers of features of UCICreditCard and GiveMeSomeCredit dataset are 23 and 10 respectively. We apply one-hot encoding for categorical features and standardize other features column-wisely. The numbers of features become 90 and 92 respectively after the simple data preprocessing.

Four worker nodes are used in this part of the experiment. As shown by the left two figures in Figure \ref{financial, regression}, our asynchronous vertical algorithms consistently surpass their synchronous counterparts. The y-axis function sub-optimality represents the error of objective function to the global optimal. The shape of the convergence curve is firstly determined by the optimization method we choose, \emph{i.e.}, SGD, SVRG and SAGA. The error precision of SGD is usually higher than SVRG, while that of SAGA is similar to SVRG. Then the convergence speed is mostly influenced by the computation and communication complexity. In asynchronous settings there is no inefficient idle time to wait for other workers, so the update frequency is much higher, which results in faster convergence speed of our asynchronous algorithm with regard to wall clock time.

Previous experiments show that our asynchronous federated learning algorithms could address real financial problems more efficiently. In this part we will use large-scale benchmark datasets, \emph{i.e.} large number of data instances and high-dimensional features, for further validations. In our experiments, 8 worker nodes are used for experiments on new20 and rcv1 datasets; 16 worker nodes are used for experiments on url and webspam datasets. The results are visualized in Figure \ref{large scale}. As the total computation budget grows, the speedup of the asynchronous algorithm becomes more obvious. So it will be be much more efficient when put into large-scale practical use. Our asynchronous SGD, SVRG and SAGA still surpass their synchronous counterparts in the experiments on all the four datasets.

\subsubsection{Regression Tasks}
To further illustrate the advantages of asynchronous algorithms can scale to various tasks, we also conduct experiments on regression problems as shown by the right two figures in Figure \ref{financial, regression}. Both the E20060-tfidf with a smaller number of data instances but a larger number of features, and the YearPredictionMSD with larger number of instances but a smaller number of features are tested. 4 worker nodes are used in this experiment and similar conclusions as previous can be reached.

\subsubsection{Asynchronous Efficiency}

\begin{table}[htbp]
\caption{Asynchronous speedup.}
\centering
    \setlength{\tabcolsep}{4mm}
    \begin{tabular}{cccc}
    \toprule
   \multicolumn{1}{c}{ \multirow{2.5}{*}{Dataset} }
   & \multicolumn{3}{c}{Speedup}\\
    \cmidrule(lr){2-4}
    & SGD & SVRG & SAGA \\
    \midrule
    UCICreditCard & 1.82 & 1.93 & 1.95\\
    GiveMeSomeCredit & 1.89 & 2.15 & 2.11\\
    new20 & 3.37 & 2.84 & 2.89\\
    rcv1 & 3.70 & 2.64 & 2.28\\
    url & 2.74 & 2.51 & 2.38\\
    webspam & 1.96 & 1.97 & 1.99\\
    E2006-tfidf & 3.32 & 2.89 & 2.51\\
    YearPredictionMSD & 2.03 & 2.24 & 2.26\\
    \bottomrule
    \end{tabular}
    \label{speedup}
\end{table}

\begin{figure}[htb]
\centering
    \begin{subfigure}[h]{0.5\textwidth}
    \includegraphics[width=\textwidth]{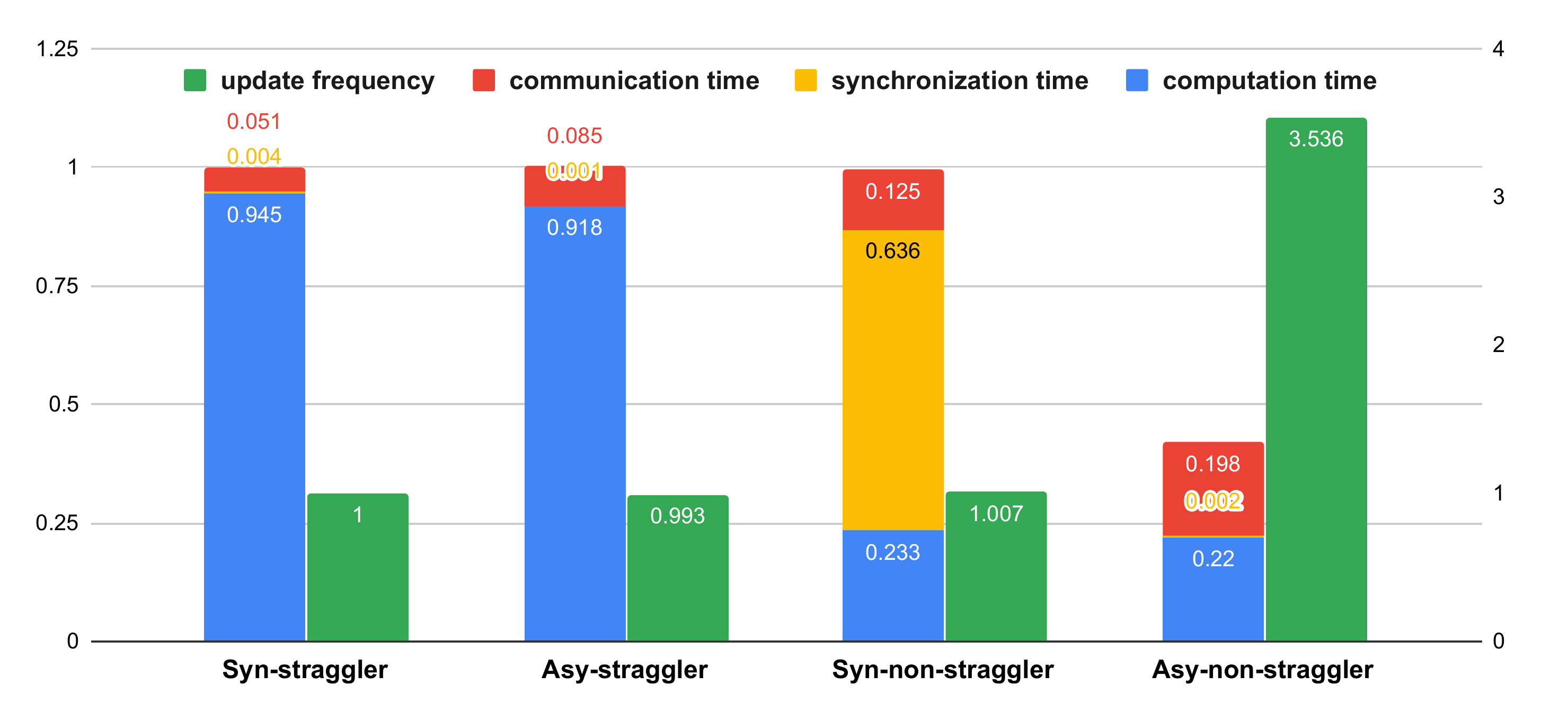}
    \end{subfigure}
\caption{Asynchronous efficiency (8 workers on url dataset for binary classification task).}
\label{efficiency}
\end{figure}
The speedup results of asynchronous algorithms compared with synchronous ones are summarized in Table \ref{speedup}. The speedup is computed based on the time when the algorithm reaches a certain precision of optimality ($1e^{-4}$ for SVRG and SAGA; $1e^{-2.5}$ or $1e^{-1.5}$ for SGD based on different datasets).

To further analyze the efficiency of our asynchronous algorithms, we quantify the composition of the time consumption of asynchronous and synchronous algorithms as in Figure \ref{efficiency}. The execution time and update frequency are scaled by those of the straggler in the synchronous algorithm. The computation time of stragglers is much higher than non-stragglers, which leads to a large amount of synchronization time for non-stragglers in synchronous algorithms. While in our asynchronous algorithms, non-stragglers pull the update-to-date product information from stragglers without waiting the straggler to finish its current iteration. As a result, the synchronization time is eliminated. Although the communication cost increases because each worker needs to independently aggregate product from other workers, we can achieve a large gain in terms of the update frequency.
\subsubsection{Scalability}

The scalability in terms of number of workers is shown in Figure \ref{scalability}. Synchronous algorithms cannot address the problem of straggler and behaves poorly. Using synchronization barrier keeps non-stragglers inefficiently waiting for the straggler. Our asynchronous algorithms behave like ideal in the beginning as they can address the straggler problem well, and deviate from ideal when the number of workers continues to grow because the communication overheads will limit the speedup.
\section{Conclusion} \label{section_concluding}
In this paper, we  proposed an  asynchronous  federated  SGD (AFSGD-VP) algorithm and its SVRG and SAGA variants  for vertically partitioned data. To the best of our knowledge,  these algorithms are the first asynchronous  federated learning algorithms for vertically partitioned data. Importantly, we provided the convergence rates of AFSGD-VP and its SVRG and SAGA variants under the condition of  strong convexity for the objective function. We also proved the  model privacy and
data privacy. Extensive experimental results on  a variety of vertically partitioned datasets not only verify the theoretical results of AFSGD-VP and its SVRG and SAGA variants, but also show that our algorithms have much better efficiency than the corresponding synchronous algorithms.

\section*{Appendix}
\subsection*{Appendix A: Proof of Theorem  \ref{theorem1}}
Before proving Theorem \ref{theorem1}, we first provide several basic inequalities in Lemma \ref{AsySPSAGA_lemma3}.
\begin{lemma} \label{AsySPSAGA_lemma3} For  AFSGD-VP, under Assumptions \ref{assumption3} and \ref{assMax},   we have that
\begin{align}\label{AsySPSAGA_lem3_0}
\nonumber  \sum_{u\in K(t)} \| \nabla_{\mathcal{G}_{\psi(u)}} f(w_u) \|^2
 \geq  \frac{1}{2} \sum_{u\in K(t)} \| \nabla_{\mathcal{G}_{\psi(u)}} f(w_t) \|^2
 \\   - \eta_1 \gamma^2 L^2  \sum_{u\in K(t)}   \sum_{v \in \{t,\ldots,u \}}\|  \widehat{v}^{\psi(v)}_v \|^2
\end{align}
\end{lemma}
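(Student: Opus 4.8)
The plan is to compare, term by term over $u\in K(t)$, the block gradient $\nabla_{\mathcal{G}_{\psi(u)}} f(w_u)$ at the running iterate $w_u$ against the block gradient $\nabla_{\mathcal{G}_{\psi(u)}} f(w_t)$ at the anchor iterate $w_t$. First I would invoke the elementary inequality $\|a\|^2 \geq \frac{1}{2}\|b\|^2 - \|a-b\|^2$, which follows from $\|b\|^2 \leq 2\|a\|^2 + 2\|a-b\|^2$, taking $a=\nabla_{\mathcal{G}_{\psi(u)}} f(w_u)$ and $b=\nabla_{\mathcal{G}_{\psi(u)}} f(w_t)$. This produces the main term $\frac{1}{2}\|\nabla_{\mathcal{G}_{\psi(u)}} f(w_t)\|^2$, so it remains only to control the discrepancy $\|\nabla_{\mathcal{G}_{\psi(u)}} f(w_u) - \nabla_{\mathcal{G}_{\psi(u)}} f(w_t)\|^2$ by the stated right-hand side.

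Next I would bound this discrepancy through the displacement $\|w_u - w_t\|^2$. Since a block gradient is a sub-vector of the full gradient, Assumption \ref{assumption3} gives $\|\nabla_{\mathcal{G}_{\psi(u)}} f(w_u) - \nabla_{\mathcal{G}_{\psi(u)}} f(w_t)\| \leq \|\nabla f(w_u) - \nabla f(w_t)\| \leq L\|w_u - w_t\|$. To evaluate $w_u - w_t$ I would telescope the global update rule (\ref{EqGlobalUpdate}): because $u \in K(t)$ and $K(t)$ is a block of successive counters starting at $t$, every intermediate counter also lies in $K(t)$, and $w_u - w_t = -\gamma \sum_{v=t}^{u-1} \textbf{U}_{\psi(v)} \widehat{v}_v^{\psi(v)}$.

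The main obstacle is bounding $\left\| \sum_{v=t}^{u-1} \textbf{U}_{\psi(v)} \widehat{v}_v^{\psi(v)} \right\|^2$ without paying a factor that grows with the window length $|\{t,\ldots,u-1\}|$: a naive Cauchy–Schwarz over all $v$ would introduce that window size, which can far exceed $\eta_1$. Instead I would split by block. Since the matrices $\textbf{U}_\ell$ have mutually orthogonal ranges, the squared norm decomposes as $\sum_{\ell=1}^q \left\| \sum_{v \in \psi^{-1}(\ell,\{t,\ldots,u-1\})} \widehat{v}_v^{\ell} \right\|^2$, and within each block I apply $\left\|\sum_{i=1}^{n} a_i\right\|^2 \leq n \sum_{i=1}^{n} \|a_i\|^2$ with $n = |\psi^{-1}(\ell,\{t,\ldots,u-1\})|$. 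Here is exactly where the hypothesis enters: $|\psi^{-1}(\ell,\{t,\ldots,u-1\})| \leq |\psi^{-1}(\ell,K(t))| \leq \eta_1$ by Assumption \ref{assMax}, since $\{t,\ldots,u-1\}\subseteq K(t)$. Recombining the blocks yields $\|w_u - w_t\|^2 \leq \gamma^2 \eta_1 \sum_{v=t}^{u-1}\|\widehat{v}_v^{\psi(v)}\|^2 \leq \gamma^2 \eta_1 \sum_{v \in \{t,\ldots,u\}}\|\widehat{v}_v^{\psi(v)}\|^2$.

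Combining the three ingredients gives, for each $u\in K(t)$, the bound $\|\nabla_{\mathcal{G}_{\psi(u)}} f(w_u) - \nabla_{\mathcal{G}_{\psi(u)}} f(w_t)\|^2 \leq \eta_1 \gamma^2 L^2 \sum_{v \in \{t,\ldots,u\}}\|\widehat{v}_v^{\psi(v)}\|^2$, and summing the resulting per-$u$ inequalities over $u\in K(t)$ reproduces (\ref{AsySPSAGA_lem3_0}). I expect the only delicate point to be the block-wise splitting that converts the ambient window length into the per-worker visit bound $\eta_1$; everything else is a routine application of smoothness and the telescoped update.
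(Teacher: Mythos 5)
Your proof is correct and takes essentially the same route as the paper's: the elementary inequality $\|a\|^2 \geq \frac{1}{2}\|b\|^2 - \|a-b\|^2$, bounding the block-gradient discrepancy by the full-gradient Lipschitz bound $L\|w_u - w_t\|$, telescoping the global update rule, applying the $\eta_1$-weighted bound to the telescoped sum, and summing over $u \in K(t)$. The only difference is one of detail: you explicitly justify the step $\bigl\| \sum_{v} \textbf{U}_{\psi(v)} \widehat{v}^{\psi(v)}_v \bigr\|^2 \leq \eta_1 \sum_{v} \| \widehat{v}^{\psi(v)}_v \|^2$ via the orthogonal-range decomposition across blocks (so that the per-worker visit bound $\eta_1$, not the window length, appears), whereas the paper asserts this inequality in one line citing Assumption \ref{assMax}.
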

\begin{proof} For any $u \in K(t)$, we have that
\begin{align}\label{eqLem1_1}
  \| \nabla_{\mathcal{G}_{\psi(u)}} f(w_t) \|^2
 \leq& \left ( \| \nabla_{\mathcal{G}_{\psi(u)}} f(w_t) - \nabla_{\mathcal{G}_{\psi(u)}} f(w_u) \| +  \| \nabla_{\mathcal{G}_{\psi(u)}} f(w_u) \| \right)^2
\\ \nonumber \leq& 2 \| \nabla_{\mathcal{G}_{\psi(u)}} f(w_t) - \nabla_{\mathcal{G}_{\psi(u)}} f(w_u) \|^2 +  2 \| \nabla_{\mathcal{G}_{\psi(u)}} f(w_u) \|^2
\\ \nonumber \leq& 2  \| \nabla f(w_t) - \nabla f(w_u) \|^2 +  2 \| \nabla_{\mathcal{G}_{\psi(u)}} f(w_u) \|^2
\\ \nonumber \stackrel{ (a) }{\leq} & 2 L^2 \| w_t - w_u \|^2 +  2 \| \nabla_{\mathcal{G}_{\psi(u)}} f(w_u) \|^2
\\ \nonumber =& 2 L^2 \gamma^2 \| \sum_{v \in \{t,\ldots,u \}} \textbf{U}_{\psi(v)} \widehat{v}^{\psi(v)}_v  \|^2 +  2 \| \nabla_{\mathcal{G}_{\psi(u)}} f(w_u) \|^2
\\ \nonumber \leq & 2 \eta_1 L^2 \gamma^2 \sum_{v \in \{t,\ldots,u \}} \| \widehat{v}^{\psi(v)}_v \|^2 +  2 \| \nabla_{\mathcal{G}_{\psi(u)}} f(w_u) \|^2
\end{align}
where the inequality (a) uses Assumption \ref{assumption3},  the last inequality uses Assumption \ref{assMax}.
According to (\ref{eqLem1_1}), we have that
\begin{align}\label{eqLem1_2}
\| \nabla_{\mathcal{G}_{\psi(u)}} f(w_u) \|^2
\geq  \frac{1}{2}\| \nabla_{\mathcal{G}_{\psi(u)}} f(w_t) \|^2 - \eta_1 L^2 \gamma^2 \sum_{v \in \{t,\ldots,u \}} \|  \widehat{v}^{\psi(v)}_v \|^2
\end{align}

Summing (\ref{eqLem1_2}) over all $u \in K(t)$, we obtain the conclusion.
This completes the proof.
\end{proof}
Based on the basic inequalities in Lemma \ref{AsySPSAGA_lemma3}, we provide the proof of Theorem \ref{theorem2} in the following.
\newtheorem{refproof}{Proof}

\begin{proof}
 For $u \in K(t)$, we have that
\begin{align}\label{EqThm1_1}
 \mathbb{E} f (w_{u+1})
 \stackrel{ (a) }{\leq}&  \mathbb{E}  ( f (w_{u}) + \langle \nabla f(w_{u}), w_{u+1}-w_{u}  \rangle + \frac{L_{\psi(u)}}{2} \|w_{u+1}-w_{u}   \|^2   )
\\ \nonumber =&  \mathbb{E}  ( f (w_{u}) -  \gamma \langle \nabla f(w_{u}),  \widehat{v}^{\psi(u)}_u  \rangle + \frac{L_{\psi(u)} \gamma^2}{2} \|  \widehat{v}^{\psi(u)}_u  \|^2   )
\\ \nonumber =&  \mathbb{E} \left ( f (w_{u}) + \frac{L_{\psi(u)} \gamma^2}{2} \|  \widehat{v}^{\psi(u)}_u  \|^2    -  \gamma \langle \nabla f(w_{u}),  \widehat{v}^{\psi(u)}_u - \nabla_{\mathcal{G}_{\psi(u)}} f_{i_u} ({w}_u) + \nabla_{\mathcal{G}_{\psi(u)}} f_{i_u} ({w}_u) \rangle  \right )
\\ \nonumber =&  \mathbb{E}  f (w_{u}) -  \gamma \mathbb{E} \langle \nabla f(w_{u}),  \nabla_{\mathcal{G}_{\psi(u)}} f ({w}_u) \rangle
\\ \nonumber & + \gamma \mathbb{E} \langle \nabla f(w_{u}),   \nabla_{\mathcal{G}_{\psi(u)}} f_{i_u} ({w}_u) - \nabla_{\mathcal{G}_{\psi(u)}} f_{i_u} (\widehat{w}_u)  \rangle
 + \frac{L_{\psi(u)} \gamma^2}{2} \mathbb{E} \|  \widehat{v}^{\psi(u)}_u  \|^2
 \\ \nonumber \stackrel{ (b) }{\leq}&  \mathbb{E}  f (w_{u}) -  \gamma \mathbb{E} \|  \nabla_{\mathcal{G}_{\psi(u)}} f ({w}_u) \|^2  + \frac{\gamma}{2} \mathbb{E} \| \nabla_{\mathcal{G}_{\psi(u)}} f ({w}_u) \|^2
 \\ \nonumber & +  \frac{\gamma}{2} \mathbb{E} \| \nabla_{\mathcal{G}_{\psi(u)}} f_{i_u} ({w}_u) - \nabla_{\mathcal{G}_{\psi(u)}} f_{i_u} (\widehat{w}_u) \|^2
 + \frac{L_{\psi(u)} \gamma^2}{2} \mathbb{E} \|  \widehat{v}^{\psi(u)}_u  \|^2
 \\ \nonumber \stackrel{ (c) }{\leq}&  \mathbb{E}  f (w_{u}) -  \frac{\gamma}{2} \mathbb{E} \|  \nabla_{\mathcal{G}_{\psi(u)}} f ({w}_u) \|^2
 +  \frac{\gamma L^2}{2} \mathbb{E} \| {w}_u - \widehat{w}_u \|^2
 + \frac{L_{\psi(u)} \gamma^2}{2} \mathbb{E} \|  \widehat{v}^{\psi(u)}_u  \|^2
  \\ \nonumber \stackrel{ (d) }{\leq}&  \mathbb{E}  f (w_{u}) -  \frac{\gamma}{2} \mathbb{E} \|  \nabla_{\mathcal{G}_{\psi(u)}} f ({w}_u) \|^2  + \frac{L_{\psi(u)} \gamma^2}{2} \mathbb{E} \|  \widehat{v}^{\psi(u)}_u  \|^2
 +  \frac{\eta_2 \gamma L^2 \gamma^2}{2}  \sum_{u' \in D(u)} \mathbb{E} \|   \textbf{U}_{\psi(u')} \widehat{v}^{\psi(u')}_{u'} \|^2
 \\ \nonumber \leq&  \mathbb{E}  f (w_{u}) -  \frac{\gamma}{2} \mathbb{E} \|  \nabla_{\mathcal{G}_{\psi(u)}} f ({w}_u) \|^2
 +  \frac{\eta_2 \gamma L^2 \gamma^2}{2}  \sum_{u' \in D(u)} \mathbb{E} \|   \widehat{v}^{\psi(u')}_{u'} \|^2 + \frac{L_{\psi(u)} \gamma^2}{2} \mathbb{E} \|  \widehat{v}^{\psi(u)}_u  \|^2
 \end{align}
where the  inequalities (a) and (c) use Assumption \ref{assumption3}, the inequality (b) uses the fact of $\langle a,b \rangle\leq \frac{1}{2}(\|a\|^2+\|b\|^2)$, the inequality (d) uses (\ref{pi_test1}).

Summing  (\ref{EqThm1_1}) over all $ u \in K(t)$, we obtain
\begin{align}\label{EqThm1_2}
& \mathbb{E} f (w_{t+|K(t)|}) - \mathbb{E}f (w_{t})
\\ \nonumber \leq&   -  \frac{\gamma}{2} \sum_{u \in K(t)} \mathbb{E} \|  \nabla_{\mathcal{G}_{\psi(u)}} f ({w}_u) \|^2   + \frac{L_{\max} \gamma^2}{2} \sum_{u \in K(t)}\mathbb{E} \|  \widehat{v}^{\psi(u)}_u  \|^2
 +  \frac{\eta_2 \gamma L^2 \gamma^2}{2}  \sum_{u \in K(t)}\sum_{u' \in D(u)} \mathbb{E} \|   \widehat{v}^{\psi(u')}_{u'} \|^2
\\ \nonumber \stackrel{ (a) }{\leq} &   -  \frac{\gamma}{2} \left ( \frac{1}{2} \sum_{u\in K(t)} \| \nabla_{\mathcal{G}_{\psi(u)}} f(w_t) \|^2   - \eta_1 \gamma^2 L^2  \sum_{u\in K(t)}   \sum_{v \in \{t,\ldots,u \}}\|\widehat{v}^{\psi(v)}_v \|^2 \right )
\\ \nonumber &  +  \frac{\eta_2 \gamma L^2 \gamma^2}{2}  \sum_{u \in K(t)}\sum_{u' \in D(u)} \mathbb{E} \|   \widehat{v}^{\psi(u')}_{u'} \|^2
+ \frac{L_{\max} \gamma^2}{2} \sum_{u \in K(t)}\mathbb{E} \|  \widehat{v}^{\psi(u)}_u  \|^2
 \\ \nonumber  = &   -  \frac{\gamma}{4} \sum_{u\in K(t)} \| \nabla_{\mathcal{G}_{\psi(u)}} f(w_t) \|^2  + \frac{L_{\max} \gamma^2}{2} \sum_{u \in K(t)}\mathbb{E} \|  \widehat{v}^{\psi(u)}_u  \|^2
 \\ & \nonumber + \frac{\eta_1 \gamma^3 L^2 }{2} \sum_{u\in K(t)}   \sum_{v \in \{t,\ldots,u \}}\|\widehat{v}^{\psi(v)}_v \|^2
  +  \frac{\eta_2 \gamma L^2 \gamma^2}{2}  \sum_{u \in K(t)}\sum_{u' \in D(u)} \mathbb{E} \|   \widehat{v}^{\psi(u')}_{u'} \|^2
  \\ \nonumber  \leq &   -  \frac{\gamma}{4}  \| \nabla f(w_t) \|^2 + \frac{\eta_1 \gamma^3 L^2 }{2} \sum_{u\in K(t)}   \sum_{v \in \{t,\ldots,u \}}\|\widehat{v}^{\psi(v)}_v \|^2
\\ \nonumber &  +  \frac{\eta_2 \gamma L^2 \gamma^2}{2}  \sum_{u \in K(t)}\sum_{u' \in D(u)} \mathbb{E} \|   \widehat{v}^{\psi(u')}_{u'} \|^2
+ \frac{L_{\max} \gamma^2}{b} \sum_{u \in K(t)}\mathbb{E} \|  \widehat{v}^{\psi(u)}_u  \|^2
   \\ \nonumber  \stackrel{ (b) }{\leq} &   -  \frac{\gamma \mu }{2}  \left (  f(w_t)-f(w^*) \right )
 + \left (\frac{\eta_1 \gamma^3 L^2 (q\eta_1)^2 }{2}+ \frac{\eta_2 \gamma L^2 \gamma^2 q\eta_1 \tau}{2} +  \frac{L_{\max} \gamma^2 q \eta_1}{2}  \right ) G
\\ \nonumber  = &   -  \frac{\gamma \mu }{2}  \left (  f(w_t)-f(w^*) \right )
 +  \underbrace{\frac{\eta_1 \gamma^2  q \left (  \gamma L^2 q \eta_1^2  +  \eta_2 \gamma L^2 \tau  +  L_{\max}  \right ) G}{2}}_{C}
\end{align}
where the  inequality (a) uses Lemma \ref{AsySPSAGA_lemma3}, the  inequality (b) uses Assumptions \ref{assumption4} and \ref{assumption1}.
According to (\ref{EqThm1_2}), we have that
\begin{align}\label{EqThm1_3}
 \mathbb{E} f (w_{t+|K(t)|}) -f(w^*)\leq  \left (1-  \frac{\gamma \mu }{2} \right ) \left (  f(w_t)-f(w^*) \right ) +  C
\end{align}
Assume that $\cup_{\kappa \in P(t)}=\{0,1,\ldots,t\}$, applying  (\ref{EqThm1_3}), we have that
\begin{align}\label{EqThm1_4}
\mathbb{E} f (w_{t}) -f(w^*)
  \leq & \left (1-  \frac{\gamma \mu }{2} \right )^{\upsilon(t)} \left (  f(w_0)-f(w^*) \right ) +  C \sum_{i=0}^{\upsilon(t)}\left (1-  \frac{\gamma \mu }{2} \right )^i
\\ \nonumber  \leq & \left (1-  \frac{\gamma \mu }{2} \right )^{\upsilon(t)} \left (  f(w_0)-f(w^*) \right ) +  C \sum_{i=0}^{\infty}\left (1-  \frac{\gamma \mu }{2} \right )^i
\\ \nonumber  = & \left (1-  \frac{\gamma \mu }{2} \right )^{\upsilon(t)} \left (  f(w_0)-f(w^*) \right )
 +   \frac{{\gamma \eta_1   q \left (  \gamma L^2 q \eta_1^2  +  \eta_2 \gamma L^2 \tau  +  L_{\max}  \right ) G}}{ \mu }
\end{align}
Let $ \frac{{\gamma \eta_1   q \left (  \gamma L^2 q \eta_1^2  +  \eta_2 \gamma L^2 \tau  +  L_{\max}  \right ) G}}{ \mu } \leq \frac{\epsilon}{2}$, we have that $\gamma \leq  \frac{-   L_{\max}   + \sqrt{  L_{\max}^2  + \frac{{2 \mu \epsilon  (L^2 q \eta_1^2  + \eta_2  L^2 \tau )}}{G\eta_1   q}} }{2 L^2 (q \eta_1^2  + \eta_2   \tau )}$.

Let $\left (1-  \frac{\gamma \mu }{2} \right )^{\upsilon(t)} \left (  f(w_0)-f(w^*) \right ) \leq \frac{\epsilon}{2}$, we have that
\begin{align}\label{EqThm1_5}
\log\left ( \frac{2 \left (  f(w_0)-f(w^*) \right )}{\epsilon} \right ) \leq \upsilon(t) \log \left ( \frac{1}{1-  \frac{\gamma \mu }{2} } \right )
\end{align}
Because $\log \left ( \frac{1}{\rho} \right ) \geq 1 -\rho$ for $0<\rho \leq 1$, we have that
\begin{align}\label{EqThm1_5}
\upsilon(t) \geq& \frac{2}{\gamma \mu } \log\left ( \frac{2 \left (  f(w_0)-f(w^*) \right )}{\epsilon} \right )
\\ \nonumber \geq& \frac{2}{ \mu } \frac{2 L^2 (q \eta_1^2  + \eta_2   \tau )}{-   L_{\max}   + \sqrt{  L_{\max}^2  + \frac{{2 \mu \epsilon  (L^2 q \eta_1^2  + \eta_2  L^2 \tau )}}{G\eta_1   q}} } \cdot
  \log\left ( \frac{2 \left (  f(w_0)-f(w^*) \right )}{\epsilon} \right )
\end{align}
This completes the proof.
\end{proof}

\subsection*{Appendix B: Proof of Theorem  \ref{theorem2}}

Before proving Theorem \ref{theorem2}, we first provide  Lemma \ref{AsySCGD+_lemma1} to provides an upper bound to $ \mathbb{E}  \left \|   \widehat{v}^{\ell}_u \right \|^2$.
\begin{lemma}\label{AsySCGD+_lemma1} For  AFSVRG-VP, under Assumptions \ref{assumption4} and \ref{assumption3},  let $u \in K(t)$,  we have that
\begin{align}
\label{AsySCGD+_lem3_0_1}
 \mathbb{E}  \left \|   \widehat{v}^{\ell}_u \right \|^2
 \leq &   \frac{16 L^2}{\mu}\mathbb{E} (f(w^s_t)-f(w^*))  +  \frac{8 L^2}{\mu} \mathbb{E} (f(w^s)-f(w^*))
\\ \nonumber    &+ 4 L^2 \gamma^2 \eta_1 \sum_{v \in \{t,\ldots,u \}} \mathbb{E} \left  \|   \widehat{v}^{\psi(v)}_v\right \|^2
 + 2 \eta_2 L^2 \gamma^2 \mathbb{E}  \sum_{u' \in D(u)}   \left \|    \widehat{v}^{\psi(u')}_{u'}  \right  \|^2
\end{align}
\end{lemma}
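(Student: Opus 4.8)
The plan is to decompose the inconsistent stochastic local gradient $\widehat{v}^{\ell}_u$ (with $\ell=\psi(u)$, whose only randomness sits in the sampled index $i_u$) into a ``standard'' variance-reduced SVRG direction anchored at the labelled iterate $w_t\equiv w^s_t$ plus two asynchronous error terms, and then bound each piece separately. Concretely, I would insert the consistent iterate $w_u$ and the anchor $w_t$ and write
$$
\widehat{v}^{\ell}_u
= \underbrace{\left( \nabla_{\mathcal{G}_\ell} f_{i_u}(\widehat{w}_u) - \nabla_{\mathcal{G}_\ell} f_{i_u}(w_u) \right)}_{P}
+ \underbrace{\left( \nabla_{\mathcal{G}_\ell} f_{i_u}(w_u) - \nabla_{\mathcal{G}_\ell} f_{i_u}(w_t) \right)}_{R}
+ \underbrace{\left( \nabla_{\mathcal{G}_\ell} f_{i_u}(w_t) - \nabla_{\mathcal{G}_\ell} f_{i_u}(w^s) + \nabla_{\mathcal{G}_\ell} f(w^s) \right)}_{g_t},
$$
where $P$ measures the inconsistency between the read $\widehat{w}_u$ and the true iterate $w_u$, $R$ measures the staleness between $w_u$ and the anchor $w_t$, and $g_t$ is exactly the SVRG gradient evaluated at $w_t$. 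Applying $\|a+b+c\|^2\le 2\|a\|^2+4\|b\|^2+4\|c\|^2$ reduces the task to bounding $\mathbb{E}\|P\|^2$, $\mathbb{E}\|R\|^2$ and $\mathbb{E}\|g_t\|^2$.

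For the error terms $P$ and $R$ I would use the full Lipschitz smoothness of Assumption \ref{assumption3} to pass from gradient differences to iterate differences, $\|P\|\le L\|\widehat{w}_u-w_u\|$ and $\|R\|\le L\|w_u-w_t\|$, and then substitute the two explicit representations of these displacements. The consistency gap is $\widehat{w}_u-w_u=\gamma\sum_{u'\in D(u)}\textbf{U}_{\psi(u')}\widehat{v}^{\psi(u')}_{u'}$ by (\ref{pi_test1}), while the staleness gap telescopes through the global update rule (\ref{EqGlobalUpdate}) into $w_u-w_t=-\gamma\sum_{v\in\{t,\ldots,u-1\}}\textbf{U}_{\psi(v)}\widehat{v}^{\psi(v)}_v$. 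Grouping the summands by the worker they update and applying Cauchy--Schwarz converts each squared sum into a plain sum of squared updates, where the cardinality bounds $|\psi^{-1}(\ell,D(u))|\le\eta_2$ (Assumption \ref{ass5}) and $|\psi^{-1}(\ell,K(t))|\le\eta_1$ (Assumption \ref{assMax}) supply the factors $\eta_2$ and $\eta_1$. This yields $\mathbb{E}\|P\|^2\le L^2\gamma^2\eta_2\sum_{u'\in D(u)}\mathbb{E}\|\widehat{v}^{\psi(u')}_{u'}\|^2$ and $\mathbb{E}\|R\|^2\le L^2\gamma^2\eta_1\sum_{v\in\{t,\ldots,u\}}\mathbb{E}\|\widehat{v}^{\psi(v)}_v\|^2$, which after the weights $2$ and $4$ reproduce exactly the last two terms of the claimed bound.

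For the variance-reduced term $g_t$ I would exploit that $i_u$ is sampled uniformly and independently of $w_t$ and $w^s$ (a consequence of the ``after communication'' labelling), so that $\mathbb{E}_{i_u}[\nabla_{\mathcal{G}_\ell} f_{i_u}(w^s)]=\nabla_{\mathcal{G}_\ell} f(w^s)$ and $g_t$ is, after centering, a variance-reduced estimator of $\nabla_{\mathcal{G}_\ell} f(w_t)$. Splitting $g_t$ into the term $\nabla_{\mathcal{G}_\ell} f_{i_u}(w_t)-\nabla_{\mathcal{G}_\ell} f_{i_u}(w^*)$ and a zero-mean correction built from $w^s$, bounding each squared block-gradient difference by the squared full-gradient difference and then by $L^2\|\cdot-w^*\|^2$ via Assumption \ref{assumption3}, and finally converting $\|w_t-w^*\|^2$ and $\|w^s-w^*\|^2$ into function-value gaps through the $\mu$-strong convexity inequality $\|w-w^*\|^2\le\frac{2}{\mu}(f(w)-f(w^*))$ (Assumption \ref{assumption4}), produces a bound of the form $\frac{cL^2}{\mu}(f(w_t)-f(w^*))+\frac{c'L^2}{\mu}(f(w^s)-f(w^*))$ with small integer $c,c'$; the overall weight $4$ on $g_t$ then yields the stated coefficients $\frac{16L^2}{\mu}$ and $\frac{8L^2}{\mu}$.

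I expect the SVRG variance step to be the main obstacle: one must keep the anchor $w_t$ (not the stale read $\widehat{w}_u$ or the moving $w_u$) inside $g_t$ so that strong convexity applies cleanly, and one must route the staleness and inconsistency strictly through $P$ and $R$, otherwise the two displacement sums and the two function-value gaps become entangled and the coefficients $16,8,4\eta_1,2\eta_2$ no longer separate. Tracking the exact numerical constants — in particular which applications of $\|a+b\|^2\le 2\|a\|^2+2\|b\|^2$ and of the mean-subtraction variance bound feed each coefficient — is the delicate bookkeeping, whereas the analytic ingredients (Lipschitz smoothness, strong convexity, Cauchy--Schwarz, and the two cardinality assumptions) are precisely those already deployed in Lemma \ref{AsySPSAGA_lemma3}.
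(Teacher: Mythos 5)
Your three-way decomposition $\widehat{v}^{\ell}_u=P+R+g_t$ telescopes correctly, and your treatment of the two asynchrony pieces is sound and uses exactly the paper's ingredients: weight $2$ on $\|P\|^2$ gives $2\eta_2 L^2\gamma^2\sum_{u'\in D(u)}\mathbb{E}\|\widehat{v}^{\psi(u')}_{u'}\|^2$, and weight $4$ on $\|R\|^2$ gives $4L^2\gamma^2\eta_1\sum_{v\in\{t,\ldots,u\}}\mathbb{E}\|\widehat{v}^{\psi(v)}_v\|^2$. The gap is in the $g_t$ term: your route does not deliver the stated coefficient $\frac{8L^2}{\mu}$ on $\mathbb{E}(f(w^s)-f(w^*))$. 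After inserting $\nabla_{\mathcal{G}_\ell}f_{i_u}(w^*)$ into $g_t$, applying $\|a-b\|^2\le 2\|a\|^2+2\|b\|^2$, the variance bound $\mathbb{E}\|x-\mathbb{E}x\|^2\le\mathbb{E}\|x\|^2$, Lipschitz smoothness, and strong convexity, \emph{both} pieces of $g_t$ carry the same factor $2L^2\cdot\frac{2}{\mu}=\frac{4L^2}{\mu}$; multiplying by your outer weight $4$ yields $\frac{16L^2}{\mu}$ on \emph{both} function gaps, i.e.\ you prove a weaker inequality with $\frac{16L^2}{\mu}\mathbb{E}(f(w^s)-f(w^*))$ in place of $\frac{8L^2}{\mu}\mathbb{E}(f(w^s)-f(w^*))$. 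This is not repairable by re-weighting: since no cross-term cancellation is available, any bound $\|P+R+g_t\|^2\le\lambda_P\|P\|^2+\lambda_R\|R\|^2+\lambda_g\|g_t\|^2$ requires $\lambda_P^{-1}+\lambda_R^{-1}+\lambda_g^{-1}\le 1$; matching the stated $P$- and $R$-coefficients forces $\lambda_P\le 2$ and $\lambda_R\le 4$, hence $\lambda_g\ge 4$, while the $w^s$-part inside $\|g_t\|^2$ costs strictly more than $\frac{2L^2}{\mu}$ per unit of $\lambda_g$ (the inner split factor on that piece cannot be brought below $1$), so its total coefficient cannot be pushed down to $\frac{8L^2}{\mu}$ without blowing up the $w_t$ coefficient past $\frac{16L^2}{\mu}$.

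The paper obtains the asymmetric pair $\left(\frac{16L^2}{\mu},\frac{8L^2}{\mu}\right)$ by ordering the insertions differently: it splits off only the inconsistency piece, $\mathbb{E}\|\widehat{v}^{\ell}_u\|^2\le 2\mathbb{E}\|\widehat{v}^{\ell}_u-v^{\ell}_u\|^2+2\mathbb{E}\|v^{\ell}_u\|^2$, where $v^{\ell}_u=R+g_t$ is the \emph{consistent} SVRG direction at $w^s_u$, and inside $\mathbb{E}\|v^{\ell}_u\|^2$ it inserts $w^*$ \emph{first}, at the gradient level, obtaining $2L^2\|w^s_u-w^*\|^2+2L^2\|w^s-w^*\|^2$; only afterwards does it extract the staleness at the iterate level via $\|w^s_u-w^*\|^2\le 2\|w^s_u-w^s_t\|^2+2\|w^s_t-w^*\|^2$. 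In this order the extra factor of $2$ lands only on the $w_t$-gap and the staleness sum, the $w^s$-gap keeps the smaller factor, and the outer $2$ gives $\frac{16L^2}{\mu}$ and $\frac{8L^2}{\mu}$. (As an aside, the constants are delicate even in the paper: doubling its own intermediate bound should produce $8L^2\gamma^2\eta_1$ rather than the stated $4L^2\gamma^2\eta_1$ on the staleness sum --- a term your route actually gets right --- but this does not rescue your $f(w^s)$ coefficient.) To prove the lemma with the stated constant on that term, restructure as the paper does: keep $R$ and $g_t$ together and postpone the staleness split until after $w^*$ has been inserted.
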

\begin{proof} Define ${v}^{\ell}_u= \nabla_{\mathcal{G}_\ell} f_i ({w}_u^s) - \nabla_{\mathcal{G}_\ell} f_i (w^s)
  +  \nabla_{\mathcal{G}_\ell} f(w^s)$. We have that
$
\label{AsySCGD+_lem3_0_2}  \mathbb{E}  \left \|   \widehat{v}^{ \ell }_u \right \|^2 =\mathbb{E}  \left \|   \widehat{v}^{ \ell }_u - {v}^{\ell}_u + {v}^{\ell}_u \right \|^2 \leq 2 \mathbb{E}   \left \|   \widehat{v}^{ \ell }_u - {v}^{\ell}_u \right  \|^2 +   2 \mathbb{E} \left  \| {v}^{\ell}_u \right \|^2
$. Firstly, we give the upper bound to $\mathbb{E} \left  \| {v}^{\ell}_u \right \|^2$ as follows.
\begin{align}\label{Lemma2_1}
& \mathbb{E} \left  \| {v}^{\ell}_u \right \|^2
\\ \nonumber =& \mathbb{E} \left  \|  \nabla_{\mathcal{G}_\ell} f_i ({w}_u^s) - \nabla_{\mathcal{G}_\ell} f_i (w^s)
  +  \nabla_{\mathcal{G}_\ell} f(w^s) \right \|^2
\\ \nonumber =&  \mathbb{E} \left  \|  \nabla_{\mathcal{G}_\ell} f_i ({w}_u^s) -\nabla_{\mathcal{G}_\ell} f_i ({w}^*) - \nabla_{\mathcal{G}_\ell} f_i (w^s)  +\nabla_{\mathcal{G}_\ell} f_i ({w}^*) +  \nabla_{\mathcal{G}_\ell} f(w^s) \right \|^2
 \\ \nonumber \leq&  2 \mathbb{E} \left  \|  \nabla_{\mathcal{G}_\ell} f_i ({w}_u^s) -\nabla_{\mathcal{G}_\ell} f_i ({w}^*)  \right \|^2
  +  2 \mathbb{E} \left  \| \nabla_{\mathcal{G}_\ell} f_i (w^s)
 -\nabla_{\mathcal{G}_\ell} f_i ({w}^*)  -  \nabla_{\mathcal{G}_\ell} f(w^s) +  \nabla_{\mathcal{G}_\ell} f(w^*) \right \|^2
  \\ \nonumber \stackrel{ (a) }{\leq}&  2 \mathbb{E} \left  \|  \nabla_{\mathcal{G}_\ell} f_i ({w}_u^s) -\nabla_{\mathcal{G}_\ell} f_i ({w}^*)  \right \|^2
  +  2 \mathbb{E} \left  \| \nabla_{\mathcal{G}_\ell} f_i (w^s)
 -\nabla_{\mathcal{G}_\ell} f_i ({w}^*) \right \|^2
 \\ \nonumber \stackrel{ (b) }{\leq}&  2 L^2\mathbb{E} \left  \|  {w}_u^s - {w}^*  \right \|^2 +  2 L^2 \mathbb{E} \left  \| w^s
 -{w}^* \right \|^2
  \\ \nonumber =&  2 L^2\mathbb{E} \left  \|  {w}_u^s - {w}_t^s + {w}_t^s - {w}^*  \right \|^2 +  2 L^2 \mathbb{E} \left  \| w^s
 -{w}^* \right \|^2
\\ \nonumber \leq &  4 L^2\mathbb{E} \left  \|  {w}_u^s - {w}_t^s \right \|^2  +  4 L^2\mathbb{E} \left  \|  {w}_t^s - {w}^*  \right \|^2
  +  2 L^2 \mathbb{E} \left  \| w^s
 -{w}^* \right \|^2
 \\ \nonumber \stackrel{ (c) }{=}&  4 L^2 \gamma^2 \mathbb{E} \left  \|  \sum_{v \in \{t,\ldots,u \}} \textbf{U}_{\psi(v)} \widehat{v}^{\psi(v)}_v\right \|^2  +  4 L^2\mathbb{E} \left  \|  {w}_t^s - {w}^*  \right \|^2
 +  2 L^2 \mathbb{E} \left  \| w^s
 -{w}^* \right \|^2
  \\ \nonumber \stackrel{ (d) }{\leq}  &   \frac{8 L^2}{\mu}\mathbb{E} (f(w^s_t)-f(w^*))  +  \frac{4 L^2}{\mu} \mathbb{E} (f(w^s)-f(w^*))
 + 4 L^2 \gamma^2 \eta_1 \sum_{v \in \{t,\ldots,u \}} \mathbb{E} \left  \|   \widehat{v}^{\psi(v)}_v\right \|^2
\end{align}
where the inequality (a) uses $\mathbb{E} \left \| x- \mathbb{E} x\right \|^2 \leq \mathbb{E} \left \| x\right \|^2$, the inequality (b) uses Assumption \ref{assumption3}, the equality (c) uses Eq. (\ref{pi_test1}), and the inequality (d) uses Assumption \ref{assumption4}.

Next, we give the upper bound to $\mathbb{E}   \left \|   \widehat{v}^{ \ell }_u - {v}^{\ell}_u \right  \|^2$ as follows.
\begin{align}\label{Lemma2_2}
  \mathbb{E}   \left \|   \widehat{v}^{ \ell }_u - {v}^{\ell}_u \right  \|^2
 =&\mathbb{E}   \left \|  \nabla_{\mathcal{G}_\ell} f_i (\widehat{w}_u^s)  - \nabla_{\mathcal{G}_\ell} f_i ({w}_u^s) \right  \|^2
\\ \nonumber \leq& L^2 \mathbb{E}   \left \| \widehat{w}_u^s  - {w}_u^s \right  \|^2
\\ \nonumber =& L^2 \gamma^2 \mathbb{E}   \left \|  \sum_{u' \in D(u)}    \textbf{U}_{\psi(u')} \widehat{v}^{\psi(u')}_{u'}  \right  \|^2
\\ \nonumber \leq & \eta_2 L^2 \gamma^2 \mathbb{E}  \sum_{u' \in D(u)}   \left \|    \widehat{v}^{\psi(u')}_{u'}  \right  \|^2
\end{align}
Combining  (\ref{Lemma2_1}) and (\ref{Lemma2_2}), we have that
\begin{align}\label{Lemma2_3}
  \mathbb{E}  \left \|   \widehat{v}^{ \ell }_u \right \|^2
 \leq &    2 \mathbb{E}   \left \|   \widehat{v}^{ \ell }_u - {v}^{\ell}_u \right  \|^2 +   2 \mathbb{E} \left  \| {v}^{\ell}_u \right \|^2
\\ \nonumber \leq &   \frac{16 L^2}{\mu}\mathbb{E} (f(w^s_t)-f(w^*))  +  \frac{8 L^2}{\mu} \mathbb{E} (f(w^s)-f(w^*))
\\ \nonumber    &+ 4 L^2 \gamma^2 \eta_1 \sum_{v \in \{t,\ldots,u \}} \mathbb{E} \left  \|   \widehat{v}^{\psi(v)}_v\right \|^2
 + 2 \eta_2 L^2 \gamma^2 \mathbb{E}  \sum_{u' \in D(u)}   \left \|    \widehat{v}^{\psi(u')}_{u'}  \right  \|^2
\end{align}
This completes the proof.
\end{proof}
Based on the basic inequalities in Lemma \ref{AsySPSAGA_lemma3}, we provide the proof of Theorem \ref{theorem2} in the following.

\begin{proof}
Similar to (\ref{EqThm1_1}),  for $u \in K(t)$ at $s$-th outer loop, we have that
\begin{align}\label{EqThm2_1}
& \mathbb{E} f (w_{u+1}^s)
\\ \nonumber \stackrel{ (a) }{\leq}&  \mathbb{E} \left ( f (w_{u}^s) + \langle \nabla f(w_{u}^s), w_{u+1}^s-w_{u}^s  \rangle  + \frac{L_{\psi(u)}}{2} \|w_{u+1}^s-w_{u}^s   \|^2  \right )
\\ \nonumber =&  \mathbb{E} \left ( f (w_{u}^s) -  \gamma \langle \nabla f(w_{u}^s),  \widehat{v}^{\psi(u)}_u  \rangle + \frac{L_{\psi(u)} \gamma^2}{2} \|  \widehat{v}^{\psi(u)}_u  \|^2  \right )
\\ \nonumber \stackrel{ (b) }{=}&  \mathbb{E}f (w_{u}^s) -  \gamma \mathbb{E} \langle \nabla f(w_{u}^s),  \nabla_{\mathcal{G}_{\psi(u)}} f_{i_u}(\widehat{w}_{u}^s)  \rangle
 + \frac{L_{\psi(u)} \gamma^2}{2} \mathbb{E} \|  \widehat{v}^{\psi(u)}_u  \|^2
\\ \nonumber =&  \mathbb{E} f (w_{u}^s) -  \gamma \mathbb{E} \langle \nabla f(w_{u}^s), \nabla_{\mathcal{G}_{\psi(u)}} f_{i_u}(\widehat{w}_{u}^s)
 - \nabla_{\mathcal{G}_{\psi(u)}} f_{i_u} ({w}_u^s) + \nabla_{\mathcal{G}_{\psi(u)}} f_{i_u} ({w}_u^s) \rangle  + \frac{L_{\psi(u)} \gamma^2}{2} \mathbb{E} \|  \widehat{v}^{\psi(u)}_u  \|^2
\\ \nonumber =&  \mathbb{E}  f (w_{u}^s) -  \gamma \mathbb{E} \langle \nabla f(w_{u}^s),  \nabla_{\mathcal{G}_{\psi(u)}} f ({w}_u^s) \rangle  + \frac{L_{\psi(u)} \gamma^2}{2} \mathbb{E} \|  \widehat{v}^{\psi(u)}_u  \|^2
\\ & \nonumber + \gamma \mathbb{E} \langle \nabla f(w_{u}^s),   \nabla_{\mathcal{G}_{\psi(u)}} f_{i_u} ({w}_u^s) - \nabla_{\mathcal{G}_{\psi(u)}} f_{i_u} (\widehat{w}_u^s)  \rangle
 \\ \nonumber \stackrel{ (c) }{\leq}&  \mathbb{E}  f (w_{u}^s) -  \frac{\gamma}{2} \mathbb{E} \|  \nabla_{\mathcal{G}_{\psi(u)}} f ({w}_u^s) \|^2
+  \frac{\eta_2 \gamma L^2 \gamma^2}{2}  \sum_{u' \in D(u)} \mathbb{E} \|   \widehat{v}^{\psi(u')}_{u'} \|^2
 + \frac{L_{\psi(u)} \gamma^2}{2} \mathbb{E} \|  \widehat{v}^{\psi(u)}_u  \|^2
 \end{align}
where the  inequalities (a)  use Assumption \ref{assumption3}, the equality (b) uses the fact that the  stochastic local gradient
   $\widehat{v}^\ell $ is unbiased, the inequality (c) follows the proof in (\ref{EqThm1_1}).

Summing  (\ref{EqThm2_1}) over all $ u \in K(t)$, we obtain
\begin{align}\label{EqThm2_2}
& \mathbb{E} f (w_{t+|K(t)|}^s) - \mathbb{E}f (w_{t}^s)
\\ \nonumber \leq&   -  \frac{\gamma}{2} \sum_{u \in K(t)} \mathbb{E} \|  \nabla_{\mathcal{G}_{\psi(u)}} f ({w}_u^s) \|^2 + \frac{L_{\max} \gamma^2}{2} \sum_{u \in K(t)}\mathbb{E} \|  \widehat{v}^{\psi(u)}_u  \|^2
\\ \nonumber &  +  \frac{\eta_2 \gamma L^2 \gamma^2}{2}  \sum_{u \in K(t)}\sum_{u' \in D(u)} \mathbb{E} \|   \widehat{v}^{\psi(u')}_{u'} \|^2
\\ \nonumber \stackrel{ (a) }{\leq} &   -  \frac{\gamma}{2} \left ( \frac{1}{2} \sum_{u\in K(t)} \| \nabla_{\mathcal{G}_{\psi(u)}} f(w_t^s) \|^2  - \eta_1 \gamma^2 L^2  \sum_{u\in K(t)}   \sum_{v \in \{t,\ldots,u \}}\|\widehat{v}^{\psi(v)}_v \|^2 \right )
\\ \nonumber &  +  \frac{\eta_2 \gamma L^2 \gamma^2}{2}  \sum_{u \in K(t)}\sum_{u' \in D(u)} \mathbb{E} \|   \widehat{v}^{\psi(u')}_{u'} \|^2
 + \frac{L_{\max} \gamma^2}{2} \sum_{u \in K(t)}\mathbb{E} \|  \widehat{v}^{\psi(u)}_u  \|^2
 \\ \nonumber  = &   -  \frac{\gamma}{4} \sum_{u\in K(t)} \| \nabla_{\mathcal{G}_{\psi(u)}} f(w_t^s) \|^2 + \frac{L_{\max} \gamma^2}{2} \sum_{u \in K(t)}\mathbb{E} \|  \widehat{v}^{\psi(u)}_u  \|^2
 \\ & \nonumber + \frac{\eta_1 \gamma^3 L^2 }{2} \sum_{u\in K(t)}   \sum_{v \in \{t,\ldots,u \}}\|\widehat{v}^{\psi(v)}_v \|^2
 +  \frac{\eta_2 \gamma L^2 \gamma^2}{2}  \sum_{u \in K(t)}\sum_{u' \in D(u)} \mathbb{E} \|   \widehat{v}^{\psi(u')}_{u'} \|^2
 \\ \nonumber  \stackrel{ (b) }{\leq} &   -  \frac{\gamma}{4} \| \nabla f(w_t^s) \|^2 +  \frac{\eta_2 \gamma L^2 \gamma^2}{2}  \sum_{u \in K(t)}\sum_{u' \in D(u)} \mathbb{E} \|   \widehat{v}^{\psi(u')}_{u'} \|^2
  + \left ( \frac{\eta_1 \gamma^3 L^2 q\eta_1 }{2}+ \frac{L_{\max} \gamma^2}{2} \right ) \sum_{u \in K(t)}\mathbb{E} \|  \widehat{v}^{\psi(u)}_u  \|^2
\\ \nonumber  \stackrel{ (c) }{\leq} &   -  \frac{\gamma \mu}{2} \mathbb{E} (f(w^s_t)-f(w^*))
 +  \frac{\eta_2 \gamma L^2 \gamma^2}{2}  \sum_{u \in K(t)}\sum_{u' \in D(u)} \mathbb{E} \|   \widehat{v}^{\psi(u')}_{u'} \|^2
\\ \nonumber &  +C\sum_{u \in K(t)}\left (   \frac{16 L^2}{\mu}\mathbb{E} (f(w^s_t)-f(w^*))   \right .
\\ \nonumber    & \left . +  \frac{8 L^2}{\mu} \mathbb{E} (f(w^s)-f(w^*)) + 4 L^2 \gamma^2 \eta_1 \sum_{v \in \{t,\ldots,u \}} \mathbb{E} \left  \|   \widehat{v}^{\psi(v)}_v\right \|^2   + 2 \eta_2 L^2 \gamma^2 \mathbb{E}  \sum_{u' \in D(u)}   \left \|    \widehat{v}^{\psi(u')}_{u'}  \right  \|^2  \right )
\end{align}
where $C=\left (\eta_1 \gamma L^2 q\eta_1 + L_{\max}  \right )\frac{\gamma^2}{2}$, the  inequality (a) uses Lemma \ref{AsySPSAGA_lemma3}, the  inequality (b) uses Assumption  \ref{assMax}, the inequality (c) uses Lemma \ref{AsySCGD+_lemma1}.

Let $e^s_t=\mathbb{E} (f(w^s_t)-f(w^*))$ and $e^s=\mathbb{E} (f(w^s)-f(w^*))$, we have
\begin{align}\label{EqThm2_3}
& e_{t+|K(t)|}^s
\\ \nonumber  \stackrel{ (a) }{\leq}  & \left ( 1 - \frac{\gamma \mu}{2} +  \frac{16 L^2 \eta_1 q C}{\mu}  \right ) e^s_t +  \frac{8 L^2 \eta_1 q C}{\mu} e^s
\\ & \nonumber+ 4 C    L^2 \gamma^2 \eta_1  \sum_{u \in K(t)}   \sum_{v \in \{t,\ldots,u \}} \mathbb{E} \left  \|   \widehat{v}^{\psi(v)}_v\right \|^2
  +  \left ( \frac{\eta_2 \gamma L^2 \gamma^2}{2} + 2 \eta_2 L^2 \gamma^2 C \right ) \sum_{u \in K(t)}\sum_{u' \in D(u)} \mathbb{E} \|   \widehat{v}^{\psi(u')}_{u'} \|^2
\\ \nonumber  \stackrel{ (b) }{\leq}  & \left ( 1 - \frac{\gamma \mu}{2} +  \frac{16 L^2 \eta_1 q C}{\mu}  \right ) e^s_t +  \frac{8 L^2 \eta_1 q C}{\mu} e^s
  + \gamma^3 \left (  \left ( \frac{ 1  }{2} +     \frac{2C}{\gamma} \right )  \eta_2  \tau    + 4 \frac{C}{\gamma}     \eta_1^2 q  \right ) \eta_1 q  L^2 G
\end{align}
where the   inequality (a) uses (\ref{EqThm2_2}), and the inequality (b) uses Assumption \ref{assumption1}.
We carefully choose $\gamma$ such that $ \frac{\gamma \mu}{2} -  \frac{16 L^2 \eta_1 q C}{\mu}\stackrel{\rm def}{=}\rho>0$. Assume that $\cup_{\kappa \in P(t)}=\{0,1,\ldots,t\}$, applying  (\ref{EqThm2_3}), we have that
\begin{align}\label{EqThm2_4}
 e_{t}^s
  \leq & \left ( 1 - \rho \right )^{\upsilon(t)}  e^s  +   \left ( \frac{8 L^2 \eta_1 q C}{\mu} e^s \right .
\\ & \nonumber \left . + \gamma^3 \left (  \left ( \frac{ 1  }{2} +     \frac{2C}{\gamma} \right )  \eta_2  \tau    + 4 \frac{C}{\gamma}     \eta_1^2 q  \right ) \eta_1 q  L^2 G \right ) \sum_{i=0}^{{\upsilon(t)}}\left (1-  \rho \right )^i
\\ \nonumber  \leq & \left ( 1 - \rho \right )^{\upsilon(t)}  e^s  +   \left ( \frac{8 L^2 \eta_1 q C}{\mu} e^s  + \gamma^3 \left (  \left ( \frac{ 1  }{2} +     \frac{2C}{\gamma} \right )  \eta_2  \tau    + 4 \frac{C}{\gamma}     \eta_1^2 q  \right ) \eta_1 q  L^2 G \right ) \frac{1}{\rho}
 \\ \nonumber  = & \left ( \left ( 1 - \rho \right )^{\upsilon(t)} + \frac{8 L^2 \eta_1 q C}{\rho \mu} \right ) e^s
 +
 \gamma^3 \left (  \left ( \frac{ 1  }{2} +     \frac{2C}{\gamma} \right )  \eta_2  \tau    + 4 \frac{C}{\gamma}     \eta_1^2 q  \right )   \frac{\eta_1 q  L^2 G}{\rho}
\end{align}
Thus, to achieve the accuracy $\epsilon$ of (\ref{formulation1}) for AFSVRG-VP, \emph{i.e.}, $\mathbb{E} f (w_{S}) -f(w^*) \leq \epsilon$,  we can carefully choose $\gamma$ such that
\begin{align}
\frac{8 L^2 \eta_1 q C}{\rho \mu} \leq& 0.5
\\  \gamma^3 \left (  \left ( \frac{ 1  }{2} +     \frac{2C}{\gamma} \right )  \eta_2  \tau    + 4 \frac{C}{\gamma}     \eta_1^2 q  \right )   \frac{\eta_1 q  L^2 G}{\rho} \leq& \frac{\epsilon}{8}
\end{align}
and let $\left ( 1 - \rho \right )^{\upsilon(t)} \leq 0.25$, i.e., $\upsilon(t) \geq \frac{\log 0.25}{\log (1 - \rho)}$, we have that
\begin{eqnarray}\label{EqThm2_5}
e^{s+1} \leq 0.75 e^{s} + \frac{\epsilon}{8}
\end{eqnarray}
Recursively apply (\ref{EqThm2_5}), we have that
\begin{eqnarray}
e^{S} \leq (0.75)^S e^{0}+ \frac{\epsilon}{2}
\end{eqnarray}
Finally, the outer loop  number $S$ should satisfy the  condition of $S \geq \frac{\log \frac{2 e^0}{\epsilon }}{\log \frac{4}{3}} $.
This completes the proof.
\end{proof}

\subsection*{Appendix C: Proof of Theorem  \ref{theorem3}}
Before proving Theorem \ref{theorem3}, we first provide  Lemma \ref{lemma3} to provides an upper bound to $ \mathbb{E}  \left \|   \widehat{v}^{\ell}_u \right \|^2$.
\begin{lemma} \label{AsySGHT_lemma2} For AFSAGA-VP,  we have that
\begin{align}\label{lem4_0}
 & \mathbb{E} \left \|  \alpha_{i_u}^{u,\ell} -  \nabla_{\mathcal{G}_\ell} f_{i_u}(w^*) \right \|^2
   \leq
\frac{L^2}{l} \sum_{u'=1}^{\xi(u,\ell)-1}  \cdot
\\   \nonumber &  \left ( 1 -\frac{1}{l} \right )^{\xi(u,\ell)-u'-1}  \sigma(w_{{\xi^{-1}(u',\ell)}}) + L^2 \left ( 1 -\frac{1}{l} \right )^{\xi(u,\ell)} \sigma(w_0)
\\  &  \mathbb{E}\left \|  {\alpha}_{i_u}^{u,\ell} - \widehat{\alpha}_{i_u}^{u,\ell} \right \|^2 \leq \frac{\eta_2 L^2 \gamma^2}{l} \sum_{u'=1}^{\xi(u,\ell)-1}  \sum_{\widetilde{u} \in D(\xi^{-1}(u',\ell))} \cdot
  \left ( 1 -\frac{1}{l} \right )^{\xi(u,\ell)-u'-1} \mathbb{E}  \left \|       \widehat{v}^{\psi(\widetilde{u})}_{\widetilde{u}} \right \|^2 \nonumber
\end{align}
where $\sigma(w_u) = \mathbb{E} \| w_u - w^* \|^2$.
\end{lemma}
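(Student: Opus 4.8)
The plan is to treat both inequalities through the SAGA bookkeeping that expresses the stored historical gradient in terms of the iterate present at the most recent refresh of sample $i_u$ on worker $\ell$. Since the update rule sets $\widehat{\alpha}_i^\ell \leftarrow \nabla_{\mathcal{G}_\ell} f_i(\widehat{w})$ each time index $i$ is drawn, I would first introduce the last-refresh local time $\tau$ of sample $i_u$ on worker $\ell$: the largest local counter $u' < \xi(u,\ell)$ at which $i_u$ was selected, with the convention that if $i_u$ was never drawn then the stored gradient is the initial one evaluated at $w_0$. Then $\alpha_{i_u}^{u,\ell} = \nabla_{\mathcal{G}_\ell} f_{i_u}(w_{\xi^{-1}(\tau,\ell)})$ and $\widehat{\alpha}_{i_u}^{u,\ell} = \nabla_{\mathcal{G}_\ell} f_{i_u}(\widehat{w}_{\xi^{-1}(\tau,\ell)})$, so the two quantities differ only through the asynchronous read delay.

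For the first inequality, I would condition on $\tau$ and apply the Lipschitz smoothness of Assumption \ref{assumption3}: since the block gradient is dominated in norm by the full gradient, $\| \alpha_{i_u}^{u,\ell} - \nabla_{\mathcal{G}_\ell} f_{i_u}(w^*) \|^2 \le L^2 \| w_{\xi^{-1}(\tau,\ell)} - w^* \|^2$. Taking expectation then reduces to averaging $\sigma(\cdot)$ against the law of $\tau$. The key computation is that, because each local draw picks an index uniformly and independently, the event ``$i_u$ last refreshed at local time $u'$'' has probability $\tfrac{1}{l}(1-\tfrac{1}{l})^{\xi(u,\ell)-u'-1}$, and the never-refreshed event has probability of the form $(1-\tfrac{1}{l})^{\xi(u,\ell)}$ (up to the indexing convention). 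Summing the Lipschitz bound against these weights yields the right-hand side exactly, the $\tfrac{1}{l}$ emerging from the per-step selection probability and the $L^2(1-\tfrac{1}{l})^{\xi(u,\ell)}\sigma(w_0)$ term from the never-refreshed case.

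For the second inequality the same conditioning applies, but now I would bound the gap between the consistently and inconsistently read historical gradients. By Assumption \ref{assumption3}, $\| \alpha_{i_u}^{u,\ell} - \widehat{\alpha}_{i_u}^{u,\ell} \|^2 \le L^2 \| w_{t'} - \widehat{w}_{t'} \|^2$ with $t' = \xi^{-1}(\tau,\ell)$, and then I would invoke the delay representation (\ref{pi_test1}) to write $w_{t'} - \widehat{w}_{t'} = -\gamma \sum_{\widetilde u \in D(t')} \textbf{U}_{\psi(\widetilde u)} \widehat{v}^{\psi(\widetilde u)}_{\widetilde u}$. Expanding the squared norm, using the orthogonality of the block selectors $\textbf{U}_\ell$ together with Cauchy--Schwarz, and bounding the number of summands per block by $\eta_2$ via Assumption \ref{ass5}, gives $\| w_{t'} - \widehat{w}_{t'} \|^2 \le \gamma^2 \eta_2 \sum_{\widetilde u \in D(t')} \| \widehat{v}^{\psi(\widetilde u)}_{\widetilde u} \|^2$. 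Averaging against the same last-refresh law produces the stated bound; the never-refreshed case contributes nothing here because at initialization the historical gradients are computed exactly, so $\alpha = \widehat{\alpha}$ at $w_0$ and only the geometric sum survives.

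The main obstacle I anticipate is pinning down the distribution of the last-refresh time and reconciling the local/global counter bookkeeping. One must argue that the current draw $i_u$ is independent of the past draws determining $\tau$, so that conditioning on $i_u = j$ leaves the geometric law over past selections unchanged; one must also translate carefully between the local counter $\xi(u,\ell)$, where the geometric randomness lives, and the global counter $\xi^{-1}(\cdot,\ell)$ indexing the iterates $w$. The off-by-one in the exponent of the never-refreshed term is a convention artifact of where one starts counting draws, and I would fix that convention at the outset to match the statement.
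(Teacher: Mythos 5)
Your proposal follows essentially the same route as the paper's proof: the paper also decomposes by the last refresh time of each sample (via indicator events $\{\textbf{u}_i^u = u'\}$, which is your conditioning on $\tau$ after averaging over the uniform $i_u$), derives the geometric weights $\tfrac{1}{l}(1-\tfrac{1}{l})^{\xi(u,\ell)-u'-1}$ and the never-refreshed weight $(1-\tfrac{1}{l})^{\xi(u,\ell)}$ from independence of the draws, applies Lipschitz smoothness to pass to $\sigma(\cdot)$, and for the second inequality invokes the delay representation (\ref{pi_test1}) with the $\eta_2$ bound and the fact that $\alpha_i^{0,\ell}=\widehat{\alpha}_i^{0,\ell}$ kills the initialization term. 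Your plan is correct and matches the paper's argument, including the independence bookkeeping you flag at the end.
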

\begin{proof} Firstly, we have that
\begin{align}\label{lem4_1}
& \mathbb{E} \left \| \alpha_{i_u}^{u,\ell} -  \nabla_{\mathcal{G}_\ell} f_{i_u}(w^*)  \right \|^2
\\ \nonumber  =&  \frac{1}{l} \sum_{i=1}^l \mathbb{E} \left \|  \alpha_{i}^{u,\ell} -  \nabla_{\mathcal{G}_\ell} f_{i}(w^*) \right \|^2
\\    = & \nonumber \frac{1}{l} \sum_{i=1}^l   \mathbb{E} \sum_{u'=0}^{\xi(u,\ell) -1} \mathbf{1}_{ \{ \textbf{u}_{i}^u =u' \}} \left \|   \nabla_{\mathcal{G}_\ell} f_i(w_{\xi^{-1}(u',\ell)}) -  \nabla_{\mathcal{G}_\ell} f_i(w^*) \right \|^2
\\    = & \nonumber \frac{1}{l} \sum_{u'=0}^{\xi(u,\ell)-1} \sum_{i=1}^l  \mathbb{E}  \mathbf{1}_{ \{ \textbf{u}_{i}^u =u' \}} \left \|  \nabla_{\mathcal{G}_\ell} f_i(w_{\xi^{-1}(u',\ell)}) -  \nabla_{\mathcal{G}_\ell} f_i(w^*) \right \|^2
\end{align}
where  $\textbf{u}_{i}^u$ denote the last iterate  to update the $\widehat{\alpha}_i^{u,\ell}$. We consider the two cases $u'>0$ and $u'=0$ as following.

For $u'>0$,  we have that
\begin{align}\label{lem4_2}
& \mathbb{E} \left ( \mathbf{1}_{ \{ \textbf{u}_{i}^u =u' \}} \left \|   \nabla_{\mathcal{G}_\ell} f_i(w_{{\xi^{-1}(u',\ell)}}) -  \nabla_{\mathcal{G}_\ell} f_i(w^*) \right \|^2 \right )
\\    \stackrel{ (a) }{\leq}  & \nonumber \mathbb{E} \left ( \mathbf{1}_{ \{ i_{u'} = i \}} \mathbf{1}_{ \{ i_v \neq i, \forall v \ s.t. \ u'+1 \leq v \leq \xi(u,\ell) -1 \}}  \right .
\\ & \nonumber  \left .
 \left \|   \nabla_{\mathcal{G}_\ell} f_i(w_{{\xi^{-1}(u',\ell)}}) -  \nabla_{\mathcal{G}_\ell} f_i(w^*) \right \|^2 \right )
\\    \stackrel{ (b) }{\leq}  & \nonumber  P{ \{ i_{u'} = i \}}  P { \{ i_v \neq i, \forall v \ s.t. \ u'+1 \leq v \leq \xi(u,\ell) -1 \}} \cdot
\\ & \nonumber  \mathbb{E}   \left \|   \nabla_{\mathcal{G}_\ell} f_i(w_{{\xi^{-1}(u',\ell)}}) -  \nabla_{\mathcal{G}_\ell} f_i(w^*) \right \|^2
\\    \stackrel{ (c) }{\leq}  & \nonumber \frac{1}{l} \left ( 1 -\frac{1}{l} \right )^{\xi(u,\ell)-u'-1}  \mathbb{E}  \left \|  \nabla_{\mathcal{G}_\ell} f_i(w_{{\xi^{-1}(u',\ell)}}) -  \nabla_{\mathcal{G}_\ell} f_i(w^*) \right \|^2
\end{align}
where the inequality (a) uses the fact $i_{u'}$ and $i_v$ are independent for $v \neq u'$, the inequality (b) uses the fact that $P{ \{ i_u = i \}} = \frac{1}{l}$ and $P { \{ i_v \neq i\} } =1-\frac{1}{l}$.

For $u'=0$, we have that
\begin{align}\label{lem4_3}
& \mathbb{E} \left ( \mathbf{1}_{ \{ \textbf{u}_{i}^u =0 \}}\left \|   \nabla_{\mathcal{G}_\ell} f_i(w_{0}) -  \nabla_{\mathcal{G}_\ell} f_i(w^*) \right \|^2   \right )
\\    \leq & \nonumber \mathbb{E} \left ( \mathbf{1}_{ \{ i_v \neq i, \forall v \ s.t. \ 0 \leq v \leq \xi(u,\ell)-1 \}}
 \left \|   \nabla_{\mathcal{G}_\ell} f_i(w_{0}) -  \nabla_{\mathcal{G}_\ell} f_i(w^*) \right \|^2 \right )
\\    \leq & \nonumber   P { \{ i_v \neq i, \forall v \ s.t. \ 0 \leq v \leq \xi(u,\ell) -1 \}}
  \mathbb{E}  \left \|   \nabla_{\mathcal{G}_\ell} f_i(w_{0}) -  \nabla_{\mathcal{G}_\ell} f_i(w^*) \right \|^2
\\    \leq & \nonumber  \left ( 1 -\frac{1}{l} \right )^{\xi(u,\ell)}  \mathbb{E}  \left \|   \nabla_{\mathcal{G}_\ell} f_i(w_{0}) -  \nabla_{\mathcal{G}_\ell} f_i(w^*) \right \|^2
\end{align}

Substituting (\ref{lem4_2}) and (\ref{lem4_3}) into (\ref{lem4_1}), we have that
\begin{align}\label{lem4_4}
& \mathbb{E} \left \| \alpha_{i_u}^{u,\ell} -  \nabla_{\mathcal{G}_\ell} f_{i_u}(w^*)  \right \|^2
\\    = & \nonumber \frac{1}{l} \sum_{u'=0}^{\xi(u,\ell) -1} \sum_{i=1}^l  \mathbb{E}  \mathbf{1}_{ \{ \textbf{u}_{i}^u =u' \}} \left \|  \nabla_{\mathcal{G}_\ell} f_i(w_{{\xi^{-1}(u',\ell)}}) -  \nabla_{\mathcal{G}_\ell} f_i(w^*) \right \|^2
\\   \stackrel{ (a) }{\leq} & \nonumber \frac{1}{l} \sum_{u'=1}^{\xi(u,\ell)-1} \sum_{i=1}^l \left ( \frac{1}{l} \left ( 1 -\frac{1}{l} \right )^{\xi(u,\ell)-u'-1} \cdot \right .
\\ & \nonumber \left . \mathbb{E}  \left \|  \nabla_{\mathcal{G}_\ell} f_i(w_{{\xi^{-1}(u',\ell)}}) -  \nabla_{\mathcal{G}_\ell} f_i(w^*) \right \|^2 +  \frac{1}{l}  \sum_{i=1}^l   \left ( 1 -\frac{1}{l} \right )^{\xi(u,\ell)-1}  \mathbb{E}  \left \|  \nabla_{\mathcal{G}_\ell} f_i(w_{0}) -  \nabla_{\mathcal{G}_\ell} f_i(w^*) \right \|^2 \right )
\\    = & \nonumber \frac{1}{l} \sum_{u'=1}^{\xi(u,\ell)-1} \left ( 1 -\frac{1}{l} \right )^{\xi(u,\ell)-u'-1} \cdot
\\ & \nonumber \mathbb{E}  \left \|  \nabla_{\mathcal{G}_\ell} f_i(w_{{\xi^{-1}(u',\ell)}}) -  \nabla_{\mathcal{G}_\ell} f_i(w^*) \right \|^2
 +  \left ( 1 -\frac{1}{l} \right )^{\xi(u,\ell)-1}  \mathbb{E}  \left \|  \nabla_{\mathcal{G}_\ell} f_i(w_{0}) -  \nabla_{\mathcal{G}_\ell} f_i(w^*) \right \|^2
\\    \stackrel{ (b) }{\leq} & \nonumber \frac{L^2}{l} \sum_{u'=1}^{\xi(u,\ell)-1} \left ( 1 -\frac{1}{l} \right )^{\xi(u,\ell)-u'-1} \sigma(w_{{\xi^{-1}(u',\ell)}})
 + L^2 \left ( 1 -\frac{1}{l} \right )^{\xi(u,\ell)} \sigma(w_0)
\end{align}
where the  inequality (a) uses (\ref{lem4_2}) and (\ref{lem4_3}),  the  inequality (b) uses Assumption \ref{assumption4}.

Similarly, we have that
\begin{align}\label{lem4_5}
&  \mathbb{E}\left \|  {\alpha}_{i_u}^{u,\ell} - \widehat{\alpha}_{i_u}^{u,\ell} \right \|^2
=  \frac{1}{l} \sum_{i=1}^l \mathbb{E} \left \| {\alpha}_{i}^{u,\ell} - \widehat{\alpha}_{i}^{u,\ell} \right \|^2
\\ \nonumber   = & \frac{1}{l} \sum_{u'=0}^{\xi(u,\ell)-1}  \sum_{i=1}^l \mathbb{E} \mathbf{1}_{ \{ \textbf{u}_{i}^u =u'\} }   \left \| {\alpha}_{i}^{u',\ell} - \widehat{\alpha}_{i}^{u',\ell} \right \|^2
\\    \stackrel{ (a) }{\leq} & \nonumber \frac{1}{l} \sum_{u'=1}^{\xi(u,\ell)-1} \sum_{i=1}^l \left ( \frac{1}{l} \left ( 1 -\frac{1}{l} \right )^{\xi(u,\ell)-u'-1}  \mathbb{E}  \left \| {\alpha}_{i}^{u',\ell} - \widehat{\alpha}_{i}^{u',\ell} \right \|^2 \right .
\\      & \nonumber \left . +  \frac{1}{l}  \sum_{i=1}^l   \left ( 1 -\frac{1}{l} \right )^{\xi(u,\ell)-1}  \mathbb{E}   \left \| {\alpha}_{i}^{0,\ell} - \widehat{\alpha}_{i}^{0,\ell} \right \|^2 \right )
\\   \stackrel{ (b) }{=} & \nonumber \frac{1}{l} \sum_{u'=1}^{\xi(u,\ell)-1} \sum_{i=1}^l \left ( \frac{1}{l} \left ( 1 -\frac{1}{l} \right )^{\xi(u,\ell)-u'-1}  \mathbb{E}  \left \| {\alpha}_{i}^{u',\ell} - \widehat{\alpha}_{i}^{u',\ell} \right \|^2 \right )
\\   = & \nonumber \frac{1}{l} \sum_{u'=1}^{\xi(u,\ell)-1} \left ( 1 -\frac{1}{l} \right )^{\xi(u,\ell)-u'-1} \cdot
 \mathbb{E}  \left \|  \nabla_{\mathcal{G}_\ell} f_i(\widehat{w}_{\xi^{-1}(u',\ell)}) -  \nabla_{\mathcal{G}_\ell} f_i(w_{\xi^{-1}(u',\ell)}) \right \|^2
\\    \stackrel{ (c) }{\leq} & \nonumber \frac{L^2}{l} \sum_{u'=1}^{\xi(u,\ell)-1} \left ( 1 -\frac{1}{l} \right )^{\xi(u,\ell)-u'-1} \mathbb{E}  \left \|  \widehat{w}_{\xi^{-1}(u',\ell)} - w_{\xi^{-1}(u',\ell)} \right \|^2
\\    = & \nonumber \frac{L^2 \gamma^2}{l} \sum_{u'=1}^{\xi(u,\ell)-1} \left ( 1 -\frac{1}{l} \right )^{\xi(u,\ell)-u'-1}  \cdot
  \mathbb{E}  \left \|   \sum_{\widetilde{u} \in D(\xi^{-1}(u',\ell))}    \textbf{U}_{\psi(\widetilde{u})} \widehat{v}^{\psi(\widetilde{u})}_{\widetilde{u}} \right \|^2
\\    \stackrel{ (d) }{\leq} & \nonumber \frac{\eta_2 L^2 \gamma^2}{l} \sum_{u'=1}^{\xi(u,\ell)-1}  \sum_{\widetilde{u} \in D(\xi^{-1}(u',\ell))} \left ( 1 -\frac{1}{l} \right )^{\xi(u,\ell)-u'-1} \cdot
\mathbb{E}  \left \|       \widehat{v}^{\psi(\widetilde{u})}_{\widetilde{u}} \right \|^2
\end{align}
where the inequality (a) can be obtained similar to (\ref{lem4_4}), the equality (b) uses the fact of ${\alpha}_{i}^{0,\ell} = \widehat{\alpha}_{i}^{0,\ell}$, the inequality (c) uses Assumption \ref{assumption3}, and the  inequality (d) uses Assumption \ref{ass5}.
 This completes the proof.
\end{proof}

\begin{lemma} \label{The3lemma2} Given a global time counter $u$, we let $\{ \overline{u}_0,\overline{u}_1, \ldots,\overline{u}_{\upsilon(u)-1}\}$ be the  all start time counters for the global time counters from 0 to $u$. Thus, for AFSAGA-VP,   we have that
\begin{align}\label{lem4_0}
& \mathbb{E} \left  \| {v}^{\ell}_{u} \right \|^2
\\ \nonumber \leq&
 4\frac{L^2 \eta_1}{l} \sum_{k'=1}^{\upsilon(u)} \left ( 1 -\frac{1}{l} \right )^{\upsilon(u)-k'} \sigma(w_{\overline{u}_{k'}}) + 8 L^2 \gamma^2 \eta_1^2 q G
 + 2 L^2 \left ( 1 -\frac{1}{l} \right )^{\upsilon(u)}  \sigma(w_0)
 + 4 L^2 \sigma(  w_{\varphi(u)})
\end{align}
where $\sigma(w_u) = \mathbb{E} \| w_u - w^* \|^2$.
\end{lemma}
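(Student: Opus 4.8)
The plan is to bound the \emph{clean} SAGA estimator $v^{\ell}_u = \nabla_{\mathcal{G}_\ell} f_{i_u}(w_u) - \alpha_{i_u}^{u,\ell} + \frac{1}{l}\sum_{i=1}^l \alpha_i^{u,\ell}$ (with $\ell = \psi(u)$, built from the consistent iterate $w_u$) by the standard SAGA variance decomposition, then convert the resulting local-time quantities into the epoch-indexed form of the statement. First I would insert $\nabla_{\mathcal{G}_\ell} f_{i_u}(w^*)$ and write $v^{\ell}_u = \big(\nabla_{\mathcal{G}_\ell} f_{i_u}(w_u) - \nabla_{\mathcal{G}_\ell} f_{i_u}(w^*)\big) - \big(\alpha_{i_u}^{u,\ell} - \nabla_{\mathcal{G}_\ell} f_{i_u}(w^*)\big) + \frac{1}{l}\sum_i \alpha_i^{u,\ell}$. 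Since $w^*$ minimizes $f$ we have $\nabla_{\mathcal{G}_{\psi(u)}} f(w^*) = \frac{1}{l}\sum_i \nabla_{\mathcal{G}_{\psi(u)}} f_i(w^*) = 0$, so the control variate $\frac{1}{l}\sum_i \alpha_i^{u,\ell}$ is exactly the conditional mean of $\alpha_{i_u}^{u,\ell} - \nabla_{\mathcal{G}_\ell} f_{i_u}(w^*)$ over $i_u$. Applying $\mathbb{E}\|X-\mathbb{E}X\|^2 \le \mathbb{E}\|X\|^2$ to drop this mean and then $\|a-b\|^2\le 2\|a\|^2+2\|b\|^2$ yields
\begin{align}\label{sagaplan}
\mathbb{E}\|v^{\ell}_u\|^2 \le 2\,\mathbb{E}\big\|\nabla_{\mathcal{G}_\ell} f_{i_u}(w_u) - \nabla_{\mathcal{G}_\ell} f_{i_u}(w^*)\big\|^2 + 2\,\mathbb{E}\big\|\alpha_{i_u}^{u,\ell} - \nabla_{\mathcal{G}_\ell} f_{i_u}(w^*)\big\|^2.
\end{align}

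For the first term in \eqref{sagaplan} I would use Lipschitz smoothness (Assumption \ref{assumption3}) to get $2L^2\sigma(w_u)$, and then replace $w_u$ by the epoch-start iterate via $\sigma(w_u)\le 2\,\mathbb{E}\|w_u - w_{\varphi(u)}\|^2 + 2\sigma(w_{\varphi(u)})$. The displacement inside the current epoch is $w_u - w_{\varphi(u)} = -\gamma\sum_{v=\varphi(u)}^{u-1}\textbf{U}_{\psi(v)}\widehat{v}^{\psi(v)}_v$; using orthogonality of the blocks $\textbf{U}_{\psi(v)}$, the fact that each coordinate is touched at most $\eta_1$ times per epoch (Assumption \ref{assMax}, so at most $q\eta_1$ updates total), Cauchy--Schwarz, and the block-coordinate bounded-gradient Assumption \ref{assumption1} to control each $\|\widehat{v}^{\psi(v)}_v\|^2$, this contributes the term $8 L^2\gamma^2 \eta_1^2 q G$ and leaves the clean $4L^2\sigma(w_{\varphi(u)})$ of the claimed bound.

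The second term in \eqref{sagaplan} is handled by the first inequality of Lemma \ref{AsySGHT_lemma2}, which gives a geometric sum $\frac{L^2}{l}\sum_{u'=1}^{\xi(u,\ell)-1}(1-\tfrac1l)^{\xi(u,\ell)-u'-1}\sigma(w_{\xi^{-1}(u',\ell)})$ plus the boundary term $L^2(1-\tfrac1l)^{\xi(u,\ell)}\sigma(w_0)$. I would then re-index from the local counter $\xi(u,\ell)$ to the epoch number $\upsilon(u)$: since every epoch visits coordinate $\ell$ at least once we have $\xi(u,\ell)\ge \upsilon(u)$, so $(1-\tfrac1l)^{\xi(u,\ell)}\le(1-\tfrac1l)^{\upsilon(u)}$ converts the $w_0$ term into the required $2L^2(1-\tfrac1l)^{\upsilon(u)}\sigma(w_0)$, and grouping the local updates $u'$ by the epoch into which $\xi^{-1}(u',\ell)$ falls (at most $\eta_1$ per epoch by Assumption \ref{assMax}) collapses the sum onto the start-time iterates $w_{\overline{u}_{k'}}$ with weights $(1-\tfrac1l)^{\upsilon(u)-k'}$, producing the factor $\eta_1$ and the prefactor $\frac{4L^2\eta_1}{l}$. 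The main obstacle I anticipate is precisely this last bookkeeping step: carefully matching the local time counter $\xi(u,\ell)$, its inverse $\xi^{-1}(\cdot,\ell)$, and the epoch start counters $\overline{u}_{k'}$, verifying $\xi(u,\ell)\ge\upsilon(u)$, and controlling how many and which local updates land in each epoch so that the regrouping of the geometric weights is valid; the analytic inequalities themselves are routine once this index correspondence is pinned down.
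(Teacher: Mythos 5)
Your route is the paper's route: the same insertion of $\nabla_{\mathcal{G}_\ell} f_{i_u}(w^*)$, the same variance/Young decomposition into a fresh-gradient term and a stale-gradient term, the same invocation of the first inequality of Lemma \ref{AsySGHT_lemma2}, and the same intra-epoch displacement idea. But there is a genuine gap in how you convert the geometric sum from Lemma \ref{AsySGHT_lemma2} into the epoch-indexed form. After multiplying by the outer factor $2$, that sum is
\begin{align*}
\frac{2L^2}{l}\sum_{u'=1}^{\xi(u,\ell)-1}\left(1-\frac{1}{l}\right)^{\xi(u,\ell)-u'-1}\sigma\left(w_{\xi^{-1}(u',\ell)}\right),
\end{align*}
whose summands are distances to $w^*$ of \emph{intermediate} iterates $w_{\xi^{-1}(u',\ell)}$, not of epoch-start iterates. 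Regrouping the indices $u'$ by epochs (at most $\eta_1$ per epoch, Assumption \ref{assMax}) cannot by itself ``collapse the sum onto the start-time iterates $w_{\overline{u}_{k'}}$'': there is no inequality of the form $\sigma(w_{\xi^{-1}(u',\ell)})\le\sigma(w_{\overline{u}_{k'}})$, because an iterate can move away from $w^*$ inside an epoch. The paper's proof (steps (d)--(g)) first writes, for every summand, $w_{\xi^{-1}(u',\ell)}-w^*=\bigl(w_{\xi^{-1}(u',\ell)}-w_{\varphi(\xi^{-1}(u',\ell))}\bigr)+\bigl(w_{\varphi(\xi^{-1}(u',\ell))}-w^*\bigr)$, i.e., it applies \emph{inside the sum} exactly the displacement decomposition you reserve for the first term. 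This step supplies the factor $2$ without which your prefactor does not close ($\frac{2L^2}{l}\cdot\eta_1$ only gives $\frac{2L^2\eta_1}{l}$, not the required $\frac{4L^2\eta_1}{l}$), and, after using $\sum_{u'}(1-1/l)^{\xi(u,\ell)-u'-1}\le l$ together with Assumptions \ref{assMax} and \ref{assumption1}, it produces an additional $4L^2\gamma^2\eta_1^2 qG$.

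Your constant bookkeeping reflects the same gap. The first term of your decomposition can only yield $4L^2\gamma^2\eta_1^2 qG$, since $2L^2\sigma(w_u)\le 4L^2\mathbb{E}\|w_u-w_{\varphi(u)}\|^2+4L^2\sigma(w_{\varphi(u)})$ and $\mathbb{E}\|w_u-w_{\varphi(u)}\|^2\le\gamma^2\eta_1\cdot(q\eta_1)\cdot G$; it cannot give the $8L^2\gamma^2\eta_1^2 qG$ you assign to it. The missing $4L^2\gamma^2\eta_1^2 qG$ is precisely the displacement contribution from inside the geometric sum that your plan omits. Once you apply the displacement decomposition there as well, the two contributions of $4L^2\gamma^2\eta_1^2 qG$ add to the claimed $8L^2\gamma^2\eta_1^2 qG$, the prefactor becomes $\frac{4L^2\eta_1}{l}$, and your argument coincides with the paper's proof of Lemma \ref{The3lemma2}.
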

\begin{proof}
We have that
\begin{align}\label{Lemma3_1}
 & \mathbb{E} \left  \| {v}^{\ell}_{u} \right \|^2
\\   = & \nonumber \mathbb{E} \left \| \nabla_{\mathcal{G}_\ell} f_{i_u} (w_u) - \alpha_{i_u}^{u,\ell} +  \frac{1}{l} \sum_{i=1}^l \alpha_{i}^{\ell} \right \|^2
\\   = & \nonumber \mathbb{E} \left \| \nabla_{\mathcal{G}_\ell} f_{i_u} (w_u) - \nabla_{\mathcal{G}_\ell} f_{i_u}(w^*) - \alpha_{i_u}^{u,\ell} + \nabla_{\mathcal{G}_\ell} f_{i_u}(w^*) \right .
\\ & \nonumber \left . + \frac{1}{l} \sum_{i=1}^l \alpha_{i}^{u,\ell} - \nabla_{\mathcal{G}_\ell} f(w^*) + \nabla_{\mathcal{G}_\ell} f(w^*) \right \|^2
\\   \stackrel{ (a) }{\leq} &  \nonumber 2 \mathbb{E} \left \|  \nabla_{\mathcal{G}_\ell} f_{i_u}(w^*) - \alpha_{i_u}^{u,\ell} + \frac{1}{l} \sum_{i=1}^l \alpha_{i}^{u,\ell} - \nabla_{\mathcal{G}_\ell} f(w^*) \right \|^2
\\ & \nonumber  + 2 \mathbb{E} \left \| \nabla_{\mathcal{G}_\ell} f_i (w_u) - \nabla_{\mathcal{G}_\ell} f_{i_t}(w^*) \right \|^2
\\  \stackrel{ (b) }{\leq} &  \nonumber 2 \mathbb{E} \left \| \alpha_{i_u}^{u,\ell} - \nabla_{\mathcal{G}_\ell} f_{i_t}(w^*) \right \|^2
+ 2 \mathbb{E} \left \| \nabla_{\mathcal{G}_\ell} f_i (w_u) - \nabla_{\mathcal{G}_\ell} f_{i_t}(w^*) \right \|^2
\\   \stackrel{ (c) }{\leq} &  \nonumber  2\frac{L^2}{l} \sum_{u'=1}^{\xi(u,\ell)-1} \left ( 1 -\frac{1}{l} \right )^{\xi(u,\ell)-u'-1} \mathbb{E} \| w_{{\xi^{-1}(u',\ell)}} - w^* \|^2
\\ & \nonumber   + 2 L^2 \left ( 1 -\frac{1}{l} \right )^{\xi(u,\ell)}  \sigma(w_0)
+ 2 L^2 \mathbb{E} \| w_u - w^* \|^2
\\  = &  \nonumber  2\frac{L^2}{l} \sum_{u'=1}^{\xi(u,\ell)-1} \left ( 1 -\frac{1}{l} \right )^{\xi(u,\ell)-u'-1} \cdot
\\ & \nonumber \mathbb{E} \| w_{{\xi^{-1}(u',\ell)}} - w_{\varphi({\xi^{-1}(u',\ell)})}+ w_{\varphi({\xi^{-1}(u',\ell)})} - w^* \|^2
\\  \nonumber & + 2 L^2 \left ( 1 -\frac{1}{l} \right )^{\xi(u,\ell)}  \sigma(w_0)
 + 2 L^2 \mathbb{E} \| w_u - w_{\varphi(u)}+ w_{\varphi(u)} - w^* \|^2
\\   \stackrel{ (d) }{\leq} &  \nonumber  2\frac{L^2}{l} \sum_{u'=1}^{\xi(u,\ell)-1} \left ( 1 -\frac{1}{l} \right )^{\xi(u,\ell)-u'-1} \cdot
\\ & \nonumber \mathbb{E} \left ( 2\| w_{{\xi^{-1}(u',\ell)}} - w_{\varphi({\xi^{-1}(u',\ell)})} \|^2 + 2\|  w_{\varphi({\xi^{-1}(u',\ell)})} - w^* \|^2 \right )
\\  \nonumber  & + 2 L^2 \left ( 1 -\frac{1}{l} \right )^{\upsilon(u)}  \sigma(w_0)+ 4 L^2 \mathbb{E} \|  w_{\varphi(u)} - w^* \|^2
 + 4 L^2 \gamma^2 \mathbb{E} \left  \|  \sum_{v \in \{{\varphi(u)},\ldots,u \}} \textbf{U}_{\psi(v)} \widehat{v}^{\psi(v)}_v  \right \|^2
\\   \stackrel{ (e) }{\leq} &  \nonumber  2\frac{L^2}{l} \sum_{u'=1}^{\xi(u,\ell)-1} \left ( 1 -\frac{1}{l} \right )^{\xi(u,\ell)-u'-1} \mathbb{E} \left ( 2 \eta_1 \gamma^2 \cdot    \sum_{v \in \{{\varphi({\xi^{-1}(u',\ell)})},\ldots,{{\xi^{-1}(u',\ell)}} \}} \| \widehat{v}^{\psi(v)}_v   \|^2 \right .
\\ & \nonumber \left . + 2\|  w_{\varphi({\xi^{-1}(u',\ell)})} - w^* \|^2 \right )
\\  \nonumber & + 2 L^2 \left ( 1 -\frac{1}{l} \right )^{\upsilon(u)}  \sigma(w_0)
 + 4 L^2 \mathbb{E} \|  w_{\varphi(u)} - w^* \|^2
 + 4 L^2 \gamma^2 \eta_1 \sum_{v \in \{{\varphi(u)},\ldots,u \}} \mathbb{E} \left  \|   \widehat{v}^{\psi(v)}_v  \right \|^2
\\    \stackrel{ (f) }{\leq} &  \nonumber  2\frac{L^2}{l} \sum_{u'=1}^{\xi(u,\ell)-1} \left ( 1 -\frac{1}{l} \right )^{\xi(u,\ell)-u'-1} \cdot
\mathbb{E} \left ( 2 \eta_1^2 \gamma^2 q G + 2\|  w_{\varphi({\xi^{-1}(u',\ell)})} - w^* \|^2 \right )
\\  \nonumber  & + 2 L^2 \left ( 1 -\frac{1}{l} \right )^{\upsilon(u)}  \sigma(w_0)
  + 4 L^2 \mathbb{E} \|  w_{\varphi(u)} - w^* \|^2 + 4 L^2 \gamma^2 \eta_1^2 q G
\\    \stackrel{ (g) }{\leq} &  \nonumber  4\frac{L^2}{l} \sum_{u'=1}^{\xi(u,\ell)-1} \left ( 1 -\frac{1}{l} \right )^{\xi(u,\ell)-u'-1} \mathbb{E}  \|  w_{\varphi({\xi^{-1}(u',\ell)})} - w^* \|^2
\\  \nonumber & + 2 L^2 \left ( 1 -\frac{1}{l} \right )^{\upsilon(u)}  \sigma(w_0)
 + 4 L^2 \mathbb{E} \|  w_{\varphi(u)} - w^* \|^2
  + 8 L^2 \gamma^2 \eta_1^2 q G
\\    \stackrel{ (h) }{\leq} &  \nonumber  4\frac{L^2 \eta_1}{l} \sum_{k'=1}^{\upsilon(u)} \left ( 1 -\frac{1}{l} \right )^{\upsilon(u)-k'} \sigma(w_{\overline{u}_{k'}}) + 8 L^2 \gamma^2 \eta_1^2 q G
 + 2 L^2 \left ( 1 -\frac{1}{l} \right )^{\upsilon(u)}  \sigma(w_0)
 + 4 L^2 \sigma(  w_{\varphi(u)})
\end{align}
where the  inequalities (a) and (d) uses $\| \sum_{i=1}^n a_i \|^2 \leq n \sum_{i=1}^n \| a_i \|^2 $, the  inequality (b) follows from $\mathbb{E} \left \| x- \mathbb{E} x\right \|^2 \leq \mathbb{E} \left \| x\right \|^2$,  the  inequality (c) uses Lemma \ref{AsySGHT_lemma2}, the inequality (e) uses Assumption \ref{assMax}, the the inequality (f) uses Assumption \ref{assumption1}, and the  inequality (g) uses the fact $ \sum_{u'=1}^{\xi(u,\ell)-1} \left ( 1 -\frac{1}{l} \right )^{\xi(u,\ell)-u'-1} \leq l$.
%
%
This
completes the proof.
\end{proof}

\begin{lemma} \label{lemma3} For  AFSVRG-VP, under Assumptions \ref{assumption4} and \ref{assumption3},  let $u \in K(t)$,  we have that
\begin{align}
\label{lemma3_0_1}  \mathbb{E} \left  \| \widehat{v}^{\ell}_u \right \|^2
\leq &  6 \eta_1 L^2 \gamma^2 \sum_{u' \in D(u)} \mathbb{E}  \left \|       \widehat{v}^{\psi(u')}_{u'} \right \|^2 + \frac{12 \eta_2 L^2 \gamma^2}{l} \sum_{u'=1}^{\xi(u,\ell)-1}  \cdot
\\ & \nonumber \sum_{\widetilde{u} \in D(\xi^{-1}(u',\ell))} \left ( 1 -\frac{1}{l} \right )^{\xi(u,\ell)-u'-1} \mathbb{E}  \left \|       \widehat{v}^{\psi(\widetilde{u})}_{\widetilde{u}} \right \|^2
 +  2 \mathbb{E} \left \|  v_{u}^\ell  \right \|^2
\end{align}
\end{lemma}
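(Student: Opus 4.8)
The plan is to prove the claim by comparing the delayed (inconsistent) SAGA gradient
$\widehat{v}_u^\ell = \nabla_{\mathcal{G}_\ell} f_{i_u}(\widehat{w}_u) - \widehat{\alpha}_{i_u}^{u,\ell} + \frac{1}{l}\sum_{i=1}^l \widehat{\alpha}_i^{u,\ell}$
against its consistent counterpart
$v_u^\ell = \nabla_{\mathcal{G}_\ell} f_{i_u}(w_u) - \alpha_{i_u}^{u,\ell} + \frac{1}{l}\sum_{i=1}^l \alpha_i^{u,\ell}$,
whose second moment is already controlled in Lemma \ref{The3lemma2} (the statement's reference to AFSVRG-VP should read AFSAGA-VP, since $v_u^\ell$ and the historical tables $\alpha_i^{u,\ell}$ are the SAGA quantities). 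First I would write $\widehat{v}_u^\ell = v_u^\ell + (\widehat{v}_u^\ell - v_u^\ell)$ and apply $\|a+b\|^2 \le 2\|a\|^2 + 2\|b\|^2$ to obtain
\begin{align}
\mathbb{E}\left\|\widehat{v}_u^\ell\right\|^2 \leq 2\,\mathbb{E}\left\|\widehat{v}_u^\ell - v_u^\ell\right\|^2 + 2\,\mathbb{E}\left\|v_u^\ell\right\|^2,
\end{align}
which immediately supplies the last term $2\,\mathbb{E}\|v_u^\ell\|^2$ of the claim and leaves only the discrepancy $\mathbb{E}\|\widehat{v}_u^\ell - v_u^\ell\|^2$ to be estimated.

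Next I would expand this discrepancy into its three natural pieces,
\begin{align}
\widehat{v}_u^\ell - v_u^\ell = \left(\nabla_{\mathcal{G}_\ell} f_{i_u}(\widehat{w}_u) - \nabla_{\mathcal{G}_\ell} f_{i_u}(w_u)\right) - \left(\widehat{\alpha}_{i_u}^{u,\ell} - \alpha_{i_u}^{u,\ell}\right) + \frac{1}{l}\sum_{i=1}^l\left(\widehat{\alpha}_i^{u,\ell} - \alpha_i^{u,\ell}\right),
\end{align}
and apply $\|a+b+c\|^2 \le 3(\|a\|^2+\|b\|^2+\|c\|^2)$. For the first (gradient) piece I would invoke the Lipschitz smoothness of Assumption \ref{assumption3} to get $\mathbb{E}\|\nabla_{\mathcal{G}_\ell} f_{i_u}(\widehat{w}_u) - \nabla_{\mathcal{G}_\ell} f_{i_u}(w_u)\|^2 \le L^2\,\mathbb{E}\|\widehat{w}_u - w_u\|^2$, then substitute the read-delay representation (\ref{pi_test1}) and pull the squared sum out with the bounded-overlap structure, exactly as in (\ref{Lemma2_2}); together with the factor $2\cdot 3 = 6$ this delivers the first term $6\eta_1 L^2\gamma^2\sum_{u'\in D(u)}\mathbb{E}\|\widehat{v}_{u'}^{\psi(u')}\|^2$ of the claim.

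For the second and third (table) pieces I would observe that the averaged error collapses, after Jensen's inequality, to $\frac{1}{l}\sum_i \mathbb{E}\|\widehat{\alpha}_i^{u,\ell}-\alpha_i^{u,\ell}\|^2$, which is precisely $\mathbb{E}\|\widehat{\alpha}_{i_u}^{u,\ell}-\alpha_{i_u}^{u,\ell}\|^2$ since $i_u$ is uniform; hence both the sampled-index term and the averaged term are governed by the same second inequality of Lemma \ref{AsySGHT_lemma2}. Summing their two contributions $2\cdot(3+3)=12$ times produces the double-sum term with coefficient $\frac{12\eta_2 L^2\gamma^2}{l}$, and collecting the three estimates recovers the stated inequality. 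The step requiring the most care is exactly this middle identification: recognizing that the averaged table error and the sampled-index table error coincide in expectation and are both controlled by Lemma \ref{AsySGHT_lemma2}, so that the geometric weights $(1-1/l)^{\xi(u,\ell)-u'-1}$ and the inner delay sets $D(\xi^{-1}(u',\ell))$ enter with the correct local/global counter indices. Once Lemmas \ref{AsySGHT_lemma2} and \ref{The3lemma2} are in hand, the remaining estimates are routine applications of Assumption \ref{assumption3} together with the overlap assumptions; the real obstacle is the index bookkeeping through $\xi$ and $\xi^{-1}$ rather than any single inequality.
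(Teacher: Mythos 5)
Your proposal is correct and follows essentially the same route as the paper's own proof: the same splitting $\mathbb{E}\|\widehat{v}_u^\ell\|^2 \le 2\,\mathbb{E}\|\widehat{v}_u^\ell - v_u^\ell\|^2 + 2\,\mathbb{E}\|v_u^\ell\|^2$, the same three-term decomposition of the discrepancy (gradient difference handled by Assumption \ref{assumption3} plus the delay representation (\ref{pi_test1}), and both table-error terms handled via Jensen and the second inequality of Lemma \ref{AsySGHT_lemma2}), producing the constants $6 = 2\cdot 3$ and $12 = 2\cdot(3+3)$ exactly as the paper does. You are also right that the lemma's ``AFSVRG-VP'' is a typo for AFSAGA-VP, since $v_u^\ell$ and the tables $\alpha_i^{u,\ell}$ are the SAGA quantities.
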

\begin{proof} Define ${v}^{\ell}_u= \nabla_{\mathcal{G}_\ell} f_i (w_u) - \alpha_i^{\ell} +  \frac{1}{l} \sum_{i=1}^l \alpha_i^{\ell}$. We have that
$
\label{AsySCGD+_lem3_0_2}  \mathbb{E}  \left \|   \widehat{v}^{ \ell }_u \right \|^2 =\mathbb{E}  \left \|   \widehat{v}^{ \ell }_u - {v}^{\ell}_u + {v}^{\ell}_u \right \|^2 \leq 2 \mathbb{E}   \left \|   \widehat{v}^{ \ell }_u - {v}^{\ell}_u \right  \|^2 +   2 \mathbb{E} \left  \| {v}^{\ell}_u \right \|^2
$.

Next, we give the upper bound to $\mathbb{E}   \left \|   \widehat{v}^{ \ell }_u - {v}^{\ell}_u \right  \|^2$ as follows.
Next, we have that
\begin{align}\label{AsySGHT_lem4_11}
&   \mathbb{E} \left \| \widehat{v}_{u}^\ell -  v_{u}^\ell  \right \|^2
\\    = & \nonumber  \mathbb{E} \left \| \nabla_{\mathcal{G}_\ell}  f_{i_u}(\widehat{w}_u)- \widehat{\alpha}_{i_u}^{u,\ell}  + \frac{1}{l} \sum_{i=1}^l \widehat{\alpha}_{i}^{u,\ell}  - \nabla_{\mathcal{G}_\ell} f_{i_u}(w_t)   + \alpha_{i_u}^{u,\ell} - \frac{1}{l} \sum_{i=1}^l \alpha_{i}^{u,\ell} \right \|^2
\\    \stackrel{ (a) }{\leq} & \nonumber  3  \mathbb{E} \underbrace{\left \| \nabla_{\mathcal{G}_\ell}  f_{i_u}(\widehat{w}_u) -\nabla_{\mathcal{G}_\ell}  f_{i_u}(w_u) \right \|^2 }_{Q_1} + 3\mathbb{E} \underbrace{\left \| {\alpha}_{i_u}^{u,\ell} - \widehat{\alpha}_{i_u}^{u,\ell} \right \|^2 }_{Q_2}
 + 3\mathbb{E} \underbrace{\left \|  \frac{1}{l} \sum_{i=1}^l \alpha_{i}^{u,\ell} - \frac{1}{l} \sum_{i=1}^l \widehat{\alpha}_{i}^{u,\ell}  \right \|^2}_{Q_3}
\end{align}
where the  inequality (a) uses $\| \sum_{i=1}^n a_i \|^2 \leq n \sum_{i=1}^n \| a_i \|^2 $.
We will give the upper bounds for the expectations  of $Q_1$, $Q_2$ and $Q_3$  respectively.
\begin{align}\label{AsySGHT_lem4_2}
& \nonumber \mathbb{E} Q_1 = \mathbb{E} \left \| \nabla_{\mathcal{G}_\ell}  f_{i_u}(\widehat{w}_u) -\nabla_{\mathcal{G}_\ell}  f_{i_u}(w_u) \right \|^2
\\   \nonumber \leq &  L^2 \mathbb{E} \left \| \widehat{w}_u - {w}_u \right \|^2 = L^2 \gamma^2 \mathbb{E} \left \|  \sum_{u' \in D(u)}    \textbf{U}_{\psi(u')} \widehat{v}^{\psi(u')}_{u'} \right \|^2
\\    \leq &    \eta_1 L^2 \gamma^2 \sum_{u' \in D(u)} \mathbb{E}  \left \|       \widehat{v}^{\psi(u')}_{u'} \right \|^2
\end{align}
where the first inequality uses Assumption \ref{assumption3}, the second inequality uses $\| \sum_{i=1}^n a_i \|^2 \leq n \sum_{i=1}^n \| a_i \|^2 $.
\begin{align}\label{AsySGHT_lem4_3}
& \mathbb{E} Q_2 =   \mathbb{E}\left \|  {\alpha}_{i_u}^{u,\ell} - \widehat{\alpha}_{i_u}^{u,\ell} \right \|^2
\\ \nonumber   \leq &   \frac{\eta_2 L^2 \gamma^2}{l} \sum_{u'=1}^{\xi(u,\ell)-1}  \sum_{\widetilde{u} \in D(\xi^{-1}(u',\ell))} \cdot
 \left ( 1 -\frac{1}{l} \right )^{\xi(u,\ell)-u'-1} \mathbb{E}  \left \|       \widehat{v}^{\psi(\widetilde{u})}_{\widetilde{u}} \right \|^2
\end{align}
where the inequality uses Lemma \ref{AsySGHT_lemma2}.
\begin{align}\label{AsySGHT_lem4_4}
& \mathbb{E} Q_3 =    \mathbb{E} \left \|  \frac{1}{l} \sum_{i=1}^l \alpha_{i}^{u,\ell} - \frac{1}{l} \sum_{i=1}^l \widehat{\alpha}_{i}^{u,\ell}   \right \|^2
\\    \leq & \nonumber  \frac{1}{l} \sum_{i=1}^l \mathbb{E} \left \|  \alpha_{i}^{u,\ell}  - \widehat{\alpha}_{i}^{u,\ell}   \right \|^2
\\    \leq & \nonumber \frac{\eta_2 L^2 \gamma^2}{l} \sum_{u'=1}^{\xi(u,\ell)-1}  \sum_{\widetilde{u} \in D(\xi^{-1}(u',\ell))} \cdot
 \left ( 1 -\frac{1}{l} \right )^{\xi(u,\ell)-u'-1} \mathbb{E}  \left \|       \widehat{v}^{\psi(\widetilde{u})}_{\widetilde{u}} \right \|^2
\end{align}
where  the first inequality uses $\| \sum_{i=1}^n a_i \|^2 \leq n \sum_{i=1}^n \| a_i \|^2 $, the second inequality uses (\ref{AsySGHT_lem4_3}).

\begin{align}\label{AsySGHT_lem4_1}
&  \mathbb{E} \left \| \widehat{v}_{u}^\ell  \right \|^2 \leq  2 \mathbb{E} \left \| \widehat{v}_{u}^\ell  - ( v^{t+1} )_{\mathcal{I}} \right \|^2 +  2 \mathbb{E} \left \|  v_{u}^\ell  \right \|^2
\\    \leq & \nonumber    6  \mathbb{E} {Q_1} + 6\mathbb{E} {Q_2}  + 6\mathbb{E} {Q_3} + 2 \mathbb{E} \left \|  v_{u}^\ell  \right \|^2
\\    \leq & \nonumber 6 \eta_1 L^2 \gamma^2 \sum_{u' \in D(u)} \mathbb{E}  \left \|       \widehat{v}^{\psi(u')}_{u'} \right \|^2  +  2 \mathbb{E} \left \|  v_{u}^\ell  \right \|^2
\\    & \nonumber + \frac{12 \eta_2 L^2 \gamma^2}{l} \sum_{u'=1}^{\xi(u,\ell)-1}  \sum_{\widetilde{u} \in D(\xi^{-1}(u',\ell))}
  \left ( 1 -\frac{1}{l} \right )^{\xi(u,\ell)-u'-1} \mathbb{E}  \left \|       \widehat{v}^{\psi(\widetilde{u})}_{\widetilde{u}} \right \|^2
\end{align}
where the second inequality uses Lemma \ref{AsySGHT_lemma2}.
This completes the proof.
\end{proof}

Based on the basic inequalities in Lemma \ref{AsySPSAGA_lemma3}, we provide the proof of Theorem \ref{theorem2} in the following.

\begin{proof}
Similar to (\ref{EqThm2_2}),  we have that
\begin{align}\label{EqThm4_1}
 & \mathbb{E} f (w_{t+|K(t)|}) - \mathbb{E}f (w_{t})
 \\ \nonumber \stackrel{ (a) }{\leq} &   -  \frac{\gamma}{4} \| \nabla f(w_t) \|^2 +  \frac{\eta_2 \gamma L^2 \gamma^2}{2}  \sum_{u \in K(t)}\sum_{u' \in D(u)} \mathbb{E} \|   \widehat{v}^{\psi(u')}_{u'} \|^2
\\ \nonumber &   + \left ( \frac{\eta_1 \gamma^3 L^2 q\eta_1 }{2}+ \frac{L_{\max} \gamma^2}{2} \right ) \sum_{u \in K(t)}\mathbb{E} \|  \widehat{v}^{\psi(u)}_u  \|^2
 \\ \nonumber  \stackrel{ (b) }{\leq} &   -  \frac{\gamma}{4} \| \nabla f(w_t) \|^2 +  \frac{\eta_2  L^2 \gamma^3}{2}  \sum_{u \in K(t)}\sum_{u' \in D(u)} \mathbb{E} \|   \widehat{v}^{\psi(u')}_{u'} \|^2
\\ \nonumber &   + \left ( \frac{\eta_1 \gamma^3 L^2 q\eta_1 }{2}+ \frac{L_{\max} \gamma^2}{2} \right ) \sum_{u \in K(t)}  \left ( 6 \eta_1 L^2 \gamma^2 \sum_{u' \in D(u)} \cdot \right .
\\ & \nonumber
\\ \nonumber &
 \left .  \mathbb{E}  \left \|       \widehat{v}^{\psi(u')}_{u'} \right \|^2  + \frac{12 \eta_2 L^2 \gamma^2}{l} \sum_{u'=1}^{\xi(u,\ell)-1}  \sum_{\widetilde{u} \in D(\xi^{-1}(u',\ell))} \cdot    \left ( 1 -\frac{1}{l} \right )^{\xi(u,\ell)-u'-1} \mathbb{E}  \left \|       \widehat{v}^{\psi(\widetilde{u})}_{\widetilde{u}} \right \|^2
  +  2 \mathbb{E} \left \|  v_{u}^{\psi({u})}  \right \|^2 \right )
\\ \nonumber  \stackrel{ (c) }{\leq} &   -  \frac{\gamma}{4} \| \nabla f(w_t) \|^2 +  \frac{\eta_2 L^2 \gamma^3}{2}  \eta_1 q \tau G
 + \left ( \gamma L^2 q\eta_1^2 + L_{\max} \right )3 \eta_1 L^2 \gamma^4 \eta_1 q \tau G
\\ \nonumber &  + \left ( \gamma L^2 q\eta_1^2 + L_{\max} \right ) 6 \eta_2 L^2   \eta_1 q \tau G \gamma^4
+ \left ( \gamma L^2 q\eta_1^2 + L_{\max} \right ) \gamma^2 \sum_{u \in K(t)}     \mathbb{E} \left \|  v_{u}^{\psi({u})}  \right \|^2
 \\ \nonumber  = &   -  \frac{\gamma}{4} \| \nabla f(w_t) \|^2  + \left ( \gamma L^2 q\eta_1^2 + L_{\max} \right ) \gamma^2 \sum_{u \in K(t)}     \mathbb{E} \left \|  v_{u}^{\psi({u})}  \right \|^2
\\ \nonumber &
 +  \left ( \frac{\eta_2  }{2}   + 3   \gamma\left ( \gamma  q\eta_1^2 + L_{\max} \right ) (\eta_1+2\eta_2)  \right ) \gamma^3 L^2 \eta_1 q \tau G
 \\ \nonumber  \stackrel{ (d) }{\leq} &  -  \frac{\gamma}{4} \| \nabla f(w_t) \|^2  + \left ( \gamma L^2 q\eta_1^2 + L_{\max} \right ) \gamma^2 \cdot
\\ \nonumber &
  \sum_{u \in K(t)}   \left ( 4\frac{L^2 \eta_1}{l} \sum_{k'=1}^{\upsilon(u)} \left ( 1 -\frac{1}{l} \right )^{\upsilon(u)-k'} \sigma(w_{\overline{u}_{k'}}) \right .
\\   \nonumber & \left . + 2 L^2 \left ( 1 -\frac{1}{l} \right )^{\upsilon(u)}  \sigma(w_0)
 + 4 L^2 \sigma(w_{t})  + 8 L^2 \gamma^2 \eta_1^2 q G \right )
\\ \nonumber & +  \left ( \frac{\eta_2  }{2}   + 3   \left ( \gamma  q\eta_1^2 + L_{\max} \right ) (\eta_1+2\eta_2)  \right ) \gamma^4 L^2 \eta_1 q \tau G
 \\ \nonumber \stackrel{ (e) }{\leq} &   -  \frac{\gamma \mu}{4} e(w_t)-  \frac{\gamma \mu^2}{4} \sigma(w_t)
\\ \nonumber &  + \left ( \gamma L^2 q\eta_1^2 + L_{\max} \right ) \gamma^2 \eta_1 q \cdot
\left ( 2 L^2  \left ( 1 -\frac{1}{l} \right )^{\upsilon(t)}  \sigma (w_0)
 +4 L^2 \sigma(w_{t}) \right )
\\ \nonumber & +  \left ( \left ( \frac{\eta_2  }{2}   + 3   \left ( \gamma  q\eta_1^2 + L_{\max} \right ) (\eta_1+2\eta_2)  \right ) \tau  + \left ( \gamma L^2 q\eta_1^2 + 8 L_{\max} \right )  \eta_1 q   \eta_1  \right ) \gamma^4 L^2 \eta_1 q  G
\\ \nonumber &
 + 4\left ( \gamma L^2 q\eta_1^2 + L_{\max} \right )    \frac{L^2 \eta_1^2 q }{l } \gamma^2 \cdot
 \sum_{k'=1}^{\upsilon(t)} \left ( 1 -\frac{1}{l} \right )^{\upsilon(t)-k'} \sigma(w_{\overline{u}_{k'}})
  \end{align}
where the  inequalities (a)  use (\ref{EqThm2_2}), the equality (b) uses Lemma \ref{lemma3}, the inequality (c) uses Assumption \ref{assumption1}, the inequality (d) uses Lemma \ref{The3lemma2}, the inequality (e) uses Assumption \ref{assumption4}.

According to (\ref{EqThm4_1}), we have that
\begin{align}\label{EqThm4_2}
  e (w_{t+|K(t)|})
 \leq & \left ( 1 -  \frac{\gamma \mu}{4} \right ) e(w_t) + c_1  \left (   \left ( 1 -\frac{1}{l} \right )^{\upsilon(t)}  \sigma(w_0)
 + 2 \sigma(w_{t}) \right )
\\ \nonumber & +  c_0
 + c_2 \sum_{k'=1}^{\upsilon(t)} \left ( 1 -\frac{1}{l} \right )^{\upsilon(t)-k'} \sigma(w_{\overline{u}_{k'}})  -  \frac{\gamma \mu^2}{4} \sigma(w_t)
   \\ \nonumber  = & \left ( 1 -  \frac{\gamma \mu}{4} \right ) e(w_t)  +\left (  -  \frac{\gamma \mu^2}{4} +2 c_1+c_2 \right )  \sigma(w_t)
\\ \nonumber &    + c_1    \left ( 1 -\frac{1}{l} \right )^{\upsilon(t)}  \sigma(w_0)
 + c_2  \sum_{k'=1}^{\upsilon(t)-1} \left ( 1 -\frac{1}{l} \right )^{\upsilon(t)-k'} \sigma(w_{\overline{u}_{k'}}) +  c_0
  \end{align}
where ${c_0} =  ( ( \frac{\eta_2  }{2}   + 3   ( \gamma  q\eta_1^2 + L_{\max}  ) (\eta_1+2\eta_2) ) \tau +  ( \gamma L^2 q\eta_1^2 + 8 L_{\max}  )  \eta_1 q   \eta_1  ) \gamma^4 L^2 \eta_1 q  G$,  $c_1 = {\left ( \gamma L^2 q\eta_1^2 + L_{\max} \right ) \gamma^2 \eta_1 q 2 L^2 }$,  ${c_2}  = {4\left ( \gamma L^2 q\eta_1^2 + L_{\max} \right )    \frac{L^2 \eta_1^2 q }{l } \gamma^2}$, $\{ \overline{u}_0,\overline{u}_1, \ldots,\overline{u}_{\upsilon(u)-1}\}$ are the  all start time counters for the global time counters from 0 to $u$.

We define the Lyapunov function as $\mathcal{L}_t=\sum_{k=0}^{\upsilon(t)} \rho^{\upsilon(t)-k} e(w_{\overline{u}_{k}})$ where $\rho \in (1 -\frac{1}{l},1)$, we have that
 \begin{align}\label{EqThm4_3}
 & \mathcal{L}_{t+|K(t)|}
\\   = & \nonumber \rho^{\upsilon(t)+1} e(w_0) +  \sum_{k=0}^{\upsilon(t)} \rho^{\upsilon(t)-k} e(w_{\overline{u}_{k+1}})
\\   \stackrel{ (a) }{\leq} & \nonumber \rho^{\upsilon(t)+1} e(w_0) +   \sum_{k=0}^{\upsilon(t)} \rho^{\upsilon(t)-k} \left [  \left ( 1 -  \frac{\gamma \mu}{4} \right ) e(w_{\overline{u}_{k}})  \right .
\\ \nonumber &
\left .  +\left (  -  \frac{\gamma \mu^2}{4} +2 c_1+c_2 \right )  \sigma(w_{\overline{u}_{k}})  + c_1    \left ( 1 -\frac{1}{l} \right )^{k}  \sigma(w_0) \right .
\\ \nonumber & \left. + c_2  \sum_{k'=1}^{k-1} \left ( 1 -\frac{1}{l} \right )^{k-k'} \sigma(w_{\overline{u}_{k'}}) +  c_0 \right ]
\\ \label{EqThm4_4}   = &  \rho^{\upsilon(t)+1} e(w_0) +  \left ( 1 -  \frac{\gamma \mu}{4} \right )\mathcal{L}_{t} +  \sum_{k=0}^{\upsilon(t)} \rho^{\upsilon(t)-k}
\\ \nonumber &
\left [  \left (  -  \frac{\gamma \mu^2}{4} +2 c_1+c_2 \right )  \sigma(w_{\overline{u}_{k}})   + c_1    \left ( 1 -\frac{1}{l} \right )^{k}  \sigma(w_0) \right .
\\ \nonumber & \left . + c_2  \sum_{k'=1}^{k-1} \left ( 1 -\frac{1}{l} \right )^{k-k'} \sigma(w_{\overline{u}_{k'}})    \right ]+\sum_{k=0}^{\upsilon(t)} \rho^{\upsilon(t)-k}  c_0
\\   \stackrel{ (b) }{\leq} & \nonumber \rho^{\upsilon(t)+1} e(w_0) +  \left ( 1 -  \frac{\gamma \mu}{4} \right )\mathcal{L}_{t}
+    \left (  -  \frac{\gamma \mu^2}{4} +2 c_1+c_2 \right )  \sigma(w_{\overline{u}_{\upsilon(t)}}) +  \frac{c_0}{1-\rho}
\\   \stackrel{ (c) }{\leq} & \nonumber \rho^{\upsilon(t)+1} e(w_0) +  \left ( 1 -  \frac{\gamma \mu}{4} \right )\mathcal{L}_{t}
 -    \left ( \frac{\gamma \mu^2}{4} -2 c_1-c_2 \right )\frac{2}{L}  e(w_{\overline{u}_{\upsilon(t)}}) +  \frac{c_0}{1-\rho}
 \end{align}
where the  inequality (a) uses (\ref{EqThm4_2}), the  inequality (b) holds by appropriately choosing $\gamma$ such that the terms related to  $\sigma(w_{\overline{u}_{k}})$ ($k=0,\cdots,\upsilon(t)-1$) are negative,  because the signs related to  the lowest  orders of $\sigma(w_{\overline{u}_{k}})$ ($k=0,\cdots,\upsilon(t)-1$)  are negative. In the following, we give the detailed analysis of choosing $\gamma$ such that the terms related to $\sigma(w_{\overline{u}_{k}})$ ($k=0,\cdots,\upsilon(t)-1$)  are negative. We first consider $k=0$. Assume that $ C(\sigma(w_{0}))$ is the coefficient term of $\sigma(w_{0}))$ in  (\ref{EqThm4_4}), we have that
 \begin{align}\label{AsySGHT_theorem2_2.1.1}
& C(\sigma(w_{0}))
\\   = & \nonumber   \rho^{\upsilon(t)}\left ( -  \frac{\gamma \mu^2}{4} +2 c_1+c_2 \right )+  c_1   \sum_{k=0}^{\upsilon(t)} \rho^{\upsilon(t)-k}  \left ( 1 -\frac{1}{l} \right )^{k}
\\   = & \nonumber   \rho^{\upsilon(t)}\left ( -  \frac{\gamma \mu^2}{4} +2 c_1+c_2+  c_1   \sum_{k=0}^{\upsilon(t)} \left (\frac{ 1 -\frac{1}{l}}{\rho} \right )^{k}  \right )
\\  \leq & \nonumber  \rho^{\upsilon(t)}\left ( -  \frac{\gamma \mu^2}{4} +2 c_1+c_2+  c_1   \frac{1}{1- \frac{ 1 -\frac{1}{l}}{\rho}} \right )
\\   = & \nonumber  \rho^{\upsilon(t)}\left ( -  \frac{\gamma \mu^2}{4} +c_2+  c_1 \left ( 2+ \frac{1}{1- \frac{ 1 -\frac{1}{l}}{\rho}} \right ) \right )
 \end{align}
Based on (\ref{AsySGHT_theorem2_2.1.1}), we can carefully choose $\gamma$ such that $ -  \frac{\gamma \mu^2}{4} +c_2+  c_1 \left ( 2+ \frac{1}{1- \frac{ 1 -\frac{1}{l}}{\rho}} \right )    \leq 0$.

 Assume that $C(\sigma(w_{\overline{u}_{k}}))$ is the coefficient term of $\sigma(w_{\overline{u}_{k}})$ ($k=1,\cdots,\upsilon(t)-1$) in the big square brackets of (\ref{EqThm4_4}), we have that
  \begin{align}\label{AsySGHT_theorem2_2.1.2}
 & C(\sigma(w_{\overline{u}_{k}}))
\\   = & \nonumber \rho^{\upsilon(t)-k}  \left (  -  \frac{\gamma \mu^2}{4} +2 c_1+c_2 \right ) + c_2  \sum_{v=k+1}^{\upsilon(t)-1} \left ( 1 -\frac{1}{l} \right )^{v-k} \rho^{\upsilon(t)-v}
\\   = & \nonumber \rho^{\upsilon(t)-k} \left ( -  \frac{\gamma \mu^2}{4} +2 c_1+c_2  + c_2  \sum_{v=k+1}^{\upsilon(t)-1} \left ( 1 -\frac{1}{l} \right )^{v-k} \rho^{k-v} \right )
\\   = & \nonumber \rho^{\upsilon(t)-k} \left ( -  \frac{\gamma \mu^2}{4} +2 c_1+c_2  + c_2  \sum_{v=k+1}^{\upsilon(t)-1} \left ( \frac{1 -\frac{1}{l}}{\rho}\right )^{v-k}  \right )
\\   \leq & \nonumber \rho^{\upsilon(t)-k} \left ( -  \frac{\gamma \mu^2}{4} +2 c_1+c_2 \left ( 1+   \frac{1}{1- \frac{ 1 -\frac{1}{l}}{\rho}}  \right ) \right )
 \end{align}
Based on (\ref{AsySGHT_theorem2_2.1.2}), we can carefully choose $\gamma$ such that $-  \frac{\gamma \mu^2}{4} +2 c_1+c_2 \left ( 1+   \frac{1}{1- \frac{ 1 -\frac{1}{l}}{\rho}}  \right ) \leq 0$.

 Thus, based on (\ref{EqThm4_3}), we have that
\begin{align}\label{EqThm4_7}
& \left ( \frac{\gamma \mu^2}{4} -2 c_1-c_2 \right )\frac{2}{L}  e(w_{\overline{u}_{k}})
 \\   \leq & \nonumber  \left ( \frac{\gamma \mu^2}{4} -2 c_1-c_2 \right )\frac{2}{L}  e(w_{\overline{u}_{k}}) + \mathcal{L}_{t+|K(t)|}
 \\   \stackrel{ (a) }{\leq} & \nonumber \rho^{\upsilon(t)+1} e(w_0) +  \left ( 1 -  \frac{\gamma \mu}{4} \right )\mathcal{L}_{t}  +  \frac{c_0}{1-\rho}
\\   \stackrel{ (b) }{\leq} & \nonumber \left ( 1 -  \frac{\gamma \mu}{4} \right )^{\upsilon(t)+1} \mathcal{L}_{0} +  \rho^{\upsilon(t)+1}e(w_0)  \sum_{k=0}^{\upsilon(t)+1} \left (\frac{1 -  \frac{\gamma \mu}{4}}{\rho} \right )^k
\\ \nonumber & + \frac{c_0}{1-\rho}\sum_{k=0}^{\upsilon(t)} \left ( 1 -  \frac{\gamma \mu}{4} \right )^{k}
\\   \leq & \nonumber \left ( 1 -  \frac{\gamma \mu}{4} \right )^{\upsilon(t)+1} e(w_0)   +  \rho^{\upsilon(t)+1}e(w_0)  \frac{1}{1-\frac{1 -  \frac{\gamma \mu}{4}}{\rho} } + \frac{c_0}{1-\rho} \frac{4}{\gamma \mu }
\\   \stackrel{ (c) }{\leq} & \nonumber  \frac{{2\rho- 1 +  \frac{\gamma \mu}{4} }}{\rho- 1 +  \frac{\gamma \mu}{4} } \rho^{\upsilon(t)+1}e(w_0)    + \frac{c_0}{1-\rho} \frac{4}{\gamma \mu }
\end{align}
where the inequality (a) follows from  (\ref{EqThm4_3}), the inequality (b) holds by using the inequality (\ref{EqThm4_3}) recursively, the inequality (c) uses the fact that $ 1 -  \frac{\gamma \mu}{4} < \rho$.

According to (\ref{EqThm4_7}), we have that
\begin{align}\label{EqThm4_8}
 \nonumber  e(w_{\overline{u}_{k}}) \leq & \frac{{2\rho- 1 +  \frac{\gamma \mu}{4} }}{(\rho- 1 +  \frac{\gamma \mu}{4} )\left ( \frac{\gamma \mu^2}{4} -2 c_1-c_2 \right )} \rho^{\upsilon(t)+1}e(w_0)
\\  & + \frac{4 c_0}{ \gamma \mu (1-\rho) \left ( \frac{\gamma \mu^2}{4} -2 c_1-c_2 \right )}
\end{align}

Thus, to achieve the accuracy $\epsilon$ of (\ref{formulation1}) for AFSAGA-VP, \emph{i.e.}, $\mathbb{E} f (w_{\overline{u}_{k}}) -f(w^*) \leq \epsilon$,  we can carefully choose $\gamma$ such that
\begin{eqnarray}
\frac{4 c_0}{ \gamma \mu (1-\rho) \left ( \frac{\gamma \mu^2}{4} -2 c_1-c_2 \right )} &\leq& \frac{\epsilon}{2}
\\
0<1 -  \frac{\gamma \mu}{4} & <&1
\\ -  \frac{\gamma \mu^2}{4} +2 c_1+c_2 \left ( 1+   \frac{1}{1- \frac{ 1 -\frac{1}{l}}{\rho}}  \right ) &\leq& 0
\\  -  \frac{\gamma \mu^2}{4} +c_2+  c_1 \left ( 2+ \frac{1}{1- \frac{ 1 -\frac{1}{l}}{\rho}} \right )   &\leq& 0
\end{eqnarray}
and let $\frac{{2\rho- 1 +  \frac{\gamma \mu}{4} }}{(\rho- 1 +  \frac{\gamma \mu}{4} )\left ( \frac{\gamma \mu^2}{4} -2 c_1-c_2 \right )} \rho^{\upsilon(t)+1}e(w_0)  \leq \frac{\epsilon}{2}$, we have that
\begin{eqnarray}\label{EqThm2_5}
\upsilon(t)  \geq \frac{\log \frac{2 \left ({{2\rho- 1 +  \frac{\gamma \mu}{4} }} \right ) e(w_0) }{\epsilon {(\rho- 1 +  \frac{\gamma \mu}{4} )\left ( \frac{\gamma \mu^2}{4} -2 c_1-c_2 \right )} }}{\log \frac{1}{\rho}}
\end{eqnarray}
This completes the proof.
\end{proof}

\bibliography{sample1-base}

\end{document}